\theoremstyle{plain}
\newtheorem{theorem}{Theorem}[section]
\theoremstyle{definition}
\newtheorem{definition}[theorem]{Definition}
\theoremstyle{remark}
\definecolor{codegreen}{rgb}{0,0.6,0}
\definecolor{codegray}{rgb}{0.5,0.5,0.5}
\definecolor{codepurple}{rgb}{0.58,0,0.82}
\definecolor{backcolour}{rgb}{0.95,0.95,0.92}
\lstdefinestyle{mystyle}{
    backgroundcolor=\color{backcolour},
    commentstyle=\color{codegreen},
    keywordstyle=\color{magenta},
    numberstyle=\tiny\color{codegray},
    stringstyle=\color{codepurple},
    basicstyle=\ttfamily\footnotesize,
    breakatwhitespace=false,         
    breaklines=true,                 
    captionpos=b,                    
    keepspaces=true,                 
    numbers=left,                    
    numbersep=5pt,                  
    showspaces=false,                
    showstringspaces=false,
    showtabs=false,                  
    tabsize=2
}
\title{Jaccard Metric Losses:\\Optimizing the Jaccard Index with Soft Labels}
\author{%
  Zifu Wang$^1$\thanks{Correspondence to: zifu.wang@kuleuven.be}
  \And
  Xuefei Ning$^2$
  \And
  Matthew B. Blaschko$^1$
}
\begin{document}

\maketitle

\begin{center}
  \vspace{-25pt}
  {$^1$ ESAT-PSI, KU Leuven, Leuven, Belgium \\ $^2$ Department of Electronic Engineering, Tsinghua University, Beijing, China}
\end{center}

\begin{abstract}
Intersection over Union (IoU) losses are surrogates that directly optimize the Jaccard index. Leveraging IoU losses as part of the loss function have demonstrated superior performance in semantic segmentation tasks compared to optimizing pixel-wise losses such as the cross-entropy loss alone. However, we identify a lack of flexibility in these losses to support vital training techniques like label smoothing, knowledge distillation, and semi-supervised learning, mainly due to their inability to process soft labels. To address this, we introduce Jaccard Metric Losses (JMLs), which are identical to the soft Jaccard loss in standard settings with hard labels but are fully compatible with soft labels. We apply JMLs to three prominent use cases of soft labels: label smoothing, knowledge distillation and semi-supervised learning, and demonstrate their potential to enhance model accuracy and calibration. Our experiments show consistent improvements over the cross-entropy loss across 4 semantic segmentation datasets (Cityscapes, PASCAL VOC, ADE20K, DeepGlobe Land) and 13 architectures, including classic CNNs and recent vision transformers. Remarkably, our straightforward approach significantly outperforms state-of-the-art knowledge distillation and semi-supervised learning methods. The code is available at \href{https://github.com/zifuwanggg/JDTLosses}{https://github.com/zifuwanggg/JDTLosses}.
\end{abstract}

\section{Introduction}
The Jaccard index, also known as the intersection over union (IoU), is a widely used metric in the evaluation of semantic segmentation models. Its appeal lies in its scale invariance and its superior ability to reflect the perceptual quality of a model compared to pixel-wise accuracy \cite{OptimizationEelbodeTMI2020,MetricsMaier-HeinarXiv2023}. In line with the principles of empirical risk minimization in statistical learning theory, the metric used for evaluation should also be optimized during training \cite{NatureVapnik1995}. Consequently, directly optimizing IoU via differentiable surrogates has found considerable attention in the literature \cite{OptimalNowozinCVPR2014,SoftJaccardRahmanISVC2016,Lovasz-softmaxLossBermanCVPR2018,LovaszHingeYuTPAMI2018,OptimizationEelbodeTMI2020,PixIoUYuICML2021}.

Notably, IoU is only defined when both predictions and ground-truth labels are discrete binary values, i.e., they reside on the vertices of a $p$-dimensional hypercube $\{0,1\}^p$. However, the output of neural networks are soft probabilities that are in the interior of the hypercube $[0,1]^p$. To extend the value of IoU from vertices to the entire hypercube, two popular strategies are currently in use. The first strategy relaxes set counting with norm functions, exemplified by the soft Jaccard loss (SJL) \cite{OptimalNowozinCVPR2014,SoftJaccardRahmanISVC2016} and the soft Dice loss (SDL) \cite{SoftDiceLossSudreMICCAIWorkshop2017,OptimizationEelbodeTMI2020}. The second strategy computes the Lovasz extension of the IoU, such as the Lovasz hinge loss \cite{LovaszHingeYuTPAMI2018} and the Lovasz-Softmax loss (LSL) \cite{Lovasz-softmaxLossBermanCVPR2018}. These losses facilitate a plug-and-play use and have significantly enhanced the performance of segmentation models, outperforming pixel-wise losses such as the cross-entropy loss (CE) and the focal loss \cite{FocalLossLinTPAMI2018}. For instance, Rakhlin et al. \cite{LandRakhlinCVPRWorkshop2018} won the land cover segmentation task of the DeepGlobe Challenge \cite{DeepGlobeDemirCVPRWorkshop2018} utilizing LSL. Additionally, SDL is becoming increasingly prominent in recent segmentation works \cite{DETRCarionECCV2020,MaskFormerChengNeurIPS2021,Mask2FormerChengCVPR2022,SAMKirillovICCV2023}, such as Segment Anything \cite{SAMKirillovICCV2023}. Moreover, SJL and SDL are now the standard for training medical imaging models \cite{OptimizationEelbodeTMI2020,nnU-NetIsenseeNatureMethods2021}.

Despite their widespread application, current IoU losses lack the flexibility needed to accommodate key training techniques. Specifically, while these losses relax predictions from $\{0,1\}^p$ to $[0,1]^p$, they neglect the possibility of labels residing in $[0,1]^p$, making them incompatible with soft labels. Soft labels have been employed in numerous state-of-the-art models and have proven effective in enhancing network accuracy \cite{ConvNeXtLiuCVPR2022,ConvNeXtV2WooCVPR2023,InternImageWangCVPR2023} and calibration \cite{CalibrationGuoICML2017,WhenMullerNeurIPS2019,Self-DistillationMobahiNeurIPS2020,RethinkingZhouICLR2021}. For example, label smoothing (LS) \cite{InceptionV2V3SzegedyCVPR2016} generates soft labels by taking a weighted average of one-hot hard labels and a uniform distribution over labels. In knowledge distillation (KD) \cite{KDHintonNeurIPSWorkshop2015}, class probabilities generated by a teacher network serve as soft labels to guide a student model. In semi-supervised learning (SSL) such as MixMatch \cite{MixMatchBerthelotNeurIPS2019}, $k$ views of an unlabeled image are fed to a classifier, and the predictions are averaged to create soft labels.

Motivated by the limitation of existing IoU losses, particularly their inability to accommodate soft labels and, consequently, techniques like LS, KD, and SSL, we propose two variants of SJL, termed Jaccard metric losses (JMLs) since they are metrics on $[0,1]^p$. JMLs yield the same value as SJL on hard labels but are fully compatible with soft labels. Therefore, we can safely replace the existing implementation of SJL with JMLs without affecting the performance on hard labels. To embed our losses with use cases (LS, KD, SSL), we first introduce boundary label smoothing (BLS) which facilitates the integration of label smoothing into the segmentation task. BLS can be used independently and also synergizes with KD and SSL. Subsequently, we present a confidence-based scheme for selecting classes to contribute to the loss computation, thereby enhancing performance in KD. We conduct extensive experiments on 4 datasets (Cityscapes \cite{CityscapesCordtsCVPR2016}, PASCAL VOC \cite{PASCALVOCEveringhamIJCV2009}, ADE20K \cite{ADE20KZhouCVPR2017}, DeepGlobe Land \cite{DeepGlobeDemirCVPRWorkshop2018}) across 13 architectures, including classic CNNs \cite{ResNetHeCVPR2016,MobileNetV2SandlerCVPR2018} and recent vision transformers \cite{SegFormerXieNeurIPS2021}. Our results demonstrate significant improvements over the cross-entropy loss. Moreover, our straightforward approach outperforms state-of-the-art segmentation KD and SSL methods by a substantial margin.

\section{Methods}
\subsection{Preliminaries}
Given a segmentation output $\dot x \in \{1,...,C\}^p$ and a ground-truth $\dot y \in \{1,...,C\}^p$ where $C$ is the number of classes, for each class $c$, we define the set of predictions as $x^c=\{\dot x=c\}$, the set of ground-truth as $y^c=\{\dot y=c\}$, the union as $u^c=x^c\cup y^c$, the intersection as $v^c=x^c\cap y^c$, the symmetric difference (the set of mispredictions) as $m^c=(x^c \setminus y^c)\cup (y^c\setminus x^c)$, and the Jaccard index as $\text{IoU}^c = |v^c|/|u^c|$. For multi-class segmentation, $\text{IoU}^c$ are averaged across classes, yielding the mean IoU (mIoU). In the sequel, we will encode sets as binary vectors $x^c,y^c,u^c,v^c,m^c \in \{0,1\}^p$ where $p$ is the number of pixels, and denote $|x^c|=\sum_{i=1}^p x^c_i$ the cardinality of the corresponding set. For simplicity, we will drop the superscript $c$ in the following.

In order to optimize $\text{IoU}$ in a continuous setting, we need (almost everywhere) differentiable interpolations of this discrete score. In particular, we want to extend the IoU loss
\begin{align}
    \Delta_{\text{IoU}}: x\in\{0,1\}^p, y\in\{0,1\}^p \mapsto 1 - \frac{|v|}{|u|} = \frac{|m|}{|y\cup m|}
\end{align}
with $\overline{\Delta}_{\text{IoU}}$ so that it attains a value with any vector of predictions $\tilde{x}\in[0,1]^p$. In what follows, when the context is clear, we will use $x$ and $\tilde{x}$ interchangeably. 

The soft Jaccard loss (SJL) \cite{OptimalNowozinCVPR2014,SoftJaccardRahmanISVC2016} generalizes IoU by realizing that when $x,y \in \{0,1\}^p$, $|v|=\langle x, y\rangle$ and $|u|=|x|+|y|-|v|=\|x\|_1+\|y\|_1-\langle x, y\rangle$. Therefore, SJL replaces the set notation with vector functions:
\begin{align}
   \overline{\Delta}_{\text{SJL},L^1}: x\in[0,1]^p, y\in\{0,1\}^p \mapsto 1 - \frac{\langle x, y\rangle}{\|x\|_1+\|y\|_1-\langle x, y\rangle}.
\end{align}
The $L^1$ norm can be replaced with the squared $L^2$ norm \cite{OptimizationEelbodeTMI2020}:
\begin{align}
   \overline{\Delta}_{\text{SJL},L^2}: x\in[0,1]^p, y\in\{0,1\}^p \mapsto  1 - \frac{\langle x, y\rangle}{\|x\|_2^2+\|y\|_2^2-\langle x, y\rangle}.
\end{align}

\subsection{The Limitation of Existing IoU Losses} \label{sec:limitations}
The primary shortcoming of current IoU losses is that they do not necessarily have desired properties when presented with soft labels, i.e., when $y \in [0,1]^p$. This limitation impedes their application in crucial training techniques like LS, KD, and SSL.

Consider the case of $\overline{\Delta}_{\text{SJL},L^1}$, and for simplicity, a single pixel scenario: $\overline{\Delta}_{\text{SJL},L^1} = 1-\frac{xy}{x+y-xy}$. It is easy to confirm that for any $y>0$, $\overline{\Delta}_{\text{SJL},L^1}$ is minimized at $x=1$ since it monotonically decreases as a function of $x$. Hence, $\overline{\Delta}_{\text{SJL},L^1}$ is in general not minimized when $x=y$, a basic property anticipated from a loss function. Further analysis for high-dimensional cases and additional experiments on real datasets are provided in Appendix \ref{app:sjl_l1} and \ref{app:iou_losses}, respectively.

$\overline{\Delta}_{\text{SJL},L^2}$ does not exhibit this issue, since $\overline{\Delta}_{\text{SJL},L^2} = 0 \Leftrightarrow |x|^2_2+|y|^2_2-2\langle x, y\rangle=0 \Leftrightarrow x=y$. However, it is known to yield inferior results compared to its $L^1$ counterpart, possibly due to its flatter nature around the minimum \cite{OptimizationEelbodeTMI2020}. In practice, it is rarely used. For instance, in SMP \cite{SMPIakubovskii2019}, a popular open-source semantic segmentation project, only the $L^1$ version is implemented. Our evaluations in Appendix \ref{app:iou_losses} confirm its inferior performance relative to the $L^1$ version. Additionally, the approach of substituting set notation with the $L^1$ norm is widely utilized in numerous other works, including the soft Dice loss \cite{SoftDiceLossSudreMICCAIWorkshop2017,OptimizationEelbodeTMI2020}, the soft Tversky loss \cite{SoftTverskyLossSalehiMICCAIWorkshop2017}, the focal Tversky loss \cite{FocalTverskyLossAbrahamISBI2019}, and others. The soft Dice loss is also included in the formulation of the PQ loss \cite{MaX-DeepLabWangCVPR2021} which is used in panoptic segmentation \cite{PanopticSegmentationKirillovCVPR2019}. Consequently, all of them struggle with soft labels.

Losses based on the Lovasz extension, such as the Lovasz-Softmax loss \cite{Lovasz-softmaxLossBermanCVPR2018}, the Lovasz hinge loss \cite{LovaszHingeYuTPAMI2018}, and the PixIoU loss \cite{PixIoUYuICML2021}, cannot handle soft labels as the Lovasz extension is not well-defined for $y\in(0,1)^p$. More details and comparisons with the Lovasz-Softmax loss can be found in Appendix \ref{app:lsl} and \ref{app:iou_losses}, respectively. Automatically searched loss functions, such as Auto Seg-Loss \cite{AutoSegLossLiICLR2021} and AutoLoss-Zero \cite{AutoLoss-ZeroLiCVPR2022}, also fail to accommodate soft labels as their search space is confined to integral labels.

In summary, despite their widespread adoption in recent works on semantic segmentation \cite{OptimizationEelbodeTMI2020,nnU-NetIsenseeNatureMethods2021,MaskFormerChengNeurIPS2021,Mask2FormerChengCVPR2022,SAMKirillovICCV2023} and panoptic segmentation \cite{DETRCarionECCV2020,MaX-DeepLabWangCVPR2021,CMT-DeepLabYuCVPR2022,kMaX-DeepLabYuECCV2022}, these losses all exhibit a common shortcoming: an inability to handle soft labels. In this paper, we specifically concentrate on re-designing SJL. Other losses, including the soft Dice loss, the soft Tversky loss, and the focal Tversky loss, are addressed in our subsequent work \cite{DMLWangMICCAI2023}.

\subsection{Jaccard Metric Losses} \label{sec:jml}
We can rewrite the intersection $|v|$ and the union $|u|$ as a function of the symmetric difference $|m|$:
\begin{align}
|v| = \frac{1}{2}(|x| + |y| - |m|) \text{ and } |u| = |v| + |m|.
\end{align}
Note that $|m|=\|x-y\|_1$. Combining these yields:
\begin{align}
    |v| &= \langle x, y\rangle = \frac{1}{2}(\|x\|_1+\|y\|_1 - \|x-y\|_1),  \\
    |u| &= \langle x, y\rangle + \|x-y\|_1 = \frac{1}{2}(\|x\|_1+\|y\|_1 + \|x-y\|_1),
\end{align}
where the equalities hold when $x,y \in \{0,1\}^p$.

After eliminating erroneous combinations that have the same issue as SJL, we are left with two candidates $\overline{\Delta}_{\text{JML1}},\overline{\Delta}_{\text{JML2}}: [0,1]^p\times[0,1]^p \rightarrow [0,1]$ that are defined as:
\begin{equation}
\overline{\Delta}_{\text{JML1}} = 1 - \frac{\|x\|_1+\|y\|_1-\|x-y\|_1}{\|x\|_1+\|y\|_1+\|x-y\|_1}, \quad \quad \overline{\Delta}_{\text{JML2}} = 1 - \frac{\langle x, y\rangle}{\langle x, y\rangle+\|x-y\|_1}.
\end{equation}
    
It is a well-known result that $\Delta_{\text{IoU}}$ is a metric on $\{0,1\}^p$ \cite{ANoteKosubPRL2019}. In Theorem \ref{thm:metric} (see Appendix \ref{app:metric} for the proof), we show that both $\overline{\Delta}_{\text{JML1}}$ and $\overline{\Delta}_{\text{JML2}}$ are also metrics on $[0,1]^p$. Therefore, we call them Jaccard Metric Losses (JMLs).
\begin{theorem} \label{thm:metric}
Both $\overline{\Delta}_{\text{JML1}}$ and $\overline{\Delta}_{\text{JML2}}$ are metrics on $[0,1]^p$. Neither $\overline{\Delta}_{\text{SJL},L^1}$ nor $\overline{\Delta}_{\text{SJL},L^2}$ is a metric on $[0,1]^p$. 
\end{theorem}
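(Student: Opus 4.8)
The plan is to verify the four metric axioms for $\overline{\Delta}_{\text{JML1}}$ and $\overline{\Delta}_{\text{JML2}}$, and to exhibit explicit failures for the two soft Jaccard losses. Non-negativity (each is a ratio of non-negative quantities lying in $[0,1]$) and symmetry are immediate, and the only place either denominator vanishes is $x=y=0$, where we set the distance to $0$. For the identity of indiscernibles, the numerators of both losses equal $\|x-y\|_1$: for $\overline{\Delta}_{\text{JML1}}$ one rewrites the fraction as $1-\frac{\sum_i\min(x_i,y_i)}{\sum_i\max(x_i,y_i)}=\frac{\|x-y\|_1}{\sum_i\max(x_i,y_i)}$, while $\overline{\Delta}_{\text{JML2}}=\frac{\|x-y\|_1}{\langle x,y\rangle+\|x-y\|_1}$ by definition; in both cases the value is $0$ iff $\|x-y\|_1=0$ iff $x=y$. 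This isolates the triangle inequality as the crux.

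I would prove both triangle inequalities at once through a single lemma. Write each loss as $d(x,y)=\frac{M(x,y)}{M(x,y)+I(x,y)}$ with $M(x,y)=\|x-y\|_1$ and a symmetric ``intersection'' $I\ge 0$, where $I(x,y)=\sum_i\min(x_i,y_i)$ for $\overline{\Delta}_{\text{JML1}}$ (since $\sum_i\max=\sum_i\min+\|x-y\|_1$) and $I(x,y)=\langle x,y\rangle$ for $\overline{\Delta}_{\text{JML2}}$. The lemma states: if $M$ is a metric and $I\ge0$ is symmetric with the Lipschitz-type property $|I(x,z)-I(x,y)|\le M(y,z)$ (and its index permutations), then $d$ is a metric. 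Both choices of $I$ satisfy this: for the minimum, $|\min(x_i,z_i)-\min(x_i,y_i)|\le|z_i-y_i|$ summed over $i$; for the inner product, $|\langle x,z\rangle-\langle x,y\rangle|=|\langle x,z-y\rangle|\le\|x\|_\infty\|z-y\|_1\le\|z-y\|_1$, where the bound $\|x\|_\infty\le1$ is exactly where the domain $[0,1]^p$ (rather than $\mathbb{R}_{\ge0}^p$) is used.

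Setting $a=I(x,y),b=I(y,z),c=I(x,z)$ and $p=M(x,y),q=M(y,z),r=M(x,z)$, the triangle inequality $d(x,z)\le d(x,y)+d(y,z)$ rearranges, after clearing the $1-d$ similarities, to
\[
\frac{ab-pq}{(a+p)(b+q)}\le\frac{c}{c+r}.
\]
When $ab\le pq$ the left side is non-positive and there is nothing to prove. When $ab>pq$, I would bound the right side below using $r\le p+q$ (triangle inequality for $M$) together with the Lipschitz bounds $c\ge a-q$ and $c\ge b-p$, then split on whether $a+p\ge b+q$ or not: in the first case substituting $c\ge a-q$ reduces the claim to $b+q\le a+p$, and symmetrically in the second case. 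A short sign analysis shows that $ab>pq$ forces the relevant lower bound to be non-negative, so the degenerate cases ($p=0$ or $q=0$, i.e.\ $x=y$ or $y=z$) collapse to equalities. This elementary but case-laden inequality is the main obstacle; everything else is bookkeeping. As an independent check on $\overline{\Delta}_{\text{JML1}}$, one may instead note that it equals the Steinhaus transform $\frac{2\|x-y\|_1}{\|x\|_1+\|y\|_1+\|x-y\|_1}$ of the $\ell^1$ metric about the origin (equivalently the Soergel/weighted-Jaccard distance), which is classically a metric.

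Finally, for the negative claims I would exhibit explicit failures on $[0,1]^1$. For $\overline{\Delta}_{\text{SJL},L^1}$ the identity of indiscernibles already fails: at $x=y=\tfrac12$ one gets $1-\frac{1/4}{1-1/4}=\tfrac23\neq0$, so it is not even a semimetric. For $\overline{\Delta}_{\text{SJL},L^2}$ the self-distance does vanish, but the triangle inequality fails; since in one dimension this loss is invariant under joint scaling and depends only on the ratio $y/x$ through $f(s)=\frac{(s-1)^2}{s^2-s+1}$, the points $x=\tfrac14,y=\tfrac12,z=1$ give $f(2)=\tfrac13$ for the two short legs and $f(4)=\tfrac{9}{13}>\tfrac23$ for the long one, violating $d(x,z)\le d(x,y)+d(y,z)$. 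These witnesses, padded with zeros, refute the metric property in every dimension $p\ge1$.
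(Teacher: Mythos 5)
Your proof is correct, and it is organized differently from the paper's. The paper treats the two positive claims separately: for $\overline{\Delta}_{\text{JML1}}$ it invokes the Steinhaus transform of the $L^1$ metric about the origin (citing Sp\"ath's result that $d'(x,y)=d(x,y)/(d(x,a)+d(y,a)+d(x,y))$ is a metric whenever $d$ is), and for $\overline{\Delta}_{\text{JML2}}$ it gives a direct computation. You instead prove both at once via a single lemma for $d=M/(M+I)$ with $M=\|x-y\|_1$ and an ``intersection'' $I$ that is $1$-Lipschitz in $M$; this is a genuine gain in uniformity and makes the proof self-contained (you relegate the Steinhaus connection to a sanity check), and it also isolates exactly where the domain $[0,1]^p$ is used, namely $\|x\|_\infty\le 1$ in the bound $|\langle x,z-y\rangle|\le\|z-y\|_1$. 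The trade-off is in the execution of the triangle inequality: your rearrangement to $\frac{ab-pq}{(a+p)(b+q)}\le\frac{c}{c+r}$ requires a WLOG plus sign analysis (which does close up: $ab>pq$ together with $a+p\ge b+q$ forces $a\ge q$, so the substitution $c\ge a-q$ is legitimate and the claim reduces to $q(a+p-b-q)\ge 0$), whereas the paper's route for $\overline{\Delta}_{\text{JML2}}$ uses the very same Lipschitz inequalities $\langle a,b\rangle\le\|b-c\|_1+\langle a,c\rangle$ and $\langle b,c\rangle\le\|a-b\|_1+\langle a,c\rangle$ but applies them to lower-bound each summand over a common denominator and finishes with the monotonicity of $t\mapsto t/(t+r)$, avoiding cases entirely; that cleaner chain would in fact also work verbatim with $I(x,y)=\sum_i\min(x_i,y_i)$, so you could simplify your lemma's proof this way. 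Your counterexamples coincide with the paper's: the same self-distance failure $\overline{\Delta}_{\text{SJL},L^1}(\tfrac12,\tfrac12)=\tfrac23$, and your $(\tfrac14,\tfrac12,1)$ is the paper's $(0.2,0.4,0.8)$ up to the scale invariance you note.
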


Recall the definition of a metric:
\begin{definition}[Metric \cite{EncyclopediaDeza2009}]\label{def:Metric}
A mapping $f: M \times M \rightarrow \mathbb{R}$ is called a metric on $M$ if for all $a,b,c \in M$, it satisfies the following conditions:
\begin{inparaenum}[(i)]
    \item (reflexivity). $f(a, a) = 0$.
    \item (positivity). $a\neq b \implies f(a,b)>0$.
    \item (symmetry). $f(a,b) = f(b,a)$.
    \item (triangle inequality). $f(a,c) \leq f(a,b) + f(b,c)$. 
\end{inparaenum}
\end{definition}

Having a loss function $\overline{\Delta}$ that is a metric carries numerous benefits. Reflexivity and positivity collectively imply that $\forall x,y \in[0,1]^p, x=y \Leftrightarrow \overline{\Delta}=0$, meaning that $\overline{\Delta}$ would be compatible with soft labels. The triangle inequality also provides insightful guidance. Applied to KD, it yields:
\begin{equation} \label{eq:triangle}
    \overline{\Delta}(S,L) \leq  \overline{\Delta}(S,T)  + \overline{\Delta}(T,L).
\end{equation}
Here, $S$ represents the student model, $T$ the teacher model, and $L$ the ground-truth labels. In the KD, we initially train the teacher model using the ground-truth labels, and then minimize the loss between the teacher and the student model. Equation \eqref{eq:triangle} suggests that if we adhere to this process—equivalent to minimizing the right-hand side of the equation—we also minimize the upper bound on the student model's loss with respect to the ground-truth labels.

Furthermore, as Theorem \ref{thm:eqneq} shows (see Appendix \ref{app:eqneq} for the proof), when only hard labels are involved, $\overline{\Delta}_{\text{JML1}},\overline{\Delta}_{\text{JML2}}$, $\overline{\Delta}_{\text{SJL},L^1}$ are identical. Similarly, we can replace the $L^1$ norm in JMLs with the squared $L^2$ norm, and $\overline{\Delta}_{\text{JML1,$L^2$}}, \overline{\Delta}_{\text{JML2,$L^2$}}$, $\overline{\Delta}_{\text{SJL},L^2}$ become the same\footnote{Note that $\overline{\Delta}_{\text{JML1,$L^2$}}$ and $\overline{\Delta}_{\text{JML2,$L^2$}}$ are no longer metrics on $[0,1]^p$, as Theorem \ref{thm:metric} shows.}. Hence, we can safely replace the existing implementation of SJL with JMLs. When soft labels are introduced, $\overline{\Delta}_{\text{JML1}}$ and $\overline{\Delta}_{\text{JML2}}$ might yield different values. We discuss their distinctions in Appendix \ref{app:jml12}. From both theoretical and empirical perspectives, we find that $\overline{\Delta}_{\text{JML1}}$ is slightly more favorable than $\overline{\Delta}_{\text{JML2}}$. As a default in our experiments, we use $\overline{\Delta}_{\text{JML1}}$.
\begin{theorem} \label{thm:eqneq}
$\forall x\in[0,1]^p, y\in \{0,1\}^p$ and $x\in \{0,1\}^p, y\in [0,1]^p$, $\overline{\Delta}_{\text{JML1}}=\overline{\Delta}_{\text{JML2}}=\overline{\Delta}_{\text{SJL},L^1}$. $\forall x,y \in[0,1]^p$, $\overline{\Delta}_{\text{JML1,$L^2$}}=\overline{\Delta}_{\text{JML2,$L^2$}}=\overline{\Delta}_{\text{SJL},L^2}$. $\exists x, y \in [0,1]^p, \overline{\Delta}_{\text{JML1}}\neq\overline{\Delta}_{\text{JML2}} \neq\overline{\Delta}_{\text{SJL},L^1}$.
\end{theorem}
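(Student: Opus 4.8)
The plan is to reduce all three equalities to a single norm identity and then to dispatch the inequality claim with one explicit soft-label witness. I first note that each of $\overline{\Delta}_{\text{JML1}}$, $\overline{\Delta}_{\text{JML2}}$, and $\overline{\Delta}_{\text{SJL},L^1}$ is symmetric under swapping $x$ and $y$, since $\langle x,y\rangle$ and $\|x-y\|_1$ are both symmetric. Hence the case $x\in\{0,1\}^p,\,y\in[0,1]^p$ follows immediately from the case $x\in[0,1]^p,\,y\in\{0,1\}^p$, and I only need to treat the latter.

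The key step is the identity
\begin{equation}
\|x-y\|_1 = \|x\|_1 + \|y\|_1 - 2\langle x,y\rangle, \label{eq:l1polarization}
\end{equation}
which I claim holds whenever at least one of $x,y$ lies in $\{0,1\}^p$. To prove it for $y\in\{0,1\}^p$, $x\in[0,1]^p$, I split the index set into $\{i:y_i=1\}$ and $\{i:y_i=0\}$; on the first block $|x_i-y_i|=1-x_i$ and on the second $|x_i-y_i|=x_i$, because $x_i\in[0,1]$ removes the absolute value in each case. Summing and regrouping against $\langle x,y\rangle=\sum_{i:y_i=1}x_i$ and $\|y\|_1=|\{i:y_i=1\}|$ yields \eqref{eq:l1polarization}. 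This is the exact $L^1$ analogue of the polarization identity $\|x-y\|_2^2=\|x\|_2^2+\|y\|_2^2-2\langle x,y\rangle$, with the crucial difference that the $L^2$ version is unconditional while the $L^1$ version needs one argument to be binary.

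Given \eqref{eq:l1polarization}, the three equalities become routine substitutions. Plugging it into $\overline{\Delta}_{\text{JML1}}$ collapses the numerator to $2\langle x,y\rangle$ and the denominator to $2(\|x\|_1+\|y\|_1-\langle x,y\rangle)$, recovering $\overline{\Delta}_{\text{SJL},L^1}$; plugging it into the denominator of $\overline{\Delta}_{\text{JML2}}$ gives $\|x\|_1+\|y\|_1-\langle x,y\rangle$ directly, again recovering $\overline{\Delta}_{\text{SJL},L^1}$. For the squared-$L^2$ family I repeat the same substitutions using the unconditional polarization identity, so $\overline{\Delta}_{\text{JML1},L^2}=\overline{\Delta}_{\text{JML2},L^2}=\overline{\Delta}_{\text{SJL},L^2}$ holds for all $x,y\in[0,1]^p$ with no binarity assumption.

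For the final (inequality) claim it suffices to produce one pair of genuinely soft labels where \eqref{eq:l1polarization} fails. The simplest witness is a single pixel ($p=1$) with $x=\tfrac12$ and $y=\tfrac{3}{10}$: here $|x-y|=\tfrac15\neq x+y-2xy=\tfrac12$, and a one-line computation gives three pairwise-distinct values for $\overline{\Delta}_{\text{JML1}}$, $\overline{\Delta}_{\text{JML2}}$, and $\overline{\Delta}_{\text{SJL},L^1}$. I expect the only real subtlety in the whole argument to be the index-splitting verification of \eqref{eq:l1polarization} — in particular, being explicit that it is the binarity of one argument (not merely $x_i,y_i\in[0,1]$) that linearizes $|x_i-y_i|$ and makes the $L^1$ statement hold; everything after that is mechanical.
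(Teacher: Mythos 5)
Your proposal is correct and takes essentially the same route as the paper's proof: the paper also reduces the first claim by symmetry and establishes your $L^1$ identity $\|x-y\|_1=\|x\|_1+\|y\|_1-2\langle x,y\rangle$ for binary $y$ by the same split over $\{i:y_i=1\}$ and $\{i:y_i=0\}$ (written out as indicator sums rather than as a named lemma), invokes the unconditional $L^2$ polarization identity for the squared-$L^2$ family, and finishes with a single-pixel soft counterexample ($x=0.8$, $y=0.5$ there, versus your $x=\tfrac12$, $y=\tfrac{3}{10}$, both of which check out).
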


Figure \ref{fig:cejml} plots the loss value of $\overline{\Delta}_{\text{JML1}}$ and the cross-entropy loss (CE) for soft labels $y=0.1$ (left) and $y=0.9$ (right). Typically, the model is randomly initialized, resulting in a low initial confidence. Although the gradient of CE is substantial at the beginning, it plateaus as training progresses and the prediction is close to the target. Conversely, $\overline{\Delta}_{\text{JML1}}$ offers effective supervision later in the process. However, when $y$ is extremely small (e.g., $y=0.1$), $\overline{\Delta}_{\text{JML1}}$ can become overly steep, potentially causing problems in KD. This issue will be explored in greater detail in Section \ref{sec:kd}.

\begin{figure}[t]
    \centering
    \includegraphics[width=0.8\textwidth]{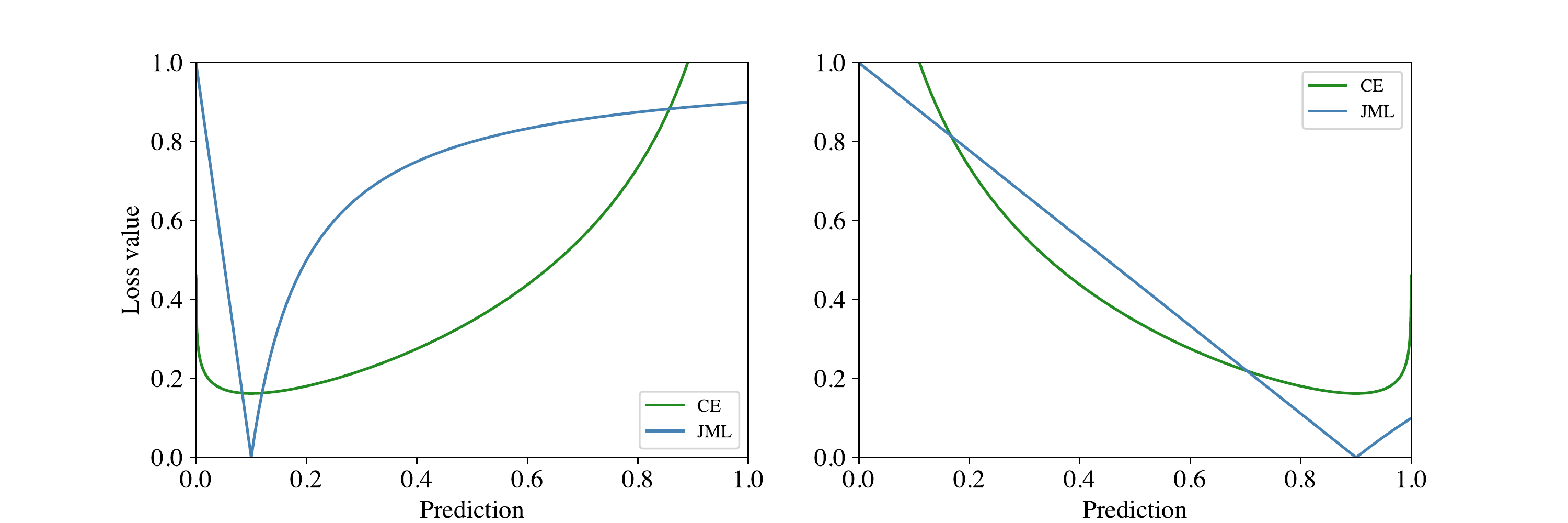}
    \caption{Loss value vs. prediction with $y=0.1$ (left) and $y=0.9$ (right).}
    \label{fig:cejml}
\end{figure}

\subsection{Use Cases}
Soft labels find wide-ranging applications. In this paper, we investigate three of the most common use cases: label smoothing (LS), knowledge distillation (KD) and semi-supervised learning (SSL). We leave the discussion on SSL in Appendix \ref{app:ssl}.

\subsubsection{Label Smoothing} \label{sec:ls}
In LS \cite{InceptionV2V3SzegedyCVPR2016}, the one-hot encoding is combined with a uniform distribution under a smoothing coefficient $\epsilon$, resulting in soft labels $SL^\epsilon$. In JML-LS, a network $H$ is trained with the following loss:
\begin{equation}
\mathcal{L}_\text{JML-LS} = \lambda_{\text{CE}} \mathcal{L}_{\text{CE, LS}} + \lambda_{\text{JML}} \mathcal{L}_{\text{JML, LS}}
\end{equation}
such that $\mathcal{L}_{\text{CE, LS}}=\text{CE}(H,SL^\epsilon)$ and $\mathcal{L}_{\text{JML, LS}}=\text{JML}(H,SL^\epsilon)$.

LS provides a regularization effect during training. However, when we regard semantic segmentation as a pixel-wise classification task and apply LS to every pixel in the image without distinction, we fail to consider spatial differences. This is particularly evident with SegFormer-B3 on ADE20K, which achieves an overall pixel-wise accuracy of $81.91\pm0.02\%$, but drops to $47.98\pm0.01\%$ in the boundary region. These areas, due to their inherent ambiguity, require stronger regularization. Moreover, our empirical findings suggest that smoothing non-boundary regions only yields marginal improvements. Therefore, we introduce boundary label smoothing (BLS), which only applies smoothing to labels near the boundary. We refer to the resulting loss as JML-BLS. Specifically, for every pixel $i$, we examine its $k\times k$ neighborhood $N_i$, and the pixel $i$ is regarded as a boundary pixel if there exists a pixel $j \in N_i$ such that their ground-truth labels are different: $y_i\neq y_j$. This can be efficiently computed by applying a max pooling layer to the one-hot encoding.

\subsubsection{Knowledge Distillation} \label{sec:kd}
In KD \cite{KDHintonNeurIPSWorkshop2015}, besides the ground-truth label $L$, a student model $S$ is trained to minimize the discrepancy to a teacher network $T$ simultaneously:    
\begin{equation}
\mathcal{L}_{\text{CE, KD}} = \mu_{\text{L}} \mathcal{L}_{\text{CE, L}} + \mu_{\text{T}} \mathcal{L}_{\text{CE, T}}
\end{equation}
where $\mathcal{L}_{\text{CE, L}} = \text{CE}(S, L)$ and $\mathcal{L}_{\text{CE, T}}=\text{CE}(S,T)$. 

In JML-KD, we add additional supervision from JML:
\begin{equation}
\mathcal{L}_{\text{JML, KD}} = \nu_{\text{L}} \mathcal{L}_{\text{JML, L}} + \nu_{\text{T}} \mathcal{L}_{\text{JML, T}}
\end{equation}
where $\mathcal{L}_{\text{JML, L}} = \text{JML}(S,L)$ and $\mathcal{L}_{\text{JML, T}}=\text{JML}(S,T)$. 

Combining these, we have
\begin{equation}
\mathcal{L}_\text{JML-KD} = \lambda_{\text{CE}} \mathcal{L}_{\text{CE, KD}} + \lambda_{\text{JML}} \mathcal{L}_{\text{JML, KD}}.
\end{equation}

Existing segmentation KD methods rely on pixel-wise supervision either in the output \cite{SKDLiuCVPR2019,CIRKDYangCVPR2022} or in the feature space \cite{IFVDWangECCV2020,CDShuICCV2021}. Our $\mathcal{L}_{\text{JML, T}}$, however, enables direct distillation of the teacher's IoU information to the student which aligns with the final evaluation metric. 

Moreover, recall that JMLs are averaged for each class, making it crucial to determine which classes contribute to the loss value at each iteration. We refer to these classes as \textit{active classes}. In KD, the student uses information from non-target classes to learn the class relationship from the teacher \cite{UnderstandingTangarXiv2020,DKDZhaoCVPR2022}. However, a teacher network may also include noisy and unhelpful predictions for unimportant classes. Therefore, we propose to filter out unimportant classes based on the confidence level of the teacher network. Specifically, JMLs are computed only over classes where the teacher's confidence exceeds a predefined threshold. This approach has two benefits: first, the student will not be distracted by irrelevant classes, aiding in generalization. Second, as shown in Figure \ref{fig:cejml}, we find that the loss value of JMLs can become excessively steep around the target when the ground-truth is low (e.g. $y=0.1$). Therefore, ignoring these classes can stabilize the training process.

We have also observed that the student benefits from a teacher trained with JML-BLS. In contrast, in classification tasks, it has been found that a teacher trained with LS can detrimentally affect the student's performance \cite{WhenMullerNeurIPS2019}. This is often cited as a counterexample to the notion that a more accurate teacher will necessarily distill a better student. We believe that the teacher, trained through JML-BLS, acquires intricate boundary information, and this "dark knowledge" is implicitly passed to the student, resulting in a more accurate student.

\section{Experiments}
\textbf{Experimental setups.} We adopt Pytorch Image Models (timm) \cite{timmWightman2019}, which provides implementations and ImageNet \cite{ImageNetDengCVPR2009} pre-trained weights for various backbones. In our experiments, CNN backbones include ResNet-101/50/18 \cite{ResNetHeCVPR2016} and MobileNetV2 \cite{MobileNetV2SandlerCVPR2018}; transformer backbones contain MiT-B5/B3/B1/B0 \cite{SegFormerXieNeurIPS2021}. Segmentation methods consist of DeepLabV3+ \cite{DeepLabV3+ChenECCV2018}, DeepLabV3 \cite{DeepLabV3ChenarXiv2017}, PSPNet \cite{PSPNetZhaoCVPR2017}, UNet \cite{U-NetRonnebergerMICCAI2015} and SegFormer \cite{SegFormerXieNeurIPS2021}. We evaluate models on Cityscapes \cite{CityscapesCordtsCVPR2016}, PASCAL VOC \cite{PASCALVOCEveringhamIJCV2009}, ADE20K \cite{ADE20KZhouCVPR2017} and DeepGlobe Land \cite{DeepGlobeDemirCVPRWorkshop2018}. In summary, 13 models and 4 datasets are studied in the sequel. More details of each model are in Appendix \ref{app:architectures} and training recipes are in Appendix \ref{app:recipes}.

\textbf{Evaluation metrics.} To provide a comprehensive comparison, we report both overall pixel-wise accuracy (Acc) and mean intersection over union (mIoU). To evaluate the effects of our methods on calibration, we present the expected calibration error (ECE) \cite{CalibrationGuoICML2017} and the ECE computed only over the boundary region, denoted as BECE. For Cityscapes, PASCAL VOC, and ADE20K, we repeat the experiments 3 times (except for SSL experiments that are single runs) and report performance on the validation set. For DeepGlobe Land, we conduct 5-fold cross-validation. All results are presented in the format of mean$\pm$standard deviation. We do not apply any test-time augmentation.

\subsection{Results on Accuracy}
We report results on Cityscapes (Table \ref{tb:cityscapes}), PASCAL VOC (Table \ref{tb:voc}), ADE20K (Table \ref{tb:ade}), and DeepGlobe Land (Table \ref{tb:land}, Appendix \ref{app:land}). Key takeaways from our results are as follows:

\textbf{JML significantly improves accuracy (Acc and mIoU).} Compared to training with CE alone, incorporating JML as part of the loss function can significantly enhance a model's mIoU. The improvement is typically more than 2\% on Cityscapes, PASCAL VOC, and ADE20K. Additionally, it can also increase a model's Acc. This suggests that the benefits of JML are not solely due to its alignment with the evaluation metric (mIoU), but also because it aids the overall optimization process.

\textbf{JML benefits more from soft labels.} For instance, the improvements of mIoU for DL3-R101 on PASCAL VOC are 0.48\% (CE vs. CE-BLS) and 1.25\% (JML vs. JML-BLS), respectively. We adopted training procedures that are heavily optimized for CE, which might explain why JML requires stronger regularization. More qualitative results can be found in Figure \ref{fig:cityscapes_real} and Figure \ref{fig:voc_real} (Appendix \ref{app:figures}).

\textbf{KD is more effective than BLS, while BLS performs well on datasets with simple boundary condition.} Both BLS and KD consistently improve the model's accuracy. Despite its simplicity, BLS can yield significant improvements, especially on PASCAL VOC where boundary condition is less complex. As boundary condition becomes more intricate, as in Cityscapes and ADE20K, KD, seen as learned label smoothing \cite{Tf-KDYuanCVPR2020}, typically outperforms BLS.

\textbf{SOTA results on segmentation KD.} Without bells and whistles, our simple approach that only uses soft labels greatly exceeds SOTA segmentation KD methods, as shown in Table \ref{tb:kdsota}. Indeed, the model trained only with hard labels in JML already achieves a comparable result as some of these methods. For example, on PASCAL VOC, DL3-R18 achieves $74.42\pm0.52\%$ mIoU while CIRKD \cite{CIRKDYangCVPR2022}, a complex distillation method only attains $74.50\%$ mIoU. Note that our baseline is not stronger than theirs: our DL3-R18 trained with CE has $72.47\pm0.33\%$ mIoU while their number is $73.21\%$ (because we use a smaller batch size, see Appendix \ref{app:recipes}). 

\textbf{SOTA results on segmentation SSL.} By incorporating JML-BLS with AugSeg \cite{AugSegZhaoCVPR2023} (refer to Appendix \ref{app:ssl} for further details), we achieve SOTA segmentation SSL results as illustrated in Table \ref{tb:sslsota} (Appendix \ref{app:ssl}). Notably, improvements are particularly significant on smaller splits - exceeding a 4\% enhancement over AugSeg on the 92 split.

\subsection{Results on Calibration} \label{sec:calibration}
\textbf{CE-BLS and CE-KD sometimes hurt calibration.} Contrary to the common belief in classification \cite{WhenMullerNeurIPS2019,Self-DistillationMobahiNeurIPS2020,RethinkingZhouICLR2021}, we find that both CE-BLS and CE-KD can sometimes deteriorate model calibration. For instance, in Cityscapes experiments, the lowest ECE is usually obtained with CE only. Nevertheless, CE-BLS can still improve model calibration near the boundary.

\textbf{JML compromises calibration.} Although the soft Dice loss can significantly improve a model's segmentation performance, it is well known to yield poorly calibrated models \cite{ConfidenceMehrtashTMI2020,TheoreticalBertelsMIA2021}. We confirm that this is also the case for JML. Interestingly, while models trained with JML exhibit inferior top-class calibration as measured by ECE, they actually achieve better multi-class calibration as indicated by the static calibration error (SCE). We provide more detailed results on calibration in Appendix \ref{app:calibration}.
 
\textbf{JML-BLS and JML-KD consistently improve calibration.} Although JML presents challenges in model calibration, these can be greatly mitigated by training models with soft labels. In JML experiments, both JML-BLS and JML-KD reliably enhance model calibration. 

\begin{table}[]
\tiny
\centering
\caption{Results on Cityscapes (\%). Best results within CE and JML groups are highlighted in red and green, respectively. Best results across CE and JML groups are underscored.} \label{tb:cityscapes}
\begin{tabular}{ccccccccc}
\hlineB{2}
Model & Metric & CE & CE-BLS & CE-KD & JML & JML-BLS & JML-KD \\ \hline
\multicolumn{1}{c}
{\multirow{4}{*}{DL3-R101}}
& Acc $\uparrow$ & $96.10\pm0.02$ & \cellcolor{red!15}{$96.11\pm0.03$} & - & $96.10\pm0.04$ & \underline{\cellcolor{green!15}{$96.25\pm0.03$}} & - \\
& mIoU $\uparrow$ & $78.67\pm0.32$ & \cellcolor{red!15}{$78.70\pm0.26$} & - & $80.29\pm0.17$ & \underline{\cellcolor{green!15}{$80.66\pm0.24$}} & - \\
& ECE $\downarrow$ & \underline{\cellcolor{red!15}{$0.76\pm0.05$}} & $0.97\pm0.04$ & - & $2.74\pm0.03$ & \cellcolor{green!15}{$2.09\pm0.02$} & - \\
& BECE $\downarrow$ & $16.14\pm0.22$ & \underline{\cellcolor{red!15}{$11.59\pm0.12$}} & - & $30.78\pm0.19$ & \cellcolor{green!15}{$21.20\pm0.07$} & - \\
\hline
{\multirow{4}{*}{DL3-R50}}
& Acc $\uparrow$ & $95.77\pm0.08$ & \cellcolor{red!15}{$95.83\pm0.08$} & - & $95.93\pm0.04$ & \underline{\cellcolor{green!15}{$96.05\pm0.02$}} & - \\
& mIoU $\uparrow$ & $76.45\pm0.68$ & \cellcolor{red!15}{$77.02\pm0.82$} & - & $78.68\pm0.42$ & \underline{\cellcolor{green!15}{$79.10\pm0.35$}} & -  \\
& ECE $\downarrow$ & \underline{\cellcolor{red!15}{$0.62\pm0.02$}} & $1.00\pm0.03$ & - & $2.84\pm0.02$ & \cellcolor{green!15}{$2.18\pm0.01$} & - \\
& BECE $\downarrow$ & $16.23\pm0.16$ & \underline{\cellcolor{red!15}{$11.67\pm0.18$}} & - & $31.12\pm0.10$ & \cellcolor{green!15}{$20.51\pm0.12$} & - \\
\hline
\multirow{4}{*}{DL3-R18}                    
& Acc $\uparrow$ & $95.28\pm0.02$ & $95.31\pm0.05$ & \cellcolor{red!15}{$95.40\pm0.04$} & $95.46\pm0.04$ & $95.49\pm0.05$ & \underline{\cellcolor{green!15}{$95.59\pm0.02$}} \\
& mIoU $\uparrow$ & $72.88\pm0.44$ & $73.14\pm0.12$ & \cellcolor{red!15}{$74.09\pm0.28$} & $75.55\pm0.13$ & $76.26\pm0.17$ & \underline{\cellcolor{green!15}{$76.68\pm0.33$}}  \\
& ECE $\downarrow$ & \underline{\cellcolor{red!15}{$0.68\pm0.06$}} & $0.88\pm0.01$ & $0.98\pm0.03$ & $3.07\pm0.01$ & $2.49\pm0.05$ & \cellcolor{green!15}{$2.47\pm0.01$} \\
& BECE $\downarrow$ & $17.28\pm0.19$ & \underline{\cellcolor{red!15}{$13.25\pm0.16$}} & $16.26\pm0.18$ & $32.10\pm0.08$ & $22.66\pm0.10$ & \cellcolor{green!15}{$22.57\pm0.17$} \\
\hline
\multirow{4}{*}{DL3-MB2}
& Acc $\uparrow$ & $95.24\pm0.01$ & $95.28\pm0.03$ & \cellcolor{red!15}{$95.30\pm0.04$} & $95.45\pm0.02$ & $95.51\pm0.04$ & \underline{\cellcolor{green!15}{$95.56\pm0.03$}}  \\
& mIoU $\uparrow$ & $72.19\pm0.12$ & $72.54\pm0.26$ & \cellcolor{red!15}{$72.88\pm0.15$} & $75.32\pm0.42$ & $75.81\pm0.14$ & \underline{\cellcolor{green!15}{$75.94\pm0.24$}}  \\
& ECE $\downarrow$ & \underline{\cellcolor{red!15}{$0.69\pm0.05$}} & $0.96\pm0.03$ & $0.99\pm0.04$ & $3.09\pm0.02$ & \cellcolor{green!15}{$2.43\pm0.04$} & $2.50\pm0.03$ \\
& BECE $\downarrow$ & $17.64\pm0.15$ & \underline{\cellcolor{red!15}{$13.14\pm0.20$}} & $16.55\pm0.10$ & $32.08\pm0.29$ & \cellcolor{green!15}{$21.61\pm0.09$} & $22.57\pm0.04$ \\
\hline
\multirow{4}{*}{PSP-R18}                       
& Acc $\uparrow$ & $95.13\pm0.05$ & $95.07\pm0.03$ & \cellcolor{red!15}{$95.14\pm0.02$} & $95.25\pm0.02$ & $95.30\pm0.02$ & \underline{\cellcolor{green!15}{$95.39\pm0.01$}} \\
& mIoU $\uparrow$ & $72.61\pm0.34$ & $72.43\pm0.05$ & \cellcolor{red!15}{$72.79\pm0.15$} & $74.96\pm0.16$ & $75.31\pm0.13$ & \underline{\cellcolor{green!15}{$75.75\pm0.31$}}  \\ 
& ECE $\downarrow$ & \underline{\cellcolor{red!15}{$0.68\pm0.07$}} & $0.97\pm0.01$ & $1.08\pm0.05$ & $3.17\pm0.01$ & \cellcolor{green!15}{$2.44\pm0.02$} & $2.52\pm0.01$ \\
& BECE $\downarrow$ & $17.54\pm0.24$ & \underline{\cellcolor{red!15}{$13.61\pm0.20$}} & $16.96\pm0.21$ & $32.65\pm0.07$ & \cellcolor{green!15}{$22.30\pm0.15$} & $23.17\pm0.08$ \\
\hlineB{2}
\end{tabular}

\vspace{2mm}

\caption{Results on PASCAL VOC (\%). Best results within CE and JML groups are highlighted in red and green, respectively. Best results across CE and JML groups are underscored.} \label{tb:voc}
\begin{tabular}{ccccccccc}
\hlineB{2}
Model & Metric & CE & CE-BLS & CE-KD & JML & JML-BLS & JML-KD \\ \hline
\multicolumn{1}{c}
{\multirow{4}{*}{DL3-R101}}
& Acc $\uparrow$ & $94.68\pm0.02$ & \cellcolor{red!15}{$94.75\pm0.05$} & - & $94.97\pm0.13$ & \underline{\cellcolor{green!15}{$95.34\pm0.08$}} & - \\
& mIoU $\uparrow$ & $78.39\pm0.09$ & \cellcolor{red!15}{$78.87\pm0.44$} & - & $80.26\pm0.45$ & \underline{\cellcolor{green!15}{$81.52\pm0.41$}} & - \\
& ECE $\downarrow$ & $2.33\pm0.03$ & \underline{\cellcolor{red!15}{$2.01\pm0.04$}} & - & $3.97\pm0.10$ & \cellcolor{green!15}{$3.20\pm0.04$} & - \\
& BECE $\downarrow$ & $20.54\pm0.12$ & \underline{\cellcolor{red!15}{$17.25\pm0.14$}} & - & $32.70\pm0.09$ & \cellcolor{green!15}{$22.30\pm0.02$} & - \\
\hline
{\multirow{4}{*}{DL3-R50}}
& Acc $\uparrow$ & $94.23\pm0.05$ & \cellcolor{red!15}{$94.28\pm0.06$} & - & $94.60\pm0.02$ & \underline{\cellcolor{green!15}{$94.87\pm0.02$}} & - \\
& mIoU $\uparrow$ & $76.93\pm0.32$ & \cellcolor{red!15}{$77.23\pm0.18$} & - & $78.97\pm0.12$ & \underline{\cellcolor{green!15}{$79.76\pm0.15$}} & - \\
& ECE $\downarrow$ & $2.05\pm0.03$ & \underline{\cellcolor{red!15}{$1.77\pm0.08$}} & - & $4.19\pm0.02$ & \cellcolor{green!15}{$3.25\pm0.02$} & - \\
& BECE $\downarrow$ & $20.95\pm0.13$ & \underline{\cellcolor{red!15}{$17.30\pm0.15$}} & - & $33.07\pm0.10$ & \cellcolor{green!15}{$20.41\pm0.01$} & - \\
\hline
\multirow{4}{*}{DL3-R18}                    
& Acc $\uparrow$ & $92.96\pm0.05$ & \cellcolor{red!15}{$93.17\pm0.09$} & $93.13\pm0.07$ & $93.28\pm0.14$ & $93.72\pm0.05$ & \underline{\cellcolor{green!15}{$93.75\pm0.05$}} \\
& mIoU $\uparrow$ & $72.47\pm0.33$ & \cellcolor{red!15}{$72.99\pm0.42$} & $72.96\pm0.30$ & $74.42\pm0.52$ & $75.60\pm0.24$ & \underline{\cellcolor{green!15}{$75.89\pm0.29$}} \\
& ECE $\downarrow$ & $1.83\pm0.12$ & \underline{\cellcolor{red!15}{$1.26\pm0.08$}} & $1.68\pm0.05$ & $4.92\pm0.14$ & \cellcolor{green!15}{$3.79\pm0.03$} & $3.94\pm0.07$ \\
& BECE $\downarrow$ & $21.83\pm0.29$ & \underline{\cellcolor{red!15}{$17.87\pm0.08$}} & $19.42\pm0.04$ & $34.35\pm0.09$ & $22.45\pm0.08$ & \cellcolor{green!15}{$22.15\pm0.19$} \\
\hline
\multirow{4}{*}{DL3-MB2}
& Acc $\uparrow$ & $92.47\pm0.12$ & $92.45\pm0.02$ & \cellcolor{red!15}{$92.54\pm0.04$} & $92.74\pm0.07$ & $93.13\pm0.11$ & \underline{\cellcolor{green!15}{$93.21\pm0.01$}} \\
& mIoU $\uparrow$ & $70.14\pm0.54$ & $70.36\pm0.27$ & \cellcolor{red!15}{$70.54\pm0.10$} & $72.35\pm0.32$ & $73.25\pm0.18$ & \underline{\cellcolor{green!15}{$73.55\pm0.27$}} \\
& ECE $\downarrow$ & $1.92\pm0.14$ & \underline{\cellcolor{red!15}{$1.46\pm0.09$}} & $1.78\pm0.03$ & $5.19\pm0.05$ & \cellcolor{green!15}{$4.14\pm0.08$} & $4.24\pm0.01$ \\
& BECE $\downarrow$ & $22.65\pm0.34$ & \underline{\cellcolor{red!15}{$19.62\pm0.45$}} & $20.62\pm0.34$ & $35.10\pm0.17$ & $23.38\pm0.19$ & \cellcolor{green!15}{$23.09\pm0.11$}\\
\hline
\multirow{4}{*}{PSP-R18}                       
& Acc $\uparrow$ & $92.93\pm0.12$ & $93.09\pm0.09$ & \cellcolor{red!15}{$93.13\pm0.18$} & $93.20\pm0.08$ & \underline{\cellcolor{green!15}{$93.65\pm0.04$}} & $93.60\pm0.10$\\
& mIoU $\uparrow$ & $72.10\pm0.55$ & $72.76\pm0.52$ & \cellcolor{red!15}{$72.84\pm0.57$} & $74.20\pm0.24$ & \underline{\cellcolor{green!15}{$75.04\pm0.22$}} & $74.93\pm0.36$ \\ 
& ECE $\downarrow$ & $1.92\pm0.22$ & \underline{\cellcolor{red!15}{$1.40\pm0.16$}} & $1.81\pm0.16$ & $4.76\pm0.04$ & \cellcolor{green!15}{$3.70\pm0.01$} & $3.99\pm0.07$ \\
& BECE $\downarrow$ & $22.39\pm0.39$ & \underline{\cellcolor{red!15}{$18.95\pm0.12$}} & $20.55\pm0.47$ & $34.72\pm0.06$ & \cellcolor{green!15}{$23.45\pm0.20$} & $23.53\pm0.37$ \\
\hlineB{2}
\end{tabular}

\vspace{2mm}

\caption{Results on ADE20K (\%). Best results within CE and JML groups are highlighted in red and green, respectively. Best results across CE and JML groups are underscored.} \label{tb:ade}
\begin{tabular}{ccccccccc}
\hlineB{2}
Model & Metric & CE & CE-BLS & CE-KD & JML & JML-BLS & JML-KD \\ \hline
\multicolumn{1}{c}{\multirow{4}{*}{SegFormer-B5}}
& Acc $\uparrow$ & $82.48\pm0.09$ & \cellcolor{red!15}{$82.53\pm0.06$} & - & $82.72\pm0.11$ & \underline{\cellcolor{green!15}{$82.80\pm0.10$}} & - \\
& mIoU $\uparrow$ & $48.17\pm0.35$ & \cellcolor{red!15}{$48.50\pm0.26$} & - & $49.95\pm0.28$ & \underline{\cellcolor{green!15}{$50.22\pm0.32$}} & - \\
& ECE $\downarrow$ & $5.94\pm0.05$ & \underline{\cellcolor{red!15}{$5.14\pm0.15$}} & - & $11.84\pm0.06$ & \cellcolor{green!15}{$9.12\pm0.15$} & - \\
& BECE $\downarrow$ & $19.98\pm 0.11$ & \underline{\cellcolor{red!15}{$14.59\pm0.17$}} & - & $34.78\pm0.19$ & \cellcolor{green!15}{$19.88\pm0.28$} & - \\ 
\hline
\multicolumn{1}{c}{\multirow{4}{*}{SegFormer-B3}}
& Acc $\uparrow$ & $81.91\pm0.02$ & \cellcolor{red!15}{$82.04\pm0.11$} & - & $82.31\pm0.13$ & \underline{\cellcolor{green!15}{$82.54\pm0.12$}} & - \\
& mIoU $\uparrow$ & $46.51\pm0.29$ & \cellcolor{red!15}{$46.88\pm0.24$} & - & $48.68\pm0.27$ & \underline{\cellcolor{green!15}{$49.24\pm0.24$}} & - \\ 
& ECE $\downarrow$ & $5.56\pm0.12$ & \underline{\cellcolor{red!15}{$4.56\pm0.13$}} & - & $11.57\pm0.13$ & \cellcolor{green!15}{$10.63\pm0.22$} & - \\
& BECE $\downarrow$ & $19.98\pm0.23$ & \underline{\cellcolor{red!15}{$4.56\pm0.13$}} & - & $34.47\pm0.07$ & \cellcolor{green!15}{$28.17\pm0.36$} & - \\ 
\hline
\multirow{4}{*}{SegFormer-B1}                    
& Acc $\uparrow$ & $78.75\pm0.05$ & $78.92\pm0.11$ & \cellcolor{red!15}{$78.96\pm0.03$} & $79.41\pm0.10$ & $79.53\pm0.12$ & \underline{\cellcolor{green!15}{$79.62\pm0.04$}} \\
& mIoU $\uparrow$ & $38.79\pm0.20$ & $38.95\pm0.25$ & \cellcolor{red!15}{$39.29\pm0.16$} & $41.70\pm0.17$ & $42.03\pm0.29$ & \underline{\cellcolor{green!15}{$42.52\pm0.17$}} \\
& ECE $\downarrow$ & $4.18\pm0.04$ & \underline{\cellcolor{red!15}{$3.18\pm0.16$}} & $4.90\pm0.05$ & $11.33\pm0.08$ & \cellcolor{green!15}{$10.46\pm0.18$} & $10.50\pm0.12$ \\
& BECE $\downarrow$ & $18.70\pm0.09$ & \underline{\cellcolor{red!15}{$13.89\pm0.20$}} & $19.23\pm0.09$ & $33.25\pm0.05$ & $27.38\pm0.25$ & \cellcolor{green!15}{$26.29\pm0.33$} \\
\hline
\multirow{4}{*}{SegFormer-B0}
& Acc $\uparrow$ & $75.49\pm0.06$ & $76.37\pm0.02$ & \cellcolor{red!15}{$76.48\pm0.02$} & $76.85\pm0.04$ & $76.98\pm0.11$ & \underline{\cellcolor{green!15}{$76.99\pm0.06$}} \\
& mIoU $\uparrow$ & $30.49\pm0.15$ & $33.48\pm0.07$ & \cellcolor{red!15}{$34.54\pm0.11$} & $36.65\pm0.13$ & $36.78\pm0.08$ & \underline{\cellcolor{green!15}{$37.05\pm0.11$}} \\
& ECE $\downarrow$ & $2.61\pm0.12$ & \underline{\cellcolor{red!15}{$2.10\pm0.20$}} & $3.56\pm0.04$ & $10.77\pm0.12$ & \cellcolor{green!15}{$9.88\pm0.17$} & $10.22\pm0.04$ \\
& BECE $\downarrow$ & $17.61\pm0.17$ & \underline{\cellcolor{red!15}{$13.33\pm0.12$}} & $17.53\pm0.13$ & $31.84\pm0.28$ & $26.09\pm0.29$ & \cellcolor{green!15}{$24.99\pm0.14$} \\
\hlineB{2}
\end{tabular}

\vspace{2mm}

\caption{Comparing with SOTA segmentation KD methods on Cityscapes and PASCAL VOC. All results are mIoU (\%). \textbf{JML-KD: we increase the batch size to 16 for Cityscapes experiments to match training details in} \cite{CIRKDYangCVPR2022,DISTHuangNeurIPS2022,MasKDHuangICLR2023}.} \label{tb:kdsota}
\begin{tabular}{ccccccccccc}
\hlineB{2}
Dataset & Model & CE & SKD \cite{SKDLiuCVPR2019} & IFVD \cite{IFVDWangECCV2020} & CD \cite{CDShuICCV2021} & CIRKD \cite{CIRKDYangCVPR2022} & MasKD \cite{MasKDHuangICLR2023} & DIST \cite{DISTHuangNeurIPS2022} & JML-KD \\ \hline
\multicolumn{1}{c}{\multirow{3}{*}{CS}}
& DL3-R18 & 72.88 & 75.42 & 75.59 & 75.55 & 76.38 & 77.00 & 77.10 & \cellcolor{red!15}{$77.91\pm0.16$}  \\
& DL3-MB2 & 72.19 & 73.82 & 73.50 & 74.66 & 75.42 & 75.26 & - & \cellcolor{red!15}{$77.53\pm0.20$}  \\ 
& PSP-R18 & 72.61 & 73.29 & 73.71 & 74.36 & 74.73 & 75.34 & 76.31 & \cellcolor{red!15}{$77.33\pm0.38$}  \\ \hline
\multirow{2}{*}{VOC}                    
& DL3-R18 & 72.47 & 73.51 & 73.85 & 74.02 & 74.50 & - & - & \cellcolor{red!15}{$75.89\pm0.29$} \\
& PSP-R18 & 72.10 & 74.07 & 73.54 & 73.99 & 74.78 & - & - & \cellcolor{red!15}{$74.93\pm0.36$}  \\ 
\hlineB{2}
\end{tabular}
\end{table}

\section{Ablation Studies}
\subsection{JML Weights} \label{sec:jml_weights}
It is common in recent works \cite{DETRCarionECCV2020,MaskFormerChengNeurIPS2021,Mask2FormerChengCVPR2022,SAMKirillovICCV2023} to balance CE and the Dice loss with equal weights. For JML, we adopt 0.25/0.75 in all our experiments and find it slightly superior than 0.5/0.5.

\begin{table}[h]
\tiny
\centering
\caption{Ablating different values of $\lambda_\text{CE}/\lambda_\text{JML}$ on PASCAL VOC using DL3-R101 and DL3-R18. All results are mIoU (\%). Red: the best in a row. Green: the worst in a row.} \label{tb:jml_weights}
\begin{tabular}{cccccc}
\hlineB{2}
Model    & 0.10/0.90           & 0.25/0.75            & 0.50/0.50           & 0.90/0.10           & 1.00/0.00 \\ \hline
DL3-R101 & $79.80\pm0.40$ & \cellcolor{red!15}{$80.26\pm0.45$} & $79.92\pm0.12$ & $78.73\pm0.30$ & \cellcolor{green!15}{$78.39\pm0.09$} \\
DL3-R18  & $73.67\pm0.39$ & \cellcolor{red!15}{$74.42\pm0.52$} & $74.30\pm0.29$ & $73.21\pm0.23$ & \cellcolor{green!15}{$72.47\pm0.33$} \\ 
\hlineB{2}
\end{tabular}
\end{table}

\subsection{JML-BLS}
\textbf{Although BLS is sensitive to the choice of $\epsilon$, it effectively increases model accuracy and calibration.} The effect of the smoothing coefficient $\epsilon$ on PASCAL VOC with DL3-R101/50/18 is shown in Figure \ref{fig:bls_epsilon} (Appendix \ref{app:figures}). Interestingly, for DL3-R50 and DL3-R18, the optimal $\epsilon$ that achieves the highest mIoU also yields the lowest ECE.

\textbf{We need a strong smoothing coefficient near the boundary.} The optimal $\epsilon$ for different kernel size $k$ on PASCAL VOC with DL3-R18 is shown in Figure \ref{fig:bls_k} (Appendix \ref{app:figures}). Note that $k=\infty$ implies LS is applied to every pixel, i.e.\ vanilla LS. Generally, as $k$ increases, we need to decrease the strength of smoothing. The best result is obtained when we only smooth a small region near the boundary with a strong smoothing coefficient ($k=3$ and $\epsilon=0.50$).

\subsection{JML-KD}
\textbf{Loss terms.} We examine the contribution of each loss term in Table \ref{tb:kdloss} using a DL3-R18 student. Adding JML terms significantly improves the student's performance.

\begin{table}[h]
\tiny
\centering
\caption{Evaluating different losses terms on Cityscapes and PASCAL VOC using a DL3-R18 student. $\mathcal{L}_{\text{JML-BLS}}$ means we train the teacher with JML-BLS. All results are mIoU (\%). Red: the best in a column.} \label{tb:kdloss}
\begin{tabular}{cccccccc}
\hlineB{2}
$\mathcal{L}_{\text{CE, L}}$ & $\mathcal{L}_{\text{JML,L}}$ & $\mathcal{L}_{\text{CE, KD}}$ & $\mathcal{L}_{\text{JML, KD}}$ & $\mathcal{L}_{\text{JML-BLS}}$ & CS & VOC \\ \hline
\checkmark & - & - & - & - & $72.88\pm0.44$  & $72.47\pm0.33$\\ 
\checkmark & \checkmark & - & - & - & $75.55\pm0.13$  & $74.42\pm0.52$ \\ 
\checkmark & \checkmark & \checkmark & - & - & $75.74\pm0.25$ & $74.65\pm0.42$ \\ 
\checkmark & \checkmark & \checkmark & \checkmark & - & $76.16\pm0.37$ & $75.05\pm0.31$ \\ 
\checkmark & \checkmark & \checkmark & \checkmark & \checkmark & $\cellcolor{red!15}{76.68\pm0.33}$ & $\cellcolor{red!15}{75.89\pm0.29}$ \\ 
\hlineB{2}
\end{tabular}
\end{table}

\textbf{Filtering out unimportant classes based on teacher's confidence is useful.} We examine the impact of active classes on PASCAL VOC with DL3-R18 in Table \ref{tb:active_classes}. In particular, 
we propose to ignore classes where the soft label, i.e.\ teacher's confidence, is low (marked as \texttt{LABEL}).
And in \texttt{ALL}, we include all classes. In \texttt{PRESENT}, we select the class with the maximum confidence from the teacher's prediction in the class dimension. In \texttt{PROB}, we skip classes where the student itself is not confident. In \texttt{BOTH}, we take both the teacher's and student's confidences into account. The code to compute active classes is in Figure \ref{fig:code} (Appendix \ref{app:figures}). 

We observe that using \texttt{ALL} classes in JML-KD can misguide the student. In most of the teacher's output classes, the confidence is usually very low. It can be challenging and potentially detrimental for the student to precisely mimic these numbers. With \texttt{PRESENT}, the performance remains similar to that without soft labels. This is because the effectiveness of using soft labels come from the non-argmax classes. \texttt{PROB} and \texttt{BOTH} achieve similar performance, but both are worse than \texttt{LABEL}. 

\begin{table}[h]
\tiny
\centering
\caption{Comparing active classes on PASCAL VOC using DL3-R18. All results are mIoU (\%).} \label{tb:active_classes}
\begin{tabular}{cccccc}
\hlineB{2}
Active Classes & \texttt{ALL} & \texttt{PRESENT} & \texttt{PROB} & \texttt{BOTH} & \texttt{LABEL} (Ours) \\ \hline
JML-KD  & $75.21\pm0.33$ & $74.50\pm0.25$ & $75.51\pm0.53$ & $75.39\pm0.31$ &\cellcolor{red!15}{$75.89\pm0.29$} \\ \hlineB{2}
\end{tabular}
\end{table}

\textbf{Teacher's calibration is beneficial.} In Sec. \ref{sec:kd}, we suggest that a teacher trained with JML-BLS offers more boundary information to the student. But what exactly is this boundary information? We believe it relates to the teacher's calibration. The student, with less capacity, often struggles to mimic a more powerful teacher, especially near ambiguous boundary regions. However, if the teacher is more calibrated and outputs a less peaked distribution, the student will learn this uncertainty rather than attempting to match an unrealistic distribution that it lacks the capacity to replicate.

In Table \ref{tb:3teacher}, we compare three teachers with various accuracy and calibration on PASCAL VOC. In particular, both T2 and T3 are trained with JML-BLS but with different smoothing parameter $\epsilon$, while T1 is not. T2, although having a lower mIoU, is more calibrated than T1. Consequently, T2's student is more accurate and better calibrated than T1's.

\begin{table}[h]
\tiny
\centering
\caption{Comparing 3 teachers of different accuracy and calibration on PASCAL VOC (\%). Prefix T stands for the teacher and prefix S the student. Red: the best in a row.}
\label{tb:3teacher}
\begin{tabular}{cccc}
\hlineB{2}
Metrics & T1 & T2 & T3 \\ \hline
T ECE $\downarrow$ & 4.20 & 3.47 & \cellcolor{red!15}{3.24}        \\
T BECE $\downarrow$ & 33.17 & 20.73 & \cellcolor{red!15}{20.42}         \\
T mIoU $\uparrow$ & 79.09 & 78.75 & \cellcolor{red!15}{79.82}         \\
S ECE $\downarrow$ & $4.76\pm0.09$ & $4.01\pm0.07$ & \cellcolor{red!15}{$3.94\pm0.07$}\\
S BECE $\downarrow$ & $33.86\pm0.10$ & $22.21\pm0.06$ & \cellcolor{red!15}{$22.15\pm0.05$}\\
S mIoU $\uparrow$ & $75.05\pm0.31$ & $75.43\pm0.35$ & \cellcolor{red!15}{$75.89\pm0.29$} \\
\hlineB{2}
\end{tabular}
\end{table}

\section{Related Works}
IoU is a commonly employed metric in semantic segmentation, and IoU losses strive to optimize this metric directly. IoU only obtains values when both predictions and ground-truth labels are discrete binary vectors $\{0,1\}^p$, but the neural network often predicts soft probabilities (after the softmax or sigmoid layer) in $[0,1]^p$. Interpolating IoU values from $\{0,1\}^p$ to $[0,1]^p$ has primarily followed two routes. One involves relaxing set counting as norm functions, as seen in the soft Jaccard loss \cite{OptimalNowozinCVPR2014,SoftJaccardRahmanISVC2016}, the soft Dice loss \cite{SoftDiceLossSudreMICCAIWorkshop2017}, the soft Tversky loss \cite{SoftTverskyLossSalehiMICCAIWorkshop2017} and the focal Tversky loss \cite{FocalTverskyLossAbrahamISBI2019}. The other capitalizes on the fact that IoU is submodular \cite{LovaszHingeYuTPAMI2018}, allowing for the application of the convex Lovasz extension of submodular functions. For instance, the Lovasz-Softmax loss \cite{Lovasz-softmaxLossBermanCVPR2018}, the Lovasz hinge loss \cite{LovaszHingeYuTPAMI2018} and the PixIoU loss \cite{PixIoUYuICML2021}. Nevertheless, as IoU values are extended for predictions from $\{0,1\}^p$ to $[0,1]^p$, the fact that labels can also fall within $[0,1]^p$ is often overlooked. As a result, these IoU losses are incompatible with soft labels.

Besides semantic segmentation, IoU is adopted across a wide spectrum of fields. Due to its discrete nature, its probabilistic extensions \cite{TheMinisumSpathOR1981} have found use in object detection \cite{JIoUFengTITS2022}, medical imaging \cite{jSTABLDorentMIA2020} and information retrieval \cite{ImprovedIoffeICDM2010,MaximallyMoultonICDMWorkshop2018}. 

\section{Discussion}
\subsection{How to tune the hyper-parameters of JML?}
In this section, we delve into some critical hyper-parameters of JML. For a comprehensive list, please refer to the accompanying code.

\textbf{mIoU$^\text{D}$/mIoU$^\text{I}$/mIoU$^\text{C}$} (default: 1.0/0.0/0.0): the weight of the loss to optimize mIoU$^\text{D}$/mIoU$^\text{I}$/mIoU$^\text{C}$ \cite{Fine-grainedIoUsWangNeurIPS2023}. The appropriate choice is mainly dependent on the targeted evaluation metrics. Given that the prevailing metric is mIoU$^\text{D}$ (per-dataset mIoU), we recommend to set mIoU$^\text{D}$ to 1.0 and mIoU$^\text{I}$/mIoU$^\text{C}$ to 0.0. However, it is imperative to acknowledge the inherent trade-offs when using JML to optimize different metrics \cite{Fine-grainedIoUsWangNeurIPS2023}.

\textbf{alpha/beta} (default: 1.0/1.0): the coefficient of false positives/negatives in the Tversky loss. For instance, when \texttt{alpha} and \texttt{beta} are both set to 1.0, the configuration corresponds to JML. Conversely, an \texttt{alpha} and \texttt{beta} value of 0.5 each leads to DML \cite{DMLWangMICCAI2023}. When the evaluation metric is IoU, JML is advised, while it is the Dice score, DML is more appropriate. The general Tversky loss is useful when separate weights are required for false positives/negatives.

\textbf{active\_classes\_mode} (default: \texttt{PRESENT} for hard labels and \texttt{ALL} for soft labels): the mode to compute active classes. With hard labels, it is suggested to use \texttt{PRESENT}, since the loss aligns more effectively with the evaluation metric, especially when (i) the dataset contains a large number of classes (e.g. ADE20K), and/or (ii) evaluated with fine-grained mIoUs \cite{Fine-grainedIoUsWangNeurIPS2023}. With soft labels, the optimal choice varies based on specific applications (see Table \ref{tb:active_classes}).

\subsection{How to use JML?}
\textbf{Combine JML with CE.} Our findings suggest that incorporating CE, particularly during the initial stage of training, can expedite convergence (as illustrated in Figure \ref{fig:cejml}). Consequently, relying solely on JML is not recommended. For a discussion on the balancing weights, please consult Section \ref{sec:jml_weights}.

\textbf{Tune the hyper-parameters of JML.} We have endeavored to minimize the necessity of extensive tuning. In the majority of scenarios, default hyper-parameter values (as illustrated above) should suffice. However, some situations might benefit from further refinement. In particular, pay attention to \texttt{active\_classes\_mode} when dealing with soft labels.

\textbf{Tune the hyper-parameters of training settings.} Current training recipes have been heavily optimized for CE. Although we find JML generally aligns with these hyper-parameters, due diligence is required. Notably, one of the advantages of JML is its ability to speed up convergence in the later training phase (as depicted in Figure \ref{fig:cejml}). As a consequence, training with a combination of CE and JML often converges in considerably fewer epochs compared to training with CE alone. Please refer to \cite{Fine-grainedIoUsWangNeurIPS2023} for a comparison of the number of epochs with those in MMSegmentation.

\textbf{Be careful with distributed training.} In our concurrent work \cite{Fine-grainedIoUsWangNeurIPS2023}, we observe that the loss presents trade-offs when optimizing different mIoU variants. If the per-GPU batch size is small (which is often the case) and the loss is computed on each GPU independently, the loss may inadvertently optimize for mIoU$^\text{I}$ (per-image mIoU). This can lead to suboptimal outcomes when evaluated with mIoU$^\text{D}$ (per-dataset mIoU).
 
\textbf{Be careful with extremely class-imbalance.} Aligned with mIoU, JML is designed to compute the loss on a per-class basis and subsequently average these individual losses. In preliminary experiments on some medical datasets, we identified instances of severe class imbalances. In such scenarios, the class-wise loss may inadvertently amplify the significance of underrepresented classes, potentially disturb the training process. To address this, one might consider (i) oversampling the minority classes, and/or (ii) adopting a class-agnostic loss computation, where the intersection and union are calculated over all pixels.

\section{Limitation}
In both this study and our subsequent work \cite{DMLWangMICCAI2023}, the focus is to extend the losses in the field of image segmentation. It would be intriguing to apply these losses to other tasks, such as long-tailed classification \cite{DiceLossLiACL2020}. Moreover, although we adopt these losses in the label space, they present potential in quantifying the similarity between two feature vectors \cite{MasKDHuangICLR2023}, potentially serving as an alternative to the $L^p$ norm or cosine similarity.

\section{Conclusion}
This paper is driven by the observation that current IoU losses fall short when dealing with soft labels, which substantially limits their adaptability to crucial training techniques. To address this limitation, we introduce the Jaccard metric losses (JMLs). While these losses are identical to the soft Jaccard loss in a conventional hard-label setting, they offer full compatibility with soft labels.

Our results demonstrate that integrating JMLs with label smoothing, knowledge distillation, and semi-supervised learning leads to notable improvements in both accuracy and calibration. This is consistent across a spectrum of datasets and network architectures, encompassing classic CNNs as well as recent vision transformers. Remarkably, the proposed methods, which are simple and solely rely on soft labels, surpass state-of-the-art segmentation knowledge distillation and semi-supervised learning techniques by a significant margin.

In our follow-up study \cite{DMLWangMICCAI2023}, we delve into the extensions of various other losses, including the soft Dice loss, soft Tversky loss, and focal Tversky loss. Given their equivalence to their original counterparts in a standard hard-label context and their enhanced compatibility with soft labels, we recommend to replace the existing implementations with ours.

\section*{Acknowledgements}
We acknowledge support from the Research Foundation - Flanders (FWO) through project numbers G0A1319N and S001421N, and funding from the Flemish Government under the Onderzoeksprogramma Artifici\"{e}le Intelligentie (AI) Vlaanderen programme. The resources and services used in this work were provided by the VSC (Flemish Supercomputer Center), funded by the Research Foundation - Flanders (FWO) and the Flemish Government.

We thank Ilya Bogdanov for his help on the proof.

\bibliographystyle{plain}
\bibliography{neurips2023}

\clearpage

\appendix

\section{Architectures} \label{app:architectures}
Our study includes 13 models, comprising both classic CNNs and recent vision transformers. Details of each model are presented in Table \ref{tb:models}. FLOPs computations are based on an input size of $512\times1024$. Inference latency measurements are conducted with the same input size on a NVIDIA A100. We estimate the training memory requirements using a ground-truth size of $8\times19\times512\times1024$ (\texttt{batch\_size}, \texttt{num\_classes}, \texttt{H}, \texttt{W}), also on a NVIDIA A100.

\begin{table}[h]
\tiny
\centering
\caption{Details of each model.} \label{tb:models}
\begin{tabular}{cccccc}
\hlineB{2}
Method & Backbone & \#Params (M) & FLOPs (G) & Latency (ms) & Memory (GB) \\ \hline
DeepLabV3+ \cite{DeepLabV3+ChenECCV2018} & ResNet101 \cite{ResNetHeCVPR2016}   
& 62.58 & 528.32 & 26.47 & 42.68 \\
DeepLabV3 \cite{DeepLabV3ChenarXiv2017} & ResNet101 \cite{ResNetHeCVPR2016}   
& 87.10 & 714.73 & 26.20 & 33.99 \\
DeepLabV3 \cite{DeepLabV3ChenarXiv2017} & ResNet50 \cite{ResNetHeCVPR2016}   
& 68.11 & 559.14 & 17.48 & 21.02 \\
DeepLabV3 \cite{DeepLabV3ChenarXiv2017} & ResNet18 \cite{ResNetHeCVPR2016}   
& 13.98 & 121.29 & 4.93 & 6.43 \\
DeepLabV3 \cite{DeepLabV3ChenarXiv2017} & MobileNetV2 \cite{MobileNetV2SandlerCVPR2018}   
& 3.79 & 31.68 & 5.89 & 16.74 \\
PSPNet \cite{DeepLabV3ChenarXiv2017} & ResNet18 \cite{ResNetHeCVPR2016}   
& 12.79 & 109.90 & 4.82 & 6.26 \\
UNet \cite{U-NetRonnebergerMICCAI2015} & ResNet50 \cite{ResNetHeCVPR2016}   
& 76.07 & 409.35 & 18.09 & 24.76 \\
UNet \cite{U-NetRonnebergerMICCAI2015} & ResNet18 \cite{ResNetHeCVPR2016}   
& 14.49 & 46.42 & 6.15 & 7.76 \\
UNet \cite{U-NetRonnebergerMICCAI2015} & MobileNetV2 \cite{MobileNetV2SandlerCVPR2018}   
& 4.06 & 22.63 & 8.15 & 11.16 \\
SegFormer \cite{SegFormerXieNeurIPS2021} & MiTB5 \cite{SegFormerXieNeurIPS2021}   
& 83.77 & 198.37 & 43.32 & 41.99 \\
SegFormer \cite{SegFormerXieNeurIPS2021} & MiTB3 \cite{SegFormerXieNeurIPS2021}   
& 46.40 & 124.52 & 26.15 & 28.29 \\
SegFormer \cite{SegFormerXieNeurIPS2021} & MiTB1 \cite{SegFormerXieNeurIPS2021}   
& 15.48 & 63.85 & 11.07 & 15.99 \\
SegFormer \cite{SegFormerXieNeurIPS2021} & MiTB0 \cite{SegFormerXieNeurIPS2021}   
& 5.01 & 45.46 & 8.98 & 13.45 \\
\hlineB{2}
\end{tabular}
\end{table}

\section{Training Details} \label{app:recipes}
By default, we adopted the training details outlined in \cite{CIRKDYangCVPR2022,DISTHuangNeurIPS2022,MasKDHuangICLR2023}, except for the reduction of the batch size to 8. In particular, we utilize SGD with a weight decay of 0.0005 and a momentum of 0.9. The initial learning rate is 0.01, and is decayed according to $(1-\frac{\text{iter}}{\text{total iters}})^{0.9}$. The number of iterations is 40K for Cityscapes \cite{CityscapesCordtsCVPR2016} and PASCAL VOC \cite{PASCALVOCEveringhamIJCV2009}, 10K for DeepGlobe Land \cite{DeepGlobeDemirCVPRWorkshop2018}. The crop size is $512\times1024$ for Cityscapes \cite{CityscapesCordtsCVPR2016}, $512\times512$ for PASCAL VOC \cite{PASCALVOCEveringhamIJCV2009} and DeepGlobe Land \cite{DeepGlobeDemirCVPRWorkshop2018}. In Table \ref{tb:kdsota}, for a fair comparison with the baselines, we increase the batch size of JML-KD to 16 in Cityscapes experiments, and all other parameters retain at their default values.

For experiments with vision transformers, we adhered to the training recipes in \cite{MMSegmentation2020}. Specifically, we use AdamW \cite{AdamWLoshchilovICLR2019} with a weight decay of 0.01. We begin with an initial learning rate of 0.00006, and is decayed by $(1-\frac{\text{iter}}{\text{total iters}})^1$. The number of iterations is 40K and the crop size is $512\times512$ for ADE20K \cite{ADE20KZhouCVPR2017}. The batch size is 8.

For semi-supervised learning, we follow training details in \cite{AugSegZhaoCVPR2023}. In particular, we use SGD with a weight decay of 0.0001 and a momentum of 0.9. The initial learning rate is 0.001, and is decayed according to $(1-\frac{\text{iter}}{\text{total iters}})^{0.9}$. The number of epochs is 80 and the crop size is $512\times 512$ for PASCAL VOC \cite{PASCALVOCEveringhamIJCV2009}. The batch size is 16.

For JML experiments, we set $\mu_\text{L}/\mu_\text{T}/\nu_\text{L}/\nu_\text{T}/\eta_\text{S}/\eta_\text{U}/\theta_\text{S}/\theta_\text{U}$ to 0.5 and $\lambda_\text{CE}/\lambda_\text{JML}$ to 0.25/0.75.

\section{More analysis of $\overline{\Delta}_{\text{SJL},L^1}$} \label{app:sjl_l1}
For $x,y\in [0,1]^p$, let us rewrite $\overline{\Delta}_{\text{SJL},L^1}$ for a particular pixel $i$:
\begin{equation}
    \overline{\Delta}_{\text{SJL},L^1} = 1 - \frac{x_iy_i+b}{x_i+y_i-x_iy_i+a}
\end{equation}
such that 
\begin{align}
    a&=\sum_{j\neq i}x_j + \sum_{j\neq i}y_j - \sum_{j\neq i}x_jy_j, \\
    b&=\sum_{j\neq i}x_jy_j.
\end{align}
Assume $ab\neq0$ and predictions are independent from each other. Take the derivative with respect to $x_i$:
\begin{equation}
    \odv{\overline{\Delta}_{\text{SJL},L^1}}{x_i} = -\frac{y_i^2+(a+b)y_i-b}{(x_i+y_i-x_iy_i+a)^2}.
\end{equation}
The numerator has two roots:
\begin{align}
    r_1 &= \frac{-(a+b)-\sqrt{(a+b)^2+4b}}{2}, \\
    r_2 &= \frac{-(a+b)+\sqrt{(a+b)^2+4b}}{2}.
\end{align}
It is easy to see that $r_1 \leq 0 \leq r_2 \leq 1$. Therefore we have
\begin{equation}
    \odv{\overline{\Delta}_{\text{SJL},L^1}}{x_i} 
    \begin{cases}
    \leq 0 & \text{if } y_i \geq r_2, \\
     > 0   & \text{otherwise}.
    \end{cases}    
\end{equation}

That is, $\overline{\Delta}_{\text{SJL},L^1}$ will push $x_i$ towards either 1 if $y_i > r_2$, or 0 if $y_i < r_2$, rather than making it close to $y_i$. In practice, predictions are not independent from each other, hence the exact behavior of $\overline{\Delta}_{\text{SJL},L^1}$ is a complex process. However, the above analysis still provides an insight of how $\overline{\Delta}_{\text{SJL},L^1}$ will react to soft labels.

\section{The Lovasz-Softmax Loss} \label{app:lsl}
Based on the fact that IoU is submodular \cite{LovaszHingeYuTPAMI2018}, the Lovasz-Softmax loss (LSL) \cite{Lovasz-softmaxLossBermanCVPR2018} computes the convex Lovasz extension of IoU. Specifically, for each prediction $x_i\in[0,1]$, mispredictions are computed as
\begin{equation}
    m_i = \begin{cases}
    1-x_i & \text{if } y_i = 1, \\
    x_i   & \text{otherwise}.
             \end{cases}    
\end{equation}
The Lovasz extension can be applied such that
\begin{align}
    \overline{\Delta}_{\text{LSL}}: m \in [0,1]^p, y\in \{0,1\}^p \nonumber \mapsto \sum_{i=1}^p m_i g_i(m)
\end{align}
where $g_i(m) = \Delta_{\text{IoU}}(\{\pi_1,...,\pi_i\})-\Delta_{\text{IoU}}(\{\pi_1,...,\pi_{i-1}\})$ and $\pi$ is a permutation ordering of $m$ such that $m_{\pi_1}\geq ... \geq m_{\pi_p}$. 

Loss functions that rely on the Lovasz extension, such as LSL \cite{Lovasz-softmaxLossBermanCVPR2018}, the Lovasz hinge loss \cite{LovaszHingeYuTPAMI2018} and the PixIoU loss \cite{PixIoUYuICML2021} cannot handle soft labels because they need to compute $g_i(m)$ and it requires $y\in\{0,1\}^p$. 

\section{JMLs vs. other IoU Losses} \label{app:iou_losses}
In Table \ref{tb:jml}, we evaluate the performance of $\overline{\Delta}_{\text{JML1}}$, $\overline{\Delta}_{\text{JML2}}$, $\overline{\Delta}_{\text{SJL},L^1}$, $\overline{\Delta}_{\text{SJL},L^2}$ and $\overline{\Delta}_{\text{LSL}}$ on Cityscapes and PASCAL VOC using DL3-R18.

As elaborated in Appendix \ref{app:sjl_l1}, $\overline{\Delta}_{\text{SJL},L^1}$ is equivalent to $\overline{\Delta}_{\text{JML}}$ in the context of hard labels. Nevertheless, with soft labels, $\overline{\Delta}_{\text{SJL},L^1}$ has a tendency to push predictions towards vertices, rather than optimizing for the ideal $x=y$ scenario. Our experiments on PASCAL VOC reveal that models trained with $\overline{\Delta}_{\text{SJL},L^1}$ using soft labels are outperformed by their hard-label counterparts. In fact, predictions for multiple classes might be simultaneously pushed towards 1, creating a conflict and potentially destabilizing the training process.

Conversely, $\overline{\Delta}_{\text{SJL},L^2}$ is free from the aforementioned issue, as highlighted in Section \ref{sec:limitations}. Nevertheless, it is found to be less effective than $\overline{\Delta}_{\text{SJL},L^1}$ when employed with hard labels, possibly due to its flattened behavior near the minimum \cite{OptimizationEelbodeTMI2020}. Our empirical results further substantiate that it is surpassed by $\overline{\Delta}_{\text{JML}}$ on Cityscapes and PASCAL VOC, for both hard and soft labels.

Lastly, $\overline{\Delta}_{\text{LSL}}$ cannot take soft labels as input, given its reliance on the Lovasz extension. Our evaluations indicate that its performance is similar to that of $\overline{\Delta}_{\text{JML}}$ with hard labels.

\begin{table}[h]
\tiny
\centering
\caption{Comparing $\overline{\Delta}_{\text{JML1}}$, $\overline{\Delta}_{\text{JML2}}$, $\overline{\Delta}_{\text{SJL},L^1}$, $\overline{\Delta}_{\text{SJL},L^2}$ and $\overline{\Delta}_{\text{LSL}}$ on Cityscapes and PASCAL VOC using DL3-R18. All results are mIoU (\%). Red: the best in a column.} \label{tb:jml}
\begin{tabular}{ccccc}
\hlineB{2}
Dataset & Loss & Hard & BLS & KD \\ \hline
\multicolumn{1}{c}{\multirow{5}{*}{CS}}
& $\overline{\Delta}_{\text{JML1}}$ & \cellcolor{red!15}{$75.55\pm0.13$} & $76.26\pm0.17$ & \cellcolor{red!15}{$76.68\pm0.33$} \\
& $\overline{\Delta}_{\text{JML2}}$ & \cellcolor{red!15}{$75.55\pm0.13$} & \cellcolor{red!15}{$76.31\pm0.23$} & $76.45\pm0.26$ \\ 
& $\overline{\Delta}_{\text{SJL},L^1}$ & \cellcolor{red!15}{$75.55\pm0.13$} & $75.78\pm0.34$ & $75.83\pm0.43$ \\
& $\overline{\Delta}_{\text{SJL},L^2}$ & $75.28\pm0.26$ & $75.51\pm0.38$ & $75.87\pm0.52$ \\
& $\overline{\Delta}_{\text{LSL}}$ & $75.53\pm0.26$ & - & - \\
\hline
\multirow{5}{*}{VOC}
& $\overline{\Delta}_{\text{JML1}}$ & $74.42\pm0.52$ & \cellcolor{red!15}{$75.60\pm0.24$} & \cellcolor{red!15}{$75.89\pm0.29$} \\
& $\overline{\Delta}_{\text{JML2}}$ & $74.42\pm0.52$ & $75.31\pm0.31$ & $75.49\pm0.35$ \\ 
& $\overline{\Delta}_{\text{SJL},L^1}$ & $74.42\pm0.52$ & $74.13\pm0.20$ & $74.24\pm0.61$ \\
& $\overline{\Delta}_{\text{SJL},L^2}$ & $73.72\pm0.30$ & $74.87\pm0.38$ & $74.83\pm0.32$ \\
& $\overline{\Delta}_{\text{LSL}}$ & \cellcolor{red!15}{$74.45\pm0.36$} & - & - \\
\hlineB{2}
\end{tabular}
\end{table}

\section{Proof of Theorem \ref{thm:metric}} \label{app:metric}
\begin{proof}
\begin{enumerate}[label=(\roman*)]
    \item $\overline{\Delta}_{\text{JML1}}$ is a metric on $[0,1]^p$. The proof was given in \cite{TheMinisumSpathOR1981}. Here we provide a sketch for the proof of the triangle inequality. Recall the definition of $\overline{\Delta}_{\text{JML1}}$:
    \begin{align}
        \overline{\Delta}_{\text{JML1}} = 1 - \frac{\|x\|_1+\|y\|_1-\|x-y\|_1}{\|x\|_1+\|y\|_1+\|x-y\|_1} =\frac{2\|x-y\|_1}{\|x\|_1+\|y\|_1+\|x-y\|_1}.
    \end{align}
    Given a fixed point $a\in [0,1]^p$, the key is to define a new function $d'$:
    \begin{equation}
    d'(x,y) = 
    \begin{cases}
    0 & \text{if } x = y = a, \\
    \frac{d(x,y)}{d(x,a)+d(y,a)+d(x,y)} & \text{otherwise};
    \end{cases}    
    \end{equation}
    and show that if $d$ is a metric on $[0,1]^p$, then $d'$ is also a metric on $[0,1]^p$. The proof can be found in \cite{TheMinisumSpathOR1981}. 
    
    Substitute the definition of the $L^1$ norm as $d$ and let $a=0$. Since the $L^1$ norm is a metric on $[0,1]^p$, we can conclude that $\overline{\Delta}_{\text{JML1}}$ is also a metric on $[0,1]^p$.
    \item $\overline{\Delta}_{\text{JML2}}$ is a metric on $[0,1]^p$. Conditions (i) - (iii) are obvious. To show the triangle inequality, note that 
    \begin{align}
        \langle b, c\rangle&=\sum_i b_ic_i \\
  & =\sum_i\bigl((b_i-a_i)c_i+a_ic_i\bigr) \\
  & \leq\sum_i|b_i-a_i|+\sum_ia_ic_i \\
  & =\|a-b\|_1+\langle a, c\rangle.
    \end{align}
    Similarly,
    \begin{equation}
        \langle a, b\rangle \leq \|b-c\|_1+\langle a, c\rangle.
    \end{equation}
    Hence
    \begin{align}
      & \qquad \overline{\Delta}_{\text{JML2}}(a,b) + \overline{\Delta}_{\text{JML2}}(b,c) \\
      & = \frac{\|a-b\|_1}{\|a-b\|_1+\langle a, b\rangle}+\frac{\|b-c\|_1}{\|b-c\|_1+\langle b, c\rangle} \\
      & \geq \frac{\|a-b\|_1}{\|a-b\|_1+\|b-c\|_1 +\langle a, c\rangle} + \frac{\|b-c\|_1}{\|a-b\|_1+\|b-c\|_1 +\langle a, c\rangle} \\
      & = \frac{\|a-b\|_1+\|b-c\|_1}{\|a-b\|_1+\|b-c\|_1+\langle a, c\rangle} \\
      & \geq \frac{\|a-c\|_1}{\|a-c\|_1+\langle a, c\rangle} \\
      & = \overline{\Delta}_{\text{JML2}}(a,c).
    \end{align}
    Note that if $p \geq q > 0, r \geq 0$, we have $\frac{p}{p + r} \geq \frac{q}{q+r}$. The last inequality follows from $\|a-b\|_1+\|b-c\|_1 \geq \|a-c\|_1$.
    
    \item $\overline{\Delta}_{\text{SJL},L^1}$ is not a metric on $[0,1]^p$. For instance, if $a=0.5, \overline{\Delta}_{\text{SJL},L^1}(a,a)=2/3\neq 0$. 
    
    \item $\overline{\Delta}_{\text{SJL},L^2}$ is not a metric on $[0,1]^p$. For instance, if $a=0.8, b=0.4, c=0.2$, it does not satisfy the triangle inequality. 
\end{enumerate}
\end{proof}

\section{Proof of Theorem \ref{thm:eqneq}} \label{app:eqneq}
\begin{proof}
\begin{enumerate}[label=(\roman*)]
    \item $\forall x\in[0,1]^p, y\in \{0,1\}^p$ and $x\in \{0,1\}^p, y\in [0,1]^p$, $\overline{\Delta}_{\text{JML1}}=\overline{\Delta}_{\text{JML2}}=\overline{\Delta}_{\text{SJL},L^1}$. Due to symmetry, we only need to prove the first part. Note that
    \begin{align}
        & \|x\|_1 = \sum_i x_i =\sum_i\mathds{1}_{(y_i=0)}x_i + \sum_i\mathds{1}_{(y_i=1)}x_i \\
        & \|y\|_1 = \sum_iy_i = \sum_i\mathds{1}_{(y_i=1)} \\
        & \langle x, y\rangle = \sum_ix_iy_i = \sum_i\mathds{1}_{(y_i=1)}x_i \\
        & \|x-y\|_1 = \sum_i|x_i-y_i| = \sum_i\mathds{1}_{(y_i=0)}x_i - \sum_i\mathds{1}_{(y_i=1)}x_i + \sum_i\mathds{1}_{(y_i=1)}.
    \end{align}
    Combine these, we have
    \begin{align}
\overline{\Delta}_{\text{JML1}}=\overline{\Delta}_{\text{JML2}} =\overline{\Delta}_{\text{SJL},L^1}=\frac{\sum_i\mathds{1}_{(y_i=1)}x_i}{\sum_i\mathds{1}_{(y_i=0)}x_i + \sum_i\mathds{1}_{(y_i=1)}}.
    \end{align}
    \item $\forall x,y\in[0,1]^p, \overline{\Delta}_{\text{JML1,$L^2$}}=\overline{\Delta}_{\text{JML2,$L^2$}}=\overline{\Delta}_{\text{SJL},L^2}$. This holds because $\|x-y\|_2^2=\|x\|_2^2+\|y\|_2^2-2\langle x, y\rangle$.
    \item $\exists x, y \in [0,1]^p, \overline{\Delta}_{\text{JML1}}\neq\overline{\Delta}_{\text{JML2}}\neq\overline{\Delta}_{\text{SJL},L^1}$. For instance, if $x=0.8, y=0.5, \overline{\Delta}_{\text{JML1}}\neq\overline{\Delta}_{\text{JML2}}\neq\overline{\Delta}_{\text{SJL},L^1}$.
\end{enumerate}
\end{proof}

\section{$\overline{\Delta}_{\text{JML1}}$ vs. $\overline{\Delta}_{\text{JML2}}$} \label{app:jml12}
\begin{definition}[Convex Closure]
The convex closure of a set function $f: x\in\{0,1\}^p \rightarrow [0,1]$ is
\begin{align}
    \mathbb{C}f: [0,1]^p\rightarrow [0,1]  = & \min_{\alpha}\sum_{i=1}^{p} \alpha_i f(x_i), \\
    \text{s.t. } & \sum_{i=1}^{p} \alpha_i = 1, \\
    & \sum_{i=1}^{p} \alpha_i x_i = x, \\
    & \alpha_i\geq 0.
\end{align}
\end{definition}

The convex closure extends a set function that is only defined at the vertices $\{0,1\}^p$ to the whole hypercube $[0,1]^p$ by linearly interpolating the values at these vertices. We plot the loss value of $\overline{\Delta}_{\text{JML1}},\overline{\Delta}_{\text{JML2}}$ and the convex closure of $\overline{\Delta}_{\text{SJL},L^1}$ in Figure \ref{fig:1d} when $y=0.5$. Note that $\overline{\Delta}_{\text{JML1}}$ overlaps with the convex closure at $[0, 0.5]$. We can see that $\overline{\Delta}_{\text{JML1}}\leq \overline{\Delta}_{\text{JML2}}$ and $\overline{\Delta}_{\text{JML1}}$ is closer to the convex closure. Generally, we have the following theorems.

\begin{figure}[t]
    \centering
    \includegraphics[width=0.6\textwidth]{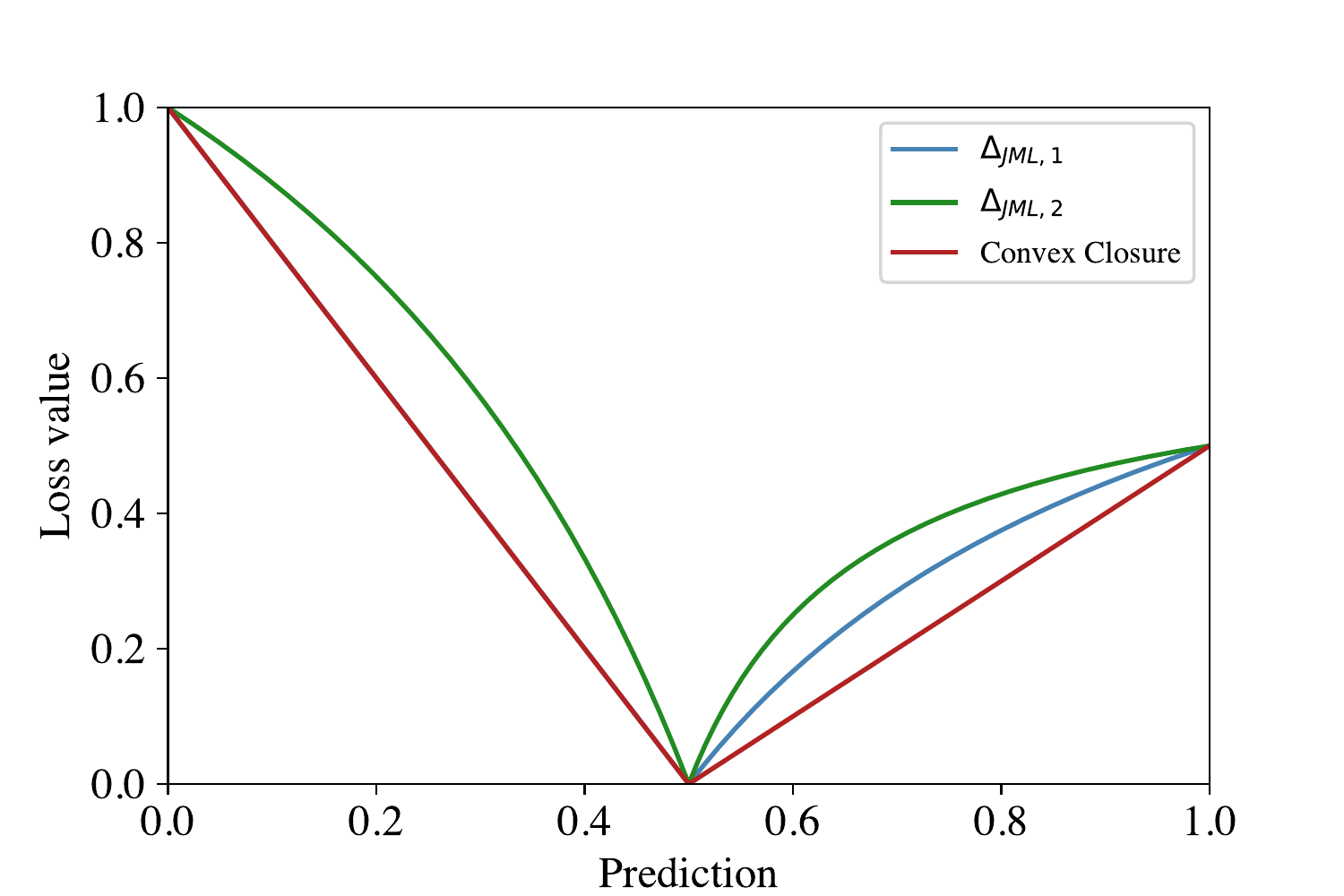}
    \caption{Comparing $\overline{\Delta}_{\text{JML1}}, \overline{\Delta}_{\text{JML2}}$ and the convex closure with $y=0.5$.}
    \label{fig:1d}
\end{figure}

\begin{theorem} \label{thm:concave}
Given a set function $l:\{0,1\}^p \rightarrow [0,1]$ and two concave functions $f, g: [0,1]^p\rightarrow [0,1]$. If $\forall x \in\{0,1\}^p, l(x)=f(x)=g(x)$ and $\forall x\in [0,1]^p, f(x)\leq g(x)$, then $\forall x\in [0,1]^p, |f(x) - \mathbb{C}l(x)| \leq |g(x) - \mathbb{C}l(x)|$.
\end{theorem}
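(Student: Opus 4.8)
The plan is to reduce the whole statement to a single geometric fact: \emph{any concave function on $[0,1]^p$ that agrees with $l$ on the vertices $\{0,1\}^p$ lies pointwise above the convex closure $\mathbb{C}l$.} Granting this domination property, the theorem is immediate, so essentially all the work goes into establishing the key lemma.

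First I would prove the lemma. Fix $x\in[0,1]^p$ and let $h$ be any concave function with $h=l$ on the vertices. For any way of writing $x=\sum_i\alpha_i v_i$ as a convex combination of vertices $v_i\in\{0,1\}^p$ (so $\alpha_i\geq 0$ and $\sum_i\alpha_i=1$), Jensen's inequality for concave functions gives $h(x)\geq\sum_i\alpha_i h(v_i)=\sum_i\alpha_i l(v_i)$. Since this holds for \emph{every} such representation, it holds in particular for the minimizing one, whence $h(x)\geq\min_\alpha\sum_i\alpha_i l(v_i)=\mathbb{C}l(x)$. Here I use that the constraint $\sum_i\alpha_i x_i=x$ in the definition of $\mathbb{C}l$ ranges over exactly these vertex representations, so that substituting $h(v_i)=l(v_i)$ and then passing to the minimum are both legitimate.

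Applying the lemma to both hypotheses $f$ and $g$ — each concave and each equal to $l$ on the vertices — yields $f(x)\geq\mathbb{C}l(x)$ and $g(x)\geq\mathbb{C}l(x)$ for all $x\in[0,1]^p$. Both differences $f(x)-\mathbb{C}l(x)$ and $g(x)-\mathbb{C}l(x)$ are therefore nonnegative, so the absolute values simply drop: $|f(x)-\mathbb{C}l(x)|=f(x)-\mathbb{C}l(x)$ and likewise for $g$. The assumption $f(x)\leq g(x)$ then gives $0\leq f(x)-\mathbb{C}l(x)\leq g(x)-\mathbb{C}l(x)$, which is exactly $|f(x)-\mathbb{C}l(x)|\leq|g(x)-\mathbb{C}l(x)|$.

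The only genuine obstacle is the lemma, and even there the content is just Jensen's inequality combined with the description of $\mathbb{C}l$ as an infimum over convex combinations of vertex values; the vertex-agreement hypothesis $l=f=g$ on $\{0,1\}^p$ is precisely what lets me invoke it for both functions, and the assumption $f\leq g$ is used only in the final line. Notably I never need $\mathbb{C}l$ to coincide with $l$ at the vertices, nor any structure of $[0,1]^p$ beyond its being the polytope whose extreme points are the vertices, so no further care is required.
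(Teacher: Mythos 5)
Your proposal is correct and follows essentially the same route as the paper: both rest on the single observation that a concave function agreeing with $l$ on the vertices dominates $\mathbb{C}l$, proved via Jensen's inequality applied to a vertex representation of $x$ and the characterization of $\mathbb{C}l$ as a minimum over such representations. The only cosmetic difference is that the paper applies this domination only to $f$ (noting $g \geq f \geq \mathbb{C}l$ then follows), whereas you invoke the lemma for $f$ and $g$ separately.
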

\begin{proof}
It suffices to show $\forall x\in [0,1]^p, f(x) \geq \mathbb{C}l(x)$. $\forall x\in [0,1]^p$, we have $x=\sum_{i=1}^p\alpha_ix_i$ such that $\alpha_i\geq0, \sum_{i=1}^p\alpha_i=1$ and $x_i\in\{0,1\}$. Therefore
\begin{align}
f(x) &= f(\sum_{i=1}^p\alpha_ix_i) \\
&=f\bigl((1-\alpha_p)\frac{\sum_{i=1}^{p-1}\alpha_ix_i}{1-\alpha_p} + \alpha_px_p\bigr) \\
&\geq (1-\alpha_p) f(\frac{\sum_{i=1}^{p-1}\alpha_ix_i}{1-\alpha_p}) + \alpha_pf(x_p) \\
&= (1-\alpha_p)f(\frac{1-\alpha_p-\alpha_{p-1}}{1-\alpha_p}\frac{\sum_{i=1}^{p-2}\alpha_ix_i}{1-\alpha_p-\alpha_{p-1}} +\frac{\alpha_{p-1}}{1-\alpha_{p}}x_{p-1}) + \alpha_pf(x_p) \\
& \geq (1-\alpha_p-\alpha_{p-1})f(\frac{\sum_{i=1}^{p-2}\alpha_ix_i}{1-\alpha_p-\alpha_{p-1}}) + \alpha_{p-1}f(x_{p-1})+\alpha_pf(x_p) \\
& \geq ... \geq \sum_{i=1}^p\alpha_i f(x_i) \\
& = \sum_{i=1}^p\alpha_i l(x_i) \\
& \geq \mathbb{C}l(x).
\end{align}
\end{proof}

\begin{theorem}
$\forall x, y\in[0,1]^p, \overline{\Delta}_{\text{JML1}}\leq \overline{\Delta}_{\text{JML2}}$.
\end{theorem}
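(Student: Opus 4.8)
The plan is to reduce this inequality to a single pointwise inequality on the unit interval. First I would rewrite both losses in their simplified forms, namely $\overline{\Delta}_{\text{JML1}} = \frac{2\|x-y\|_1}{\|x\|_1+\|y\|_1+\|x-y\|_1}$ and $\overline{\Delta}_{\text{JML2}} = \frac{\|x-y\|_1}{\langle x, y\rangle+\|x-y\|_1}$. Introducing the shorthand $D = \|x-y\|_1 \geq 0$, $S = \|x\|_1+\|y\|_1$, and $I = \langle x, y\rangle$, the target claim becomes $\frac{2D}{S+D} \leq \frac{D}{I+D}$.

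Second, I would dispose of the degenerate case $D = 0$ (i.e.\ $x = y$), where both sides vanish and equality holds. For $D > 0$ both denominators are strictly positive, so I can cancel the common factor $D$ and the inequality becomes $\frac{2}{S+D} \leq \frac{1}{I+D}$; cross-multiplying (again legitimate because both denominators are positive) reduces it to $2(I+D) \leq S+D$, that is, $2\langle x,y\rangle + \|x-y\|_1 \leq \|x\|_1 + \|y\|_1$.

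Third, and this is the heart of the argument, I would establish this last inequality coordinatewise. Since $x, y \in [0,1]^p$ have nonnegative entries, we have $\|x\|_1 = \sum_i x_i$, $\|y\|_1 = \sum_i y_i$, $\langle x, y\rangle = \sum_i x_i y_i$, and $\|x-y\|_1 = \sum_i |x_i - y_i|$, so it suffices to prove $2 x_i y_i + |x_i - y_i| \leq x_i + y_i$ for each coordinate $i$. Fixing $a = x_i$, $b = y_i \in [0,1]$ and assuming without loss of generality $a \geq b$, the inequality collapses to $2ab \leq 2b$, i.e.\ $2b(a-1) \leq 0$, which holds because $b \geq 0$ and $a \leq 1$; the case $b \geq a$ is symmetric. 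Summing over $i$ then yields the display above and completes the proof.

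I do not anticipate a genuine obstacle here. The only two points demanding care are the direction of the fraction manipulation — $\frac{2}{S+D} \leq \frac{1}{I+D}$ inverts to $2(I+D) \leq S+D$ precisely because both denominators are positive — and the observation that the pointwise bound relies essentially on the ceiling $x_i, y_i \leq 1$. This is exactly where the hypothesis $x, y \in [0,1]^p$ is used, and the step $2b(a-1)\leq 0$ would break down on the whole of $\mathbb{R}_{\geq 0}^p$, which explains why the comparison is specific to the unit hypercube.
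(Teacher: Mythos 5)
Your proof is correct. You and the paper perform the same initial reduction: clear the (positive) denominators of $\overline{\Delta}_{\text{JML1}}=\frac{2\|x-y\|_1}{\|x\|_1+\|y\|_1+\|x-y\|_1}$ and $\overline{\Delta}_{\text{JML2}}=\frac{\|x-y\|_1}{\langle x,y\rangle+\|x-y\|_1}$ and arrive at the same target inequality, $2\langle x,y\rangle+\|x-y\|_1\leq\|x\|_1+\|y\|_1$ (the paper keeps the extra factor of $\|x-y\|_1$ rather than cancelling it). Where you diverge is the finish. The paper exploits the invariance of $\|x\|_1+\|y\|_1$, $\|x-y\|_1$ and $\langle x,y\rangle$ under swapping $x_i\leftrightarrow y_i$ coordinatewise to reduce to the case $x_i\geq y_i$ for all $i$, so that $\|x-y\|_1=\|x\|_1-\|y\|_1$, and then factors the whole expression globally as $2\left(\sum_i(x_i-y_i)\right)\left(\sum_i(1-x_i)y_i\right)\geq 0$. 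You instead prove the scalar inequality $2ab+|a-b|\leq a+b$ for $a,b\in[0,1]$ and sum over coordinates; your WLOG $a\geq b$ plays the role of the paper's swap, but applied locally, and $2b(a-1)\leq 0$ is exactly the one-dimensional shadow of the paper's factorization. Your route is slightly more elementary and makes it transparent where the hypothesis $x_i,y_i\leq 1$ enters (it is equally essential, though less visible, in the paper's factor $\sum_i(1-x_i')y_i'$); the paper's global factorization is arguably slicker but requires the reader to check that the swap preserves all three quantities. The only cosmetic caveat is the point $x=y=0$, where both losses are formally $0/0$ and need the usual convention --- an issue the paper's proof of this theorem also leaves implicit.
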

\begin{proof}
\begin{align}
& \qquad \overline{\Delta}_{\text{JML1}}\leq \overline{\Delta}_{\text{JML2}} \\
& \Rightarrow \frac{\|x\|_1+\|y\|_1-\|x-y\|_1}{\|x\|_1+\|y\|_1+\|x-y\|_1} - \frac{\langle x, y\rangle}{\langle x, y\rangle+\|x-y\|_1} \geq 0 \\
& \Rightarrow (\|x\|_1 + \|y\|_1)\|x-y\|_1 - \|x-y\|_1^2 - 2\langle x, y\rangle\|x-y\|_1 \geq 0.
\end{align}
We can switch the element of $x$ and $y$ whenever $x_i \leq y_i$ for some $i$. Doing this, each term in the last inequality remains the same. Let us denote the new variable as $x', y'$ such that $x_i'\geq y_i'$ for all $i$. Hence
\begin{align}
& (\|x\|_1 + \|y\|_1)\|x-y\|_1 - \|x-y\|_1^2 - 2\langle x, y\rangle\|x-y\|_1 \\
= & (\|x'\|_1 + \|y'\|_1)\|x'-y'\|_1 - \|x'-y'\|_1^2 - 2\langle x', y'\rangle\|x'-y'\|_1 \\
= & (\|x'\|_1+\|y'\|_1)(\|x'\|_1-\|y'\|_1) - (\|x'\|_1-\|y'\|_1)^2 - 2\langle x', y'\rangle(\|x'\|_1-\|y'\|_1) \\
= & 2(\|x'\|_1-\|y'\|_1)(\|y'\|_1-\langle x', y'\rangle) \\
= & 2\sum_i(x_i'-y_i')\sum_i(1-x_i')y_i'\geq 0.
\end{align}
\end{proof}

\begin{figure}[t]
    \centering
    \includegraphics[width=0.6\textwidth]{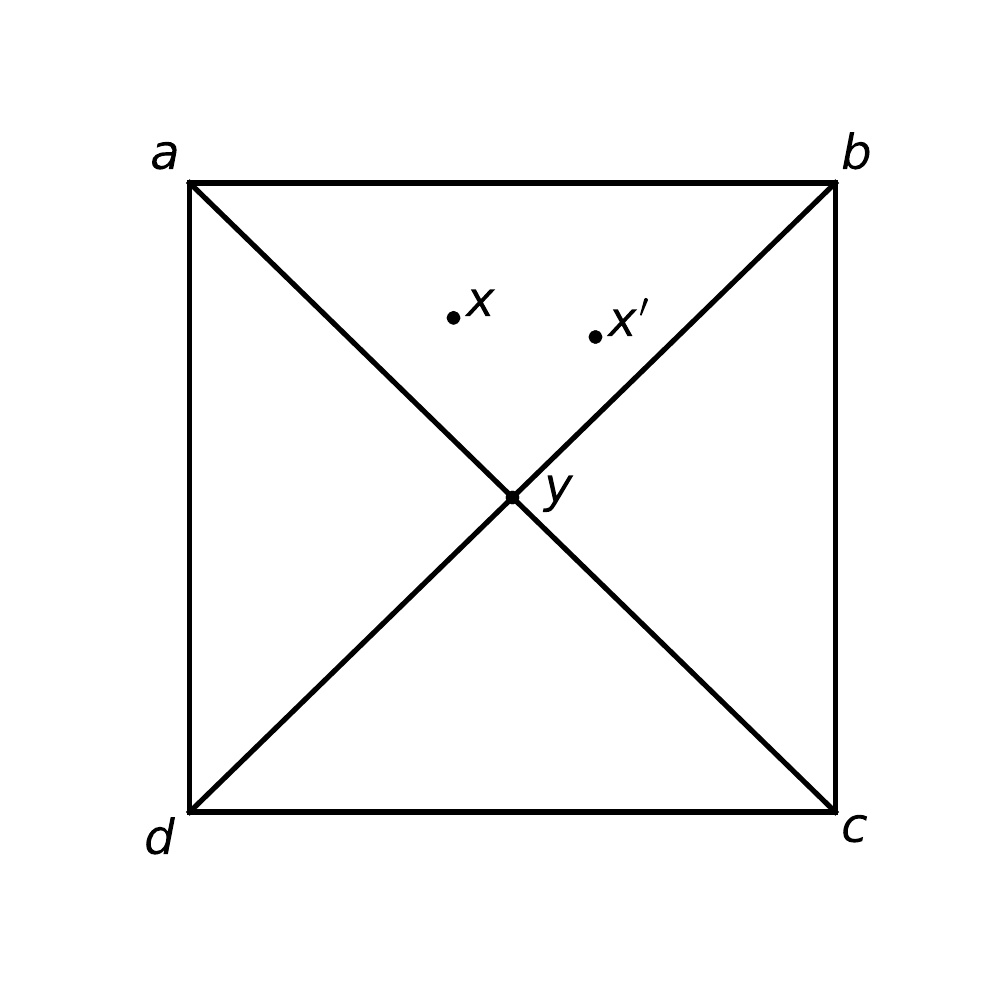}
    \caption{A counterexample that both $\overline{\Delta}_{\text{JML1}}$ and $\overline{\Delta}_{\text{JML2}}$ are not piece-wise concave in 2D.}
    \label{fig:2d}
\end{figure}

Although both $\overline{\Delta}_{\text{JML1}}$ and $\overline{\Delta}_{\text{JML2}}$ are not concave as Figure \ref{fig:1d} shows, if we can divide the hypercube into several sub-spaces such that $\overline{\Delta}_{\text{JML1}}$ and $\overline{\Delta}_{\text{JML2}}$ are equal at the vertices and concave at each sub-space, then we can still apply Theorem \ref{thm:concave}. However, the fact that both of them are piece-wise concave in 1D does not hold in higher dimension. Indeed, we can find a counterexample in 2D. Let $y=[0.5, 0.5], x=[0.4087, 0.7855], x'=[0.6285, 0.7551]$. Both $x$ and $x'$ are in the sub-space $yab$ and
\begin{align}
0.5&\overline{\Delta}_{\text{JML1}}(x,y)+0.5\overline{\Delta}_{\text{JML1}}(x',y) > \overline{\Delta}_{\text{JML1}}(0.5x+0.5x',y), \\
0.5&\overline{\Delta}_{\text{JML2}}(x,y)+0.5\overline{\Delta}_{\text{JML2}}(x',y) > \overline{\Delta}_{\text{JML2}}(0.5x+0.5x',y).
\end{align}

How do they perform empirically? In Table \ref{tb:jml}, we compare $\overline{\Delta}_{\text{JML1}}$ and $\overline{\Delta}_{\text{JML2}}$ on Cityscapes and PASCAL VOC using DL3-R18. We find $\overline{\Delta}_{\text{JML1}}$ is slightly superior to $\overline{\Delta}_{\text{JML2}}$.

\section{Semi-supervised Learning} \label{app:ssl}
We focus on augmentation-based semi-supervised learning approaches \cite{MixMatchBerthelotNeurIPS2019,AugSegZhaoCVPR2023}. Broadly speaking, in their approaches, the data consists of both supervised and unsupervised images, and a model $H$ is trained to minimize the following loss:
\begin{equation}
\mathcal{L}_{\text{CE, SSL}} = \eta_\text{S} \mathcal{L}_{\text{CE, S}} + \eta_\text{U} \mathcal{L}_{\text{CE, U}}
\end{equation}
such that $\mathcal{L}_{\text{CE, S}} = \text{CE}(H(\mathcal{A}_s(x_s)), L(\mathcal{A}_s(x_s)))$ and $\mathcal{L}_{\text{CE, U}}=\text{CE}(H(\mathcal{A}_u(x_u)), T(\mathcal{A}_u(x_u)))$. 

Here, $\mathcal{A}_s$ and $\mathcal{A}_u$ are data augmentations applied to labeled images $x_s$ and unlabeled images $x_u$, respectively. $L$ is the ground-truth label and $T$ is usually the exponential moving averaging of the model weights.

In JML-SSL, we introduce additional supervision from JML:
\begin{equation}
\mathcal{L}_{\text{JML, SSL}} = \theta_\text{S} \mathcal{L}_{\text{JML, S}} + \theta_\text{U} \mathcal{L}_{\text{JML, U}}
\end{equation}
where $\mathcal{L}_{\text{JML, S}} = \text{JML}(H(\mathcal{A}_s(x_s)), L(\mathcal{A}_s(x_s)))$ and $\mathcal{L}_{\text{JML, U}}=\text{JML}(H(\mathcal{A}_u(x_u)), T(\mathcal{A}_u(x_u)))$. 

Combining these yields
\begin{equation}
\mathcal{L}_\text{JML-SSL} = \lambda_\text{CE} \mathcal{L}_{\text{CE, SSL}} + \lambda_\text{JML} \mathcal{L}_{\text{JML, SSL}}.
\end{equation}

To further harness the power of soft labels, we follow the pipelines of AugSeg \cite{AugSegZhaoCVPR2023} and stack BLS on top of their augmentations: $\mathcal{A}_s(x_s) = \text{BLS}(\mathcal{A}_{s, \text{AugSeg}}(x_s))$ and $\mathcal{A}_u(x_u) = \text{BLS}(\mathcal{A}_{u, \text{AugSeg}}(x_u))$. Here, $\mathcal{A}_{s, \text{AugSeg}}$ and $\mathcal{A}_{u, \text{AugSeg}}$ denote augmentations applied to labeled and unlabeled images in AugSeg, respectively. We refer readers to their paper for more details.

As shown in Table \ref{tb:sslsota}, we achieve SOTA results on semi-supervised learning. In Table \ref{tb:sslab}, we ablate the role of each loss term on PASCAL VOC using DL3+-R101. 

\begin{table}[h]
\tiny
\centering
\caption{Comparing with SOTA segmentation SSL methods on PASCAL VOC of various splits using DL3+-R101. All results are mIoU (\%). Red: the best in a column.} \label{tb:sslsota}
\begin{tabular}{cccccc}
\hlineB{2}
Method & 92 & 183 & 366 & 732 & 1464 \\ \hline
Supervise & 43.92 & 59.10 & 65.88 & 70.87 & 74.97 \\
U2PL \cite{U2PLWangCVPR2022} & 67.98 & 69.15 & 73.66 & 76.16 & 79.49 \\
iMAS \cite{iMASZhaoCVPR2023} & 70.0 & 75.3 & 79.1 & 80.2 & 82.0 \\
UniMatch \cite{UniMatchYangCVPR2023} & 75.2 & 77.2 & 78.8 & 79.9 & 81.2 \\
AugSeg \cite{AugSegZhaoCVPR2023} & 71.09 & 75.45 & 78.80 & 80.37 & 81.36 \\
JML-SSL & \cellcolor{red!15}{76.43} & \cellcolor{red!15}{78.47} & \cellcolor{red!15}{79.16} & \cellcolor{red!15}{81.21} & \cellcolor{red!15}{82.27} \\ 
\hlineB{2}
\end{tabular}

\vspace{2mm}

\caption{Ablating each loss term on PASCAL VOC using DL3+-R101. All results are mIoU (\%). Red: the best in a column.}  \label{tb:sslab}
\begin{tabular}{ccc}
\hlineB{2}
Loss & 92 & 183 \\ \hline 
$\mathcal{L}_{\text{CE, S}}$ & 43.92 & 59.10 \\
+$\mathcal{L}_{\text{CE, U}}$ & 71.09 & 75.45 \\
+$\mathcal{L}_{\text{JML, S}}$ & 72.89 & 76.20 \\
+$\mathcal{L}_{\text{JML, U}}$ & 74.74 & 77.74 \\
+BLS & \cellcolor{red!15}{76.43} & \cellcolor{red!15}{78.47} \\
\hlineB{2}
\end{tabular}
\end{table}

\section{More Results on DeepGlobe Land} \label{app:land}
We provide additional results on DeepGlobe Land in Table \ref{tb:land}. Furthermore, we present the outcomes of BLS using DL3-R50 and KD using DL3-R18 across 5 folds of DeepGlobe Land in Table \ref{tb:landraw}. Given that DeepGlobe Land is a relatively small dataset, comprising only 803 images, results for each fold can vary significantly. Nevertheless, when observing the differences between CE and JML, the mean improvement exceeds twice the standard deviation.

\begin{table}[h]
\tiny
\centering
\caption{Results on DeepGlobe Land (\%). Best results within CE and JML groups are highlighted in red and green, respectively. Best results across CE and JML groups are underscored.} \label{tb:land}
\begin{tabular}{ccccccccc}
\hlineB{2}
Model & Metric & CE & CE-BLS & CE-KD & JML & JML-BLS & JML-KD \\ \hline
\multicolumn{1}{c}{\multirow{4}{*}{UNet-R50}}
& Acc $\uparrow$ & $86.89\pm0.87$ & \underline{\cellcolor{red!15}{$86.95\pm0.99$}} & - & $86.59\pm0.84$ & \cellcolor{green!15}{$86.87\pm0.79$} & - \\
& mIoU $\uparrow$ & $68.80\pm1.07$ & \cellcolor{red!15}{$68.86\pm0.81$} & - & $69.29\pm0.87$ & \underline{\cellcolor{green!15}{$69.73\pm0.89$}} & -  \\
& ECE $\downarrow$ & $1.77\pm0.52$ & \underline{\cellcolor{red!15}{$1.68\pm0.46$}} & - & $12.34\pm1.18$ & \cellcolor{green!15}{$11.68\pm1.55$} & - \\
& BECE $\downarrow$ & $21.93\pm0.83$ & \underline{\cellcolor{red!15}{$21.90\pm1.21$}} & - & $40.97\pm0.90$ & \cellcolor{green!15}{$39.22\pm1.06$} & - \\
\hline
\multirow{4}{*}{UNet-R18}                    
& Acc $\uparrow$ & $84.99\pm1.15$ & $85.28\pm0.97$ & \cellcolor{red!15}{$85.51\pm0.76$} & $85.46\pm0.56$ & $85.53\pm0.59$ & \underline{\cellcolor{green!15}{$85.78\pm0.66$}}  \\
& mIoU $\uparrow$ & $64.38\pm1.50$ & $64.93\pm0.79$ & \cellcolor{red!15}{$66.40\pm0.86$} & $66.95\pm0.25$ & $66.87\pm0.30$ &\underline{\cellcolor{green!15}{$67.62\pm0.27$}} \\ 
& ECE $\downarrow$ & \underline{\cellcolor{red!15}{$1.05\pm0.48$}} & $1.47\pm0.60$ & $1.38\pm0.61$ & $10.32\pm0.862$ & \cellcolor{green!15}{$9.15\pm0.79$} & $9.83\pm1.53$ \\
& BECE $\downarrow$ & $23.23\pm0.54$ & \underline{\cellcolor{red!15}{$22.32\pm0.75$}} & $25.23\pm0.75$ & $40.17\pm0.48$ & \cellcolor{green!15}{$38.13\pm0.63$} & $39.24\pm0.94$ \\
\hline
\multirow{4}{*}{UNet-MB2}
& Acc $\uparrow$ & $84.93\pm1.11$ & $84.92\pm1.22$ & \cellcolor{red!15}{$85.64\pm1.02$} & $85.54\pm0.88$ & $85.82\pm0.66$ & \underline{\cellcolor{green!15}{$85.90\pm0.73$}}  \\
& mIoU $\uparrow$  & $64.38\pm1.21$ & $64.03\pm1.73$ & \cellcolor{red!15}{$66.29\pm1.39$} & $66.63\pm0.73$ & $67.04\pm0.77$ & \underline{\cellcolor{green!15}{$67.46\pm0.99$}} \\ 
& ECE $\downarrow$ & $1.64\pm0.75$ & $2.04\pm0.74$ & \underline{\cellcolor{red!15}{$1.52\pm0.34$}} & $10.22\pm1.37$ & $10.01\pm0.92$ & \cellcolor{green!15}{$9.67\pm0.86$} \\
& BECE $\downarrow$ & $21.15\pm0.89$ & \underline{\cellcolor{red!15}{$20.26\pm0.74$}} & $24.76\pm1.17$ & $40.40\pm0.89$ & $39.28\pm0.27$ & \cellcolor{green!15}{$38.82\pm1.12$} \\
\hlineB{2}
\end{tabular}

\vspace{2mm}

\caption{Results of BLS using DL3-R50 and KD using DL3-R18 on 5 folds of DeepGlobe Land. All results are mIoU (\%).} \label{tb:landraw}
\begin{tabular}{ccccccccc}
\hlineB{2}
Method & Loss & 0 & 1 & 2 & 3 & 4 & $\mu \pm \sigma$ \\ \hline
\multicolumn{1}{c}{\multirow{3}{*}{BLS}}
& CE & $67.56$ & $69.82$ & $69.59$ & $67.88$ & $69.46$ & $68.86\pm0.81$  \\
& JML  & \cellcolor{red!15}{$68.13$} & \cellcolor{red!15}{$70.48$} & \cellcolor{red!15}{$70.60$} & \cellcolor{red!15}{$69.49$} & \cellcolor{red!15}{$69.93$} & \cellcolor{red!15}{$69.73\pm0.89$} \\
& Diff & $0.57$ & $0.66$ & $1.01$ & $0.61$ & $1.47$ & $0.86\pm0.34$ \\ \hline
\multicolumn{1}{c}{\multirow{3}{*}{KD}}
& CE & 64.93 & 67.47 & 66.22 & 66.37 & 67.00 & $66.40\pm0.86$  \\
& JML  & \cellcolor{red!15}{67.20} & \cellcolor{red!15}{67.96} & \cellcolor{red!15}{67.50} & \cellcolor{red!15}{67.62} & \cellcolor{red!15}{67.84} & \cellcolor{red!15}{$67.62\pm0.27$} \\
& Diff & 2.27 & 0.49 & 1.28 & 1.25 & 0.84 & $1.22\pm0.59$ \\
\hlineB{2}
\end{tabular}
\end{table}

\section{More Results on Calibration} \label{app:calibration}
CE is a proper scoring rule \cite{CalibrationGuoICML2017}, while the soft Dice loss is not \cite{TheoreticalBertelsMIA2021}. Although the soft Dice loss can significantly increase a model's segmentation performance, it can hurt model calibration \cite{ConfidenceMehrtashTMI2020,TheoreticalBertelsMIA2021}. We observe the same phenomenon with JML and demonstrate a potential solution by training the model with soft labels as delineated in Section \ref{sec:calibration}. Calibration of the model can be further improved using post-hoc \cite{CalibrationGuoICML2017,LTSDingICCV2021,Meta-CalMaICML2021,OnPopordanoskaMICCAI2021,PostRousseauISBI2021} and trainable calibration methods \cite{MMCEKumarICML2018,KDE-XEPopordanoskaNeurIPS2022}. However, these methods extend beyond the scope of this paper.

Besides, we find that models trained with JML, despite having poorer top-class calibration as quantified by ECE, actually attain superior multi-class calibration as evaluated by the static calibration error (SCE) \cite{MeasuringNixonCVPRWorkshop2019}. In Table \ref{tb:calibration}, we report both ECE and SCE on Cityscapes and PASCAL VOC using DL3-R101\footnote{In our code, there was an error in the computation of SCE. As a result, values reported here differ from those in our earlier arXiv version.}. It is crucial to highlight that models trained with soft labels may exhibit worse SCE. On one side, soft labels provide an additional regularization during training, which is advantageous for calibration \cite{WhenMullerNeurIPS2019}. However, on the flip side, soft labels can represent unrealistic distributions, potentially detrimental for multi-class calibration.

\begin{table}[h]
\tiny
\centering
\caption{Calibration results on Cityscapes and PASCAL VOC using DL3-R101. Best results within CE and JML groups are highlighted in red and green, respectively. Best results across CE and JML groups are underscored.} \label{tb:calibration}
\begin{tabular}{ccccccc}
\hlineB{2}
Dataset & Metric & CE & CE-BLS & JML & JML-BLS \\ \hline
\multicolumn{1}{c}{\multirow{4}{*}{CS}}
& ECE (\%) & \underline{\cellcolor{red!15}{$0.76\pm0.05$}} & $0.97\pm0.04$ & $2.74\pm0.03$ & \cellcolor{green!15}{$2.09\pm0.02$} \\
& SCE (\textperthousand) & \cellcolor{red!15}{$2.18\pm0.02$} & $2.46\pm0.04$ & $2.08\pm0.01$ & \underline{\cellcolor{green!15}{$2.02\pm0.01$}} \\
& BECE (\%) & $16.14\pm0.22$ & \underline{\cellcolor{red!15}{$11.59\pm0.12$}} & $30.78\pm0.19$ & \cellcolor{green!15}{$21.20\pm0.07$} \\
& BSCE (\textperthousand) & \cellcolor{red!15}{$23.81\pm0.03$} & $23.89\pm0.04$ & $23.40\pm0.01$ & \underline{\cellcolor{green!15}{$23.23\pm0.01$}} \\
\hline
\multicolumn{1}{c}{\multirow{4}{*}{VOC}}
& ECE (\%) & $2.33\pm0.03$ & \underline{\cellcolor{red!15}{$2.01\pm0.04$}} & $3.97\pm0.10$ & \cellcolor{green!15}{$3.20\pm0.04$} \\
& SCE (\%) & \cellcolor{red!15}{$2.57\pm0.02$} & $2.59\pm0.04$ & $2.40\pm0.06$ & \underline{\cellcolor{green!15}{$2.35\pm0.03$}} \\
& BECE (\%) & $20.54\pm0.12$ & \underline{\cellcolor{red!15}{$17.25\pm0.14$}} & $32.70\pm0.09$ & \cellcolor{green!15}{$22.30\pm0.02$} \\
& BSCE (\%) & $20.94\pm0.04$ & \cellcolor{red!15}{$20.89\pm0.01$} & \underline{\cellcolor{green!15}{$20.58\pm0.01$}} & $24.81\pm0.10$ \\
\hlineB{2}
\end{tabular}
\end{table}

\section{Figures} \label{app:figures}
Figure \ref{fig:bls_k}: The best $\epsilon$ and mIoU (\%) for different $k$ on PASCAL VOC using DL3-R18.

Figure \ref{fig:bls_epsilon}: Effects of $\epsilon$ on PASCAL VOC using DL3-R101/50/18.

Figure \ref{fig:code}: Code to compute active classes.

Figure \ref{fig:cityscapes_real}: Qualitative results on Cityscapes.

Figure \ref{fig:voc_real}: Qualitative results on PASCAL VOC.

\begin{figure}[h]
\centering
\includegraphics[width=0.6\textwidth]{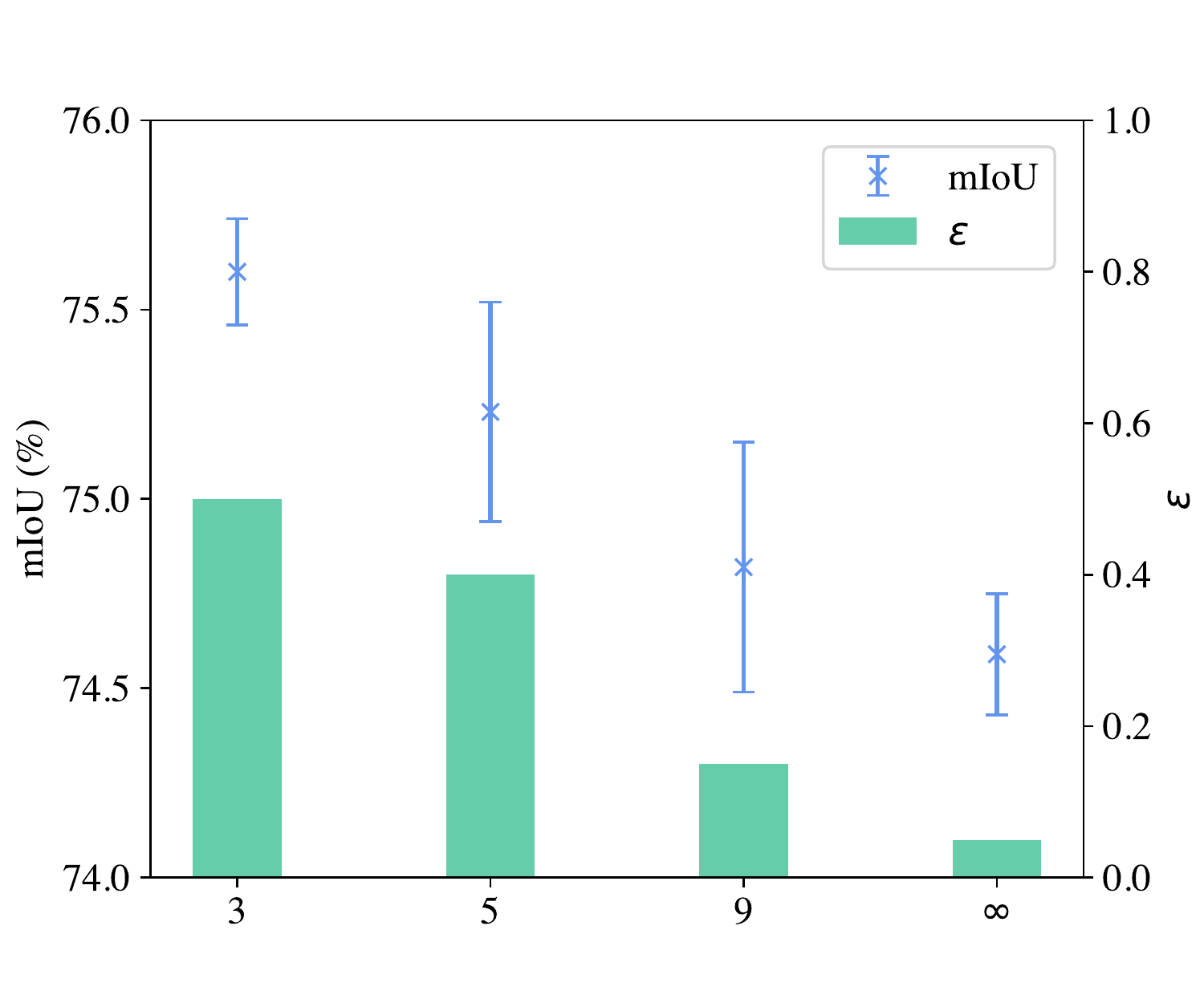}
\caption{The best $\epsilon$ and mIoU (\%) for different $k$ on PASCAL VOC using DL3-R18.}
\label{fig:bls_k}
\end{figure}

\begin{figure*}[h]
\centering
\subfloat[DL3-R101 mIoU (\%)]{
\includegraphics[width=.45\textwidth]{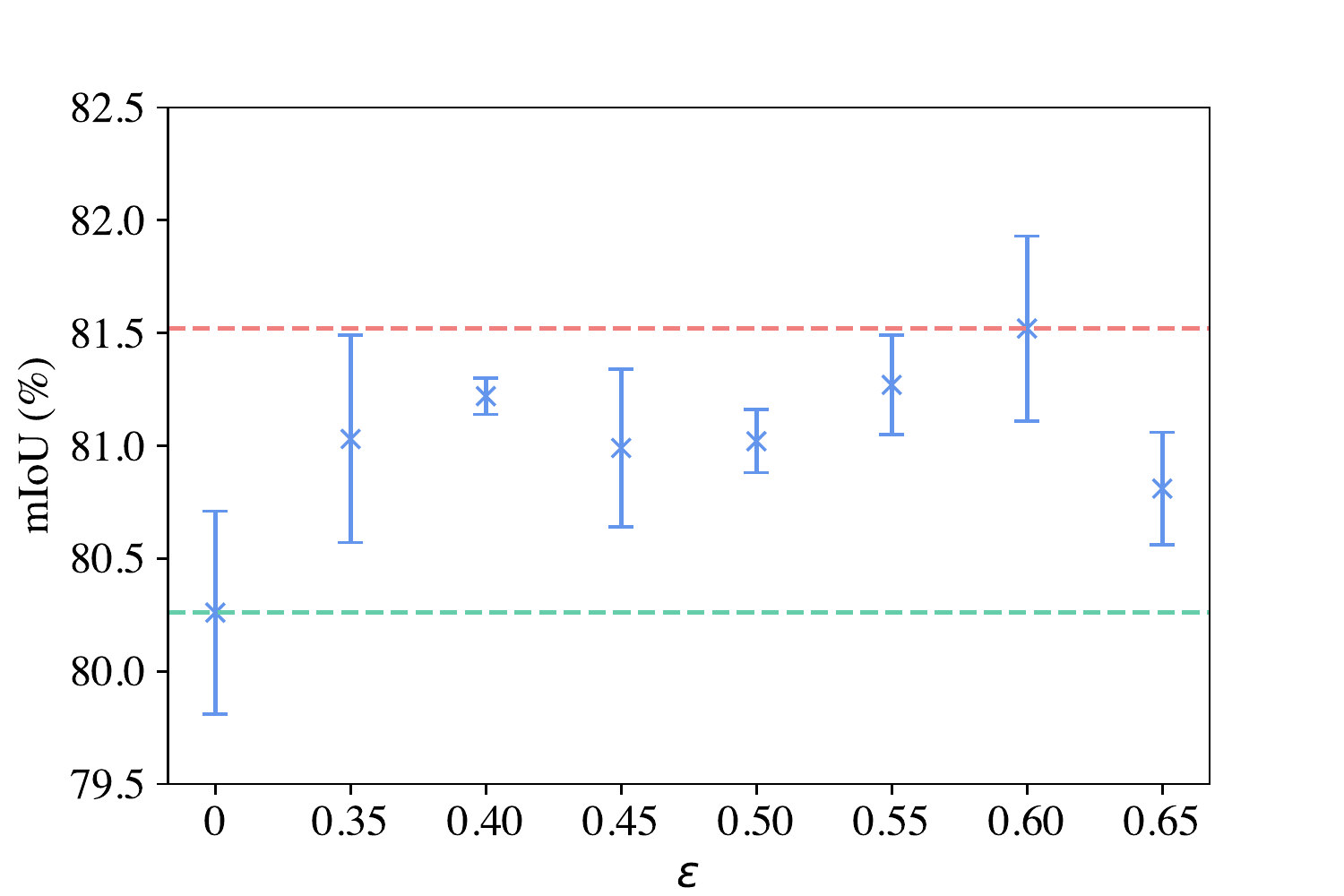}}
\subfloat[DL3-R101 ECE (\%)]{
\includegraphics[width=.45\textwidth]{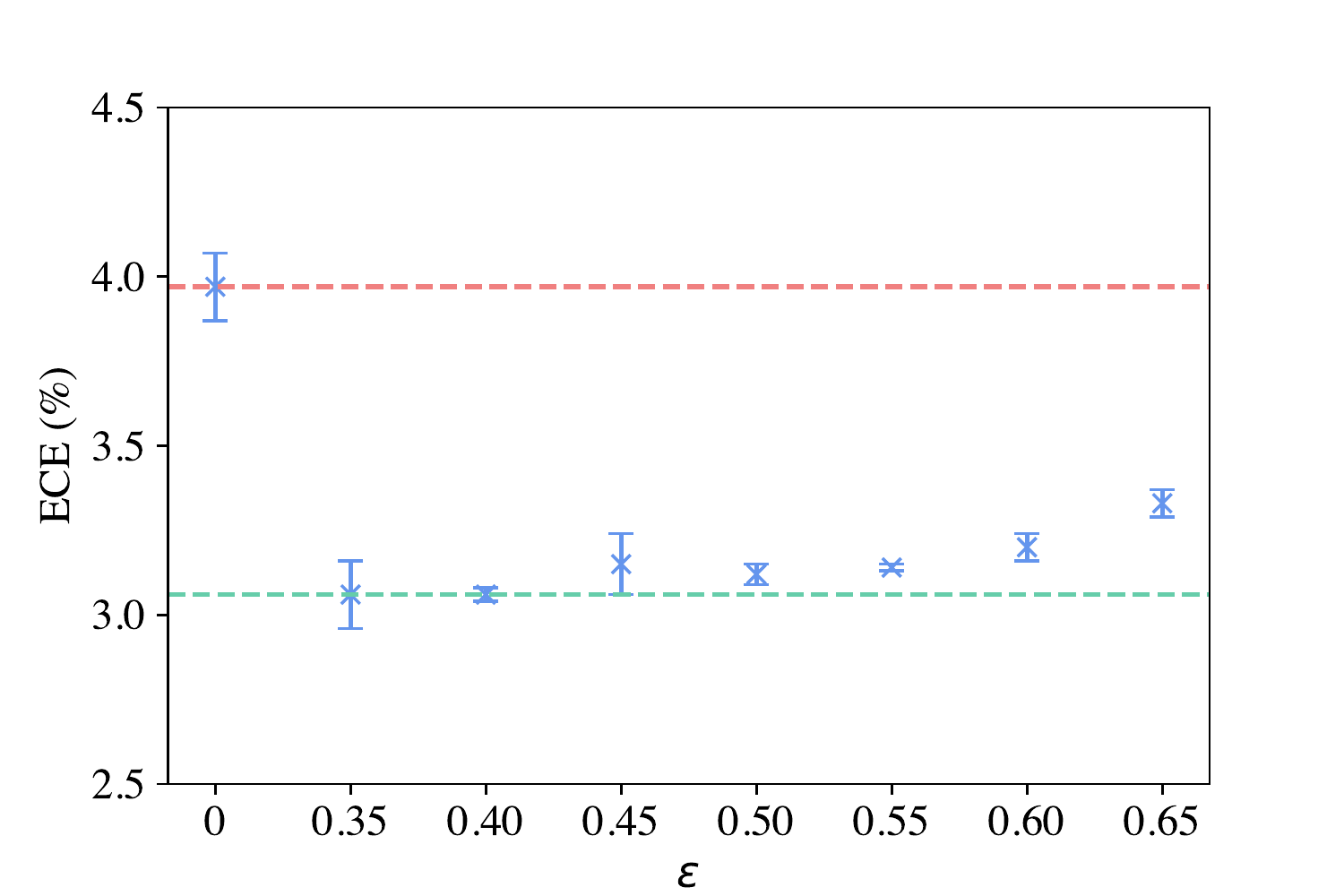}} \\
\subfloat[DL3-R50 mIoU (\%)]{
\includegraphics[width=.45\textwidth]{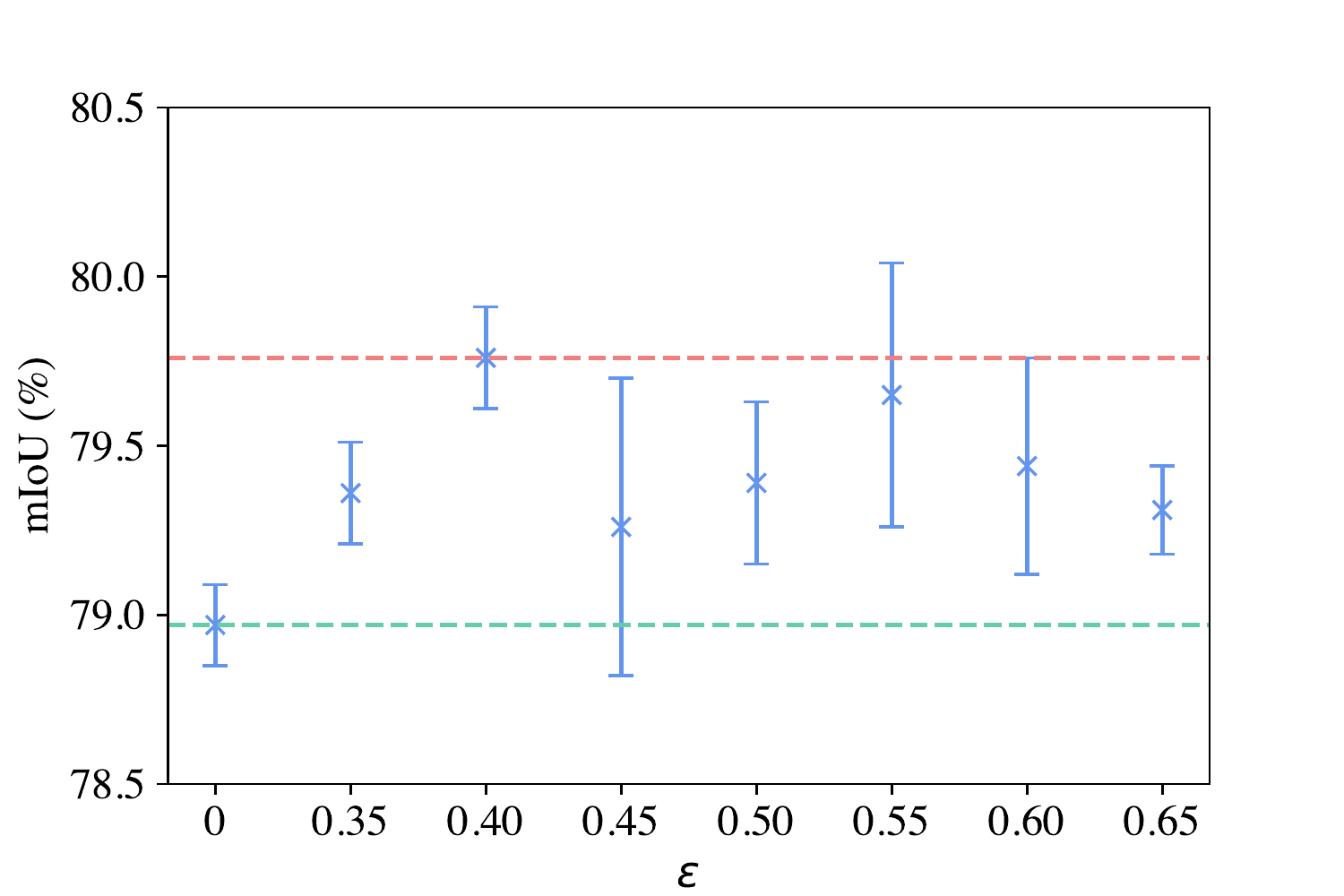}}
\subfloat[DL3-R50 ECE (\%)]{
\includegraphics[width=.45\textwidth]{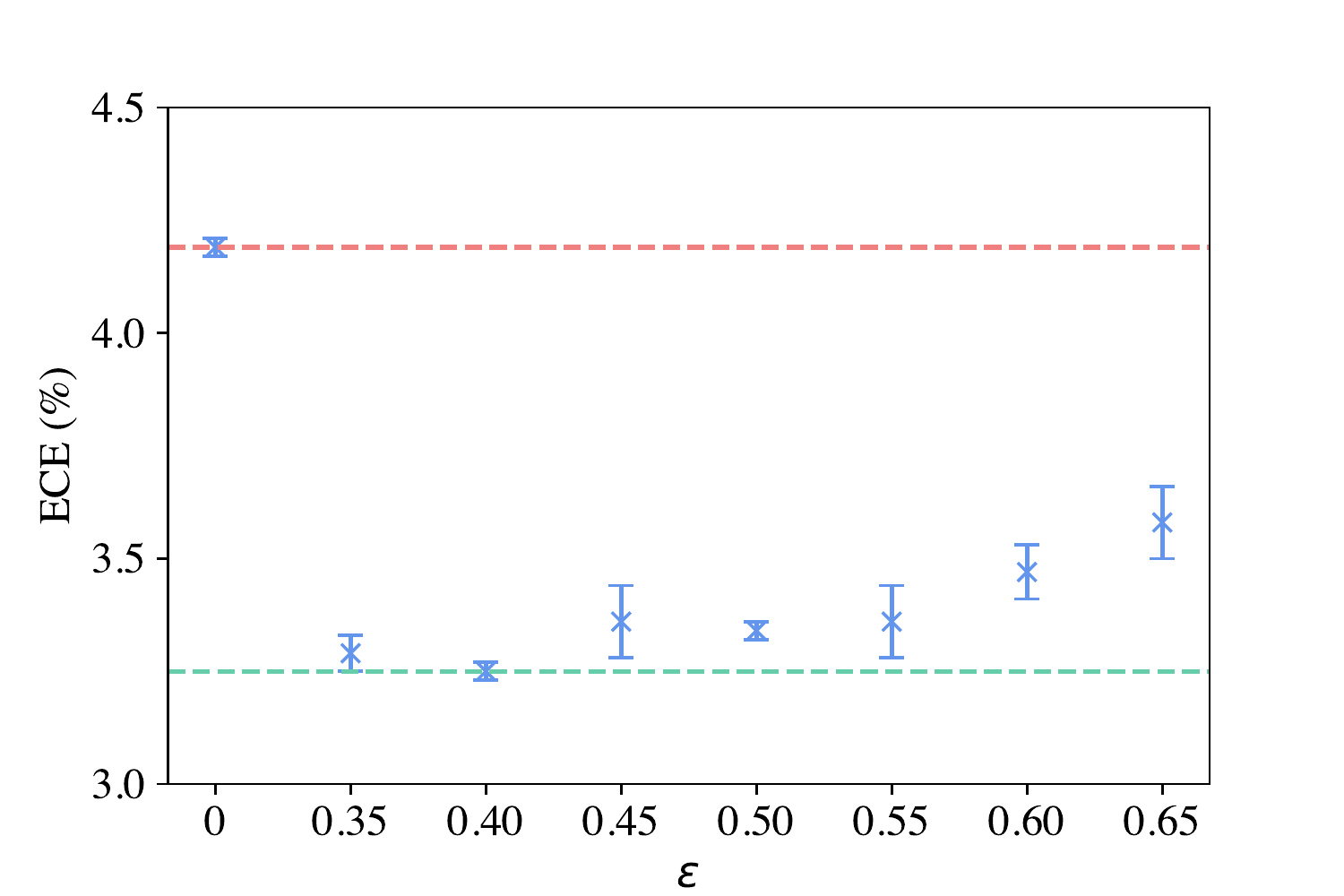}} \\
\subfloat[DL3-R18 mIoU (\%)]{
\includegraphics[width=.45\textwidth]{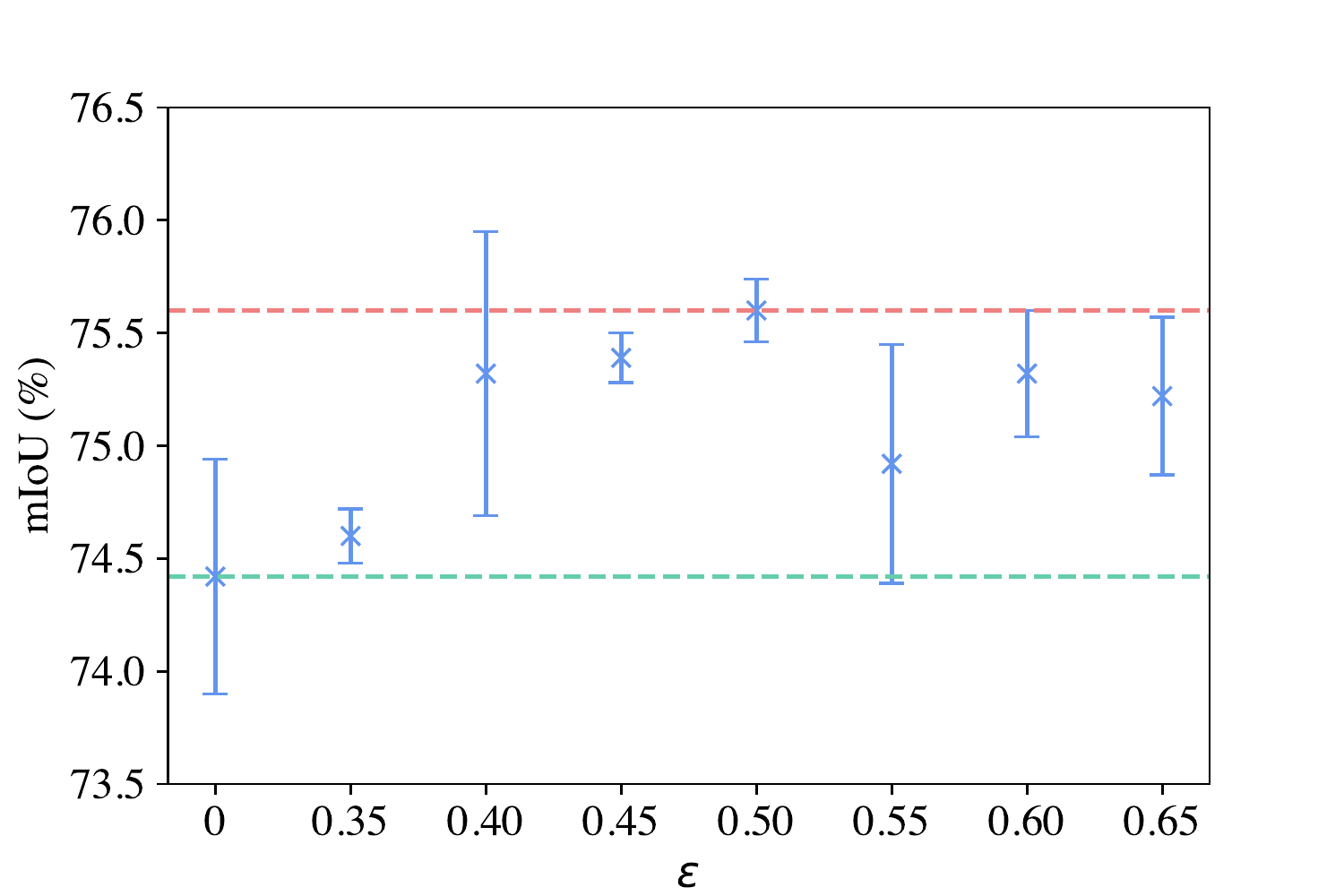}}
\subfloat[DL3-R18 ECE (\%)]{
\includegraphics[width=.45\textwidth]{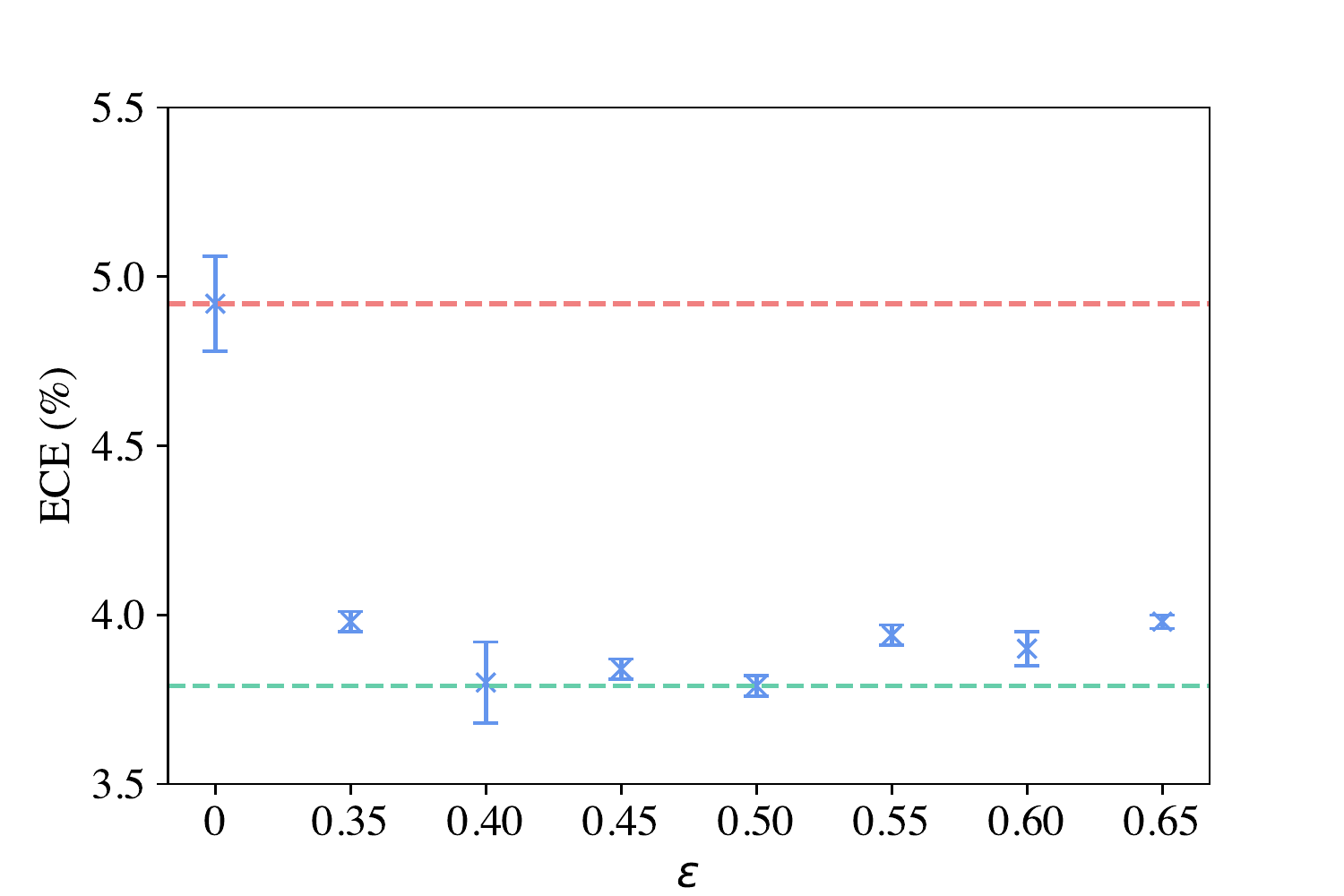}}
\caption{Effects of $\epsilon$ on PASCAL VOC using DL3-R101/50/18. $\epsilon=0$ is the baseline (no smoothing). The highest and the lowest mean values are highlighted in red and green horizontal lines, respectively.}
\label{fig:bls_epsilon}
\end{figure*}

\begin{figure*}
\centering
    \lstinputlisting[language=Python]{code.py}
    \caption{Code to compute active classes. Please refer to our codebase for more details.}
    \label{fig:code}
\end{figure*}

\begin{figure*}[b]
\captionsetup[subfigure]{labelformat=empty}
\centering
\subfloat{
\includegraphics[width=.22\textwidth]{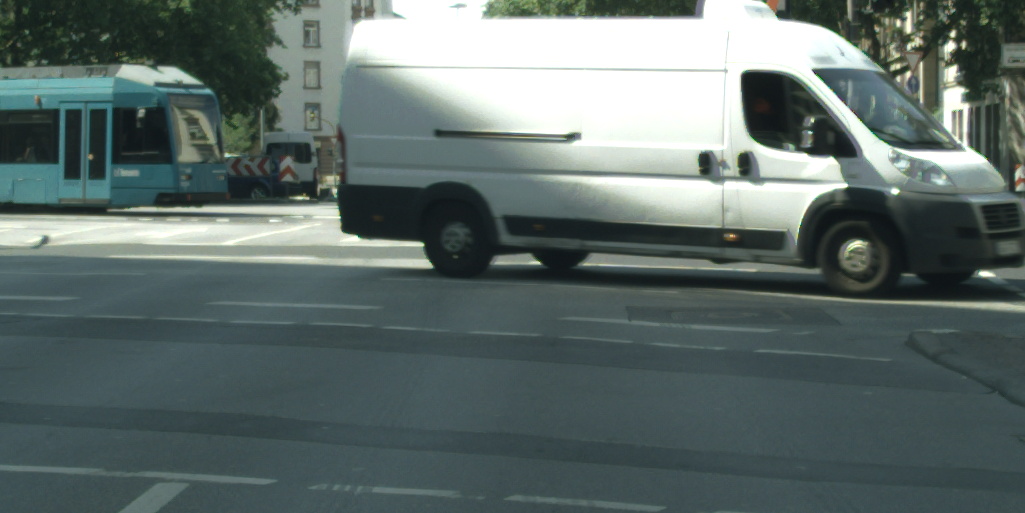}}
\subfloat{
\includegraphics[width=.22\textwidth]{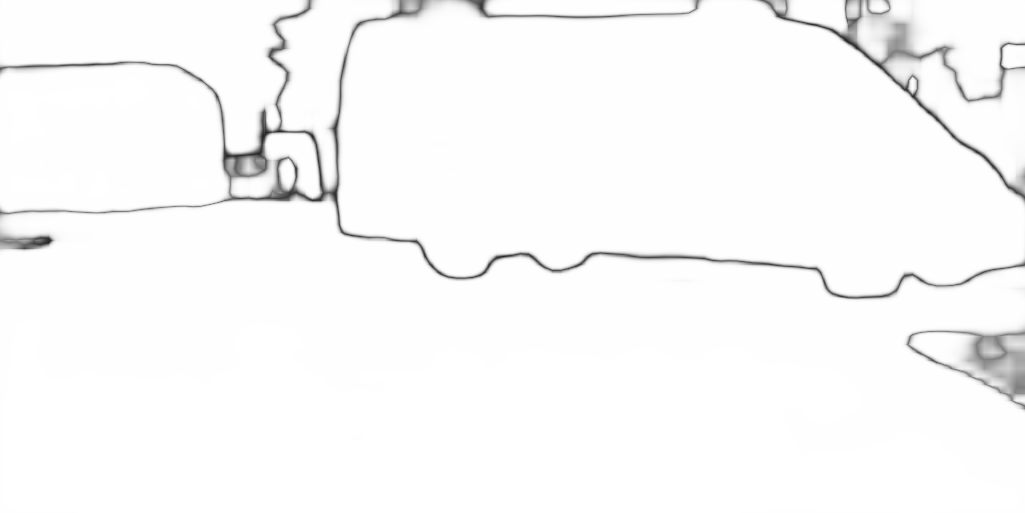}}
\subfloat{
\includegraphics[width=.22\textwidth]{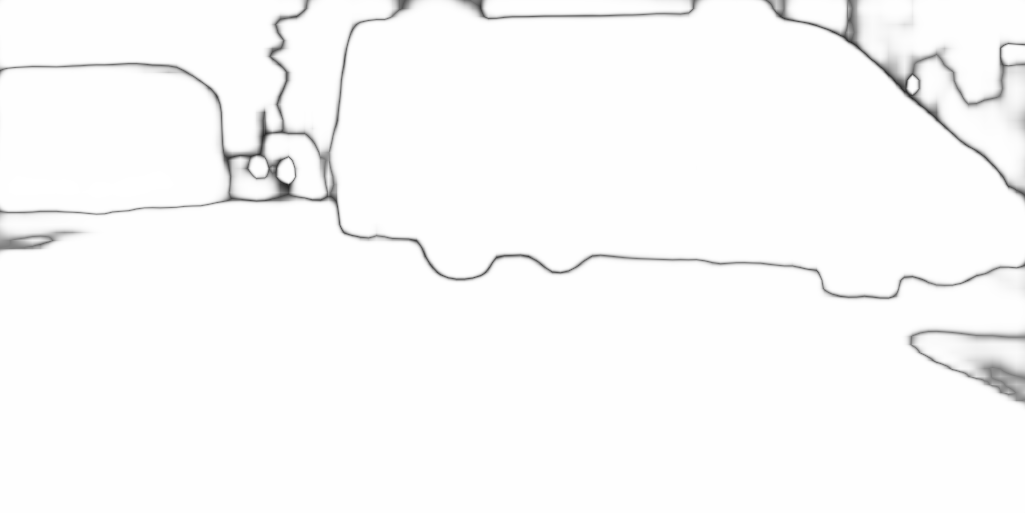}}
\subfloat{
\includegraphics[width=.22\textwidth]{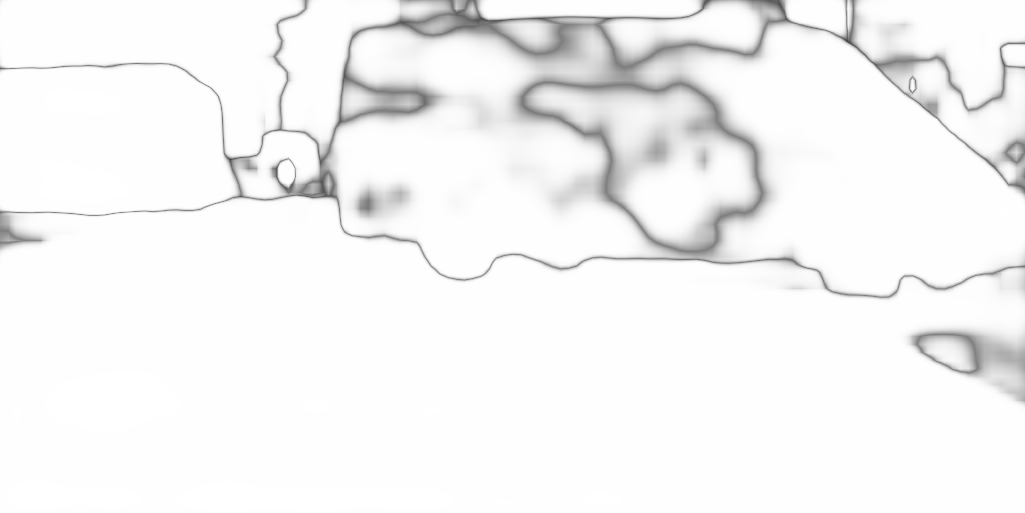}} \\
\subfloat{
\includegraphics[width=.22\textwidth]{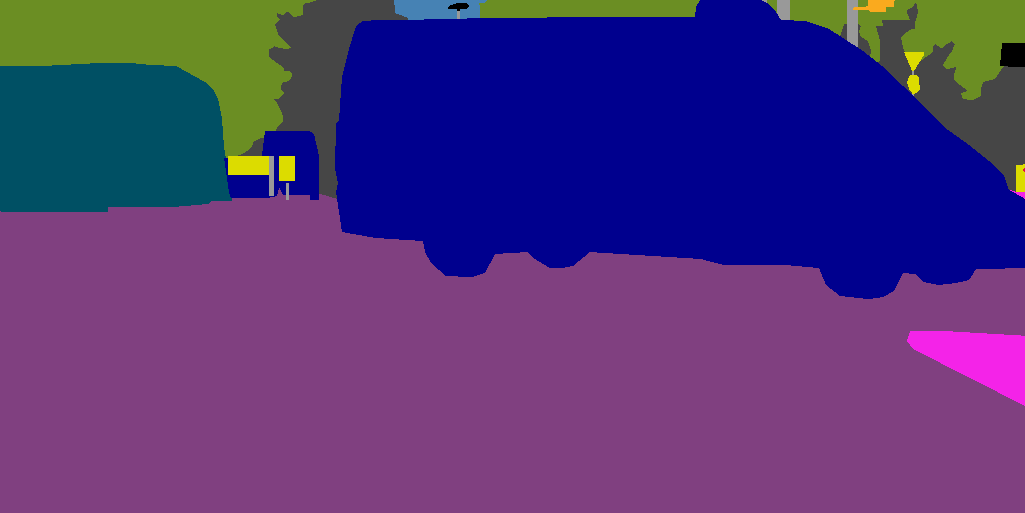}}
\subfloat{
\includegraphics[width=.22\textwidth]{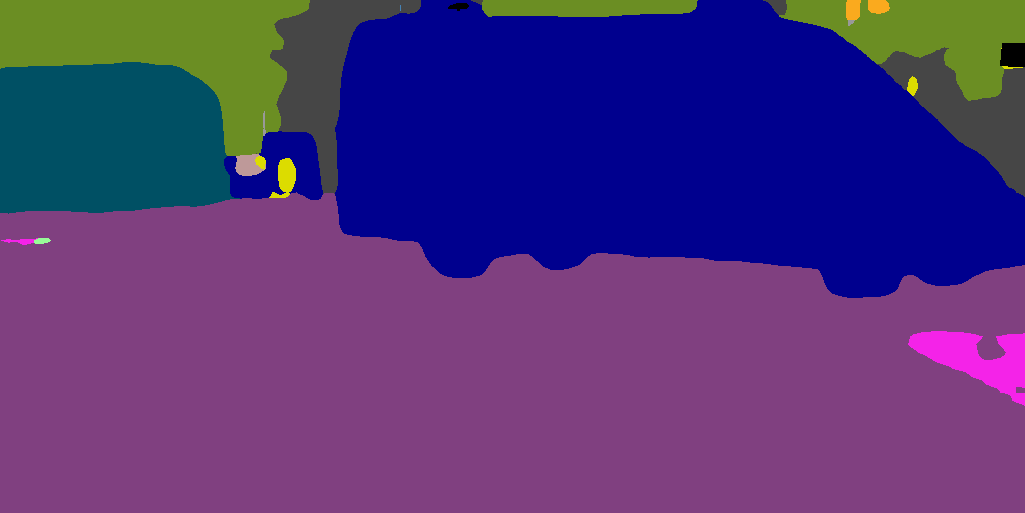}}
\subfloat{
\includegraphics[width=.22\textwidth]{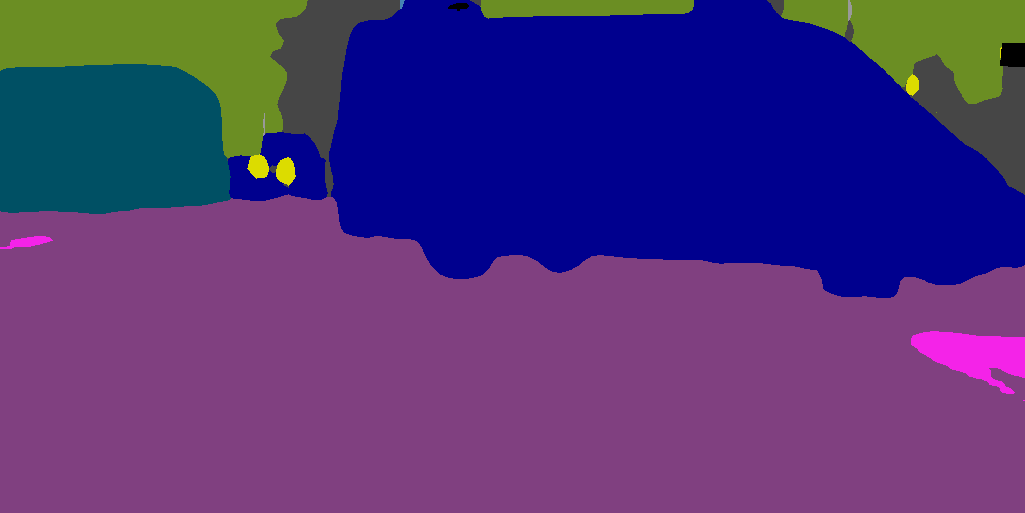}}
\subfloat{
\includegraphics[width=.22\textwidth]{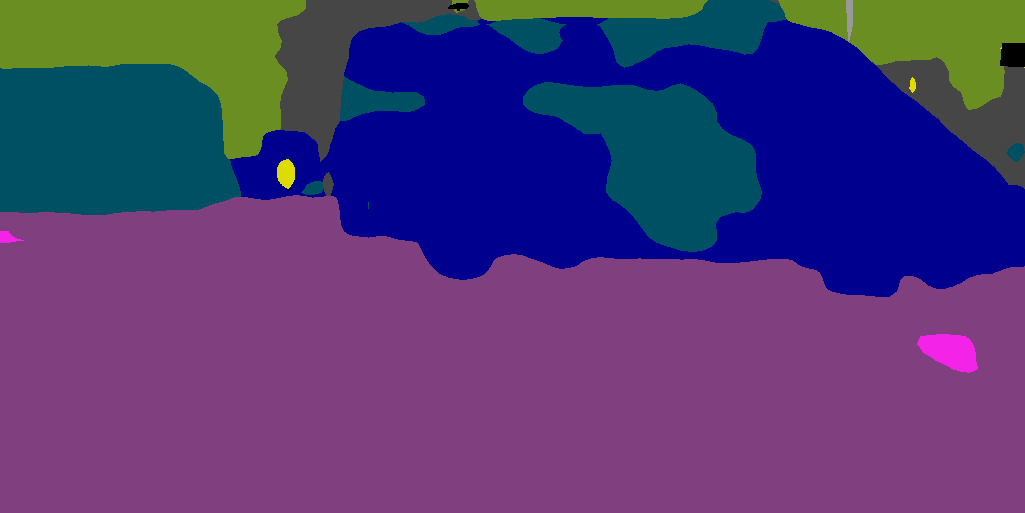}} \\
\subfloat{
\includegraphics[width=.22\textwidth]{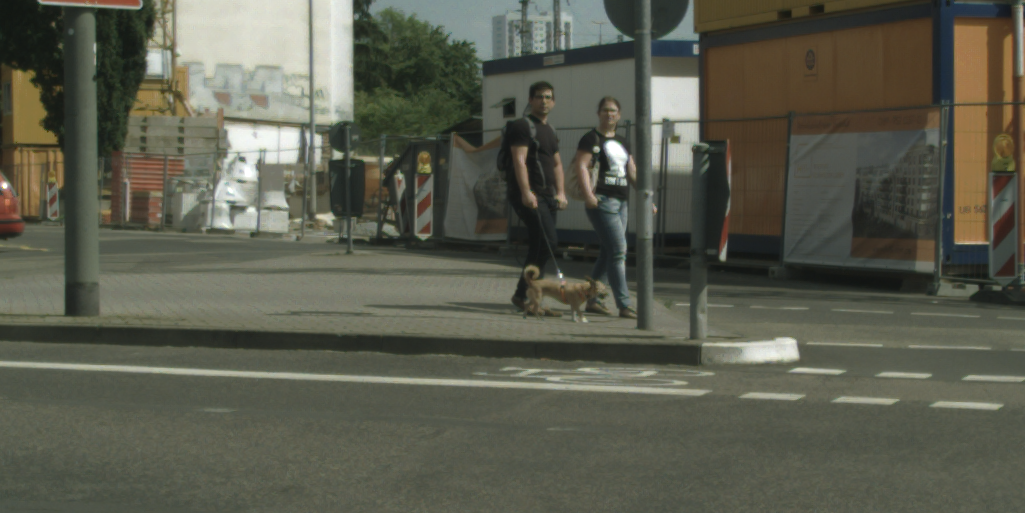}}
\subfloat{
\includegraphics[width=.22\textwidth]{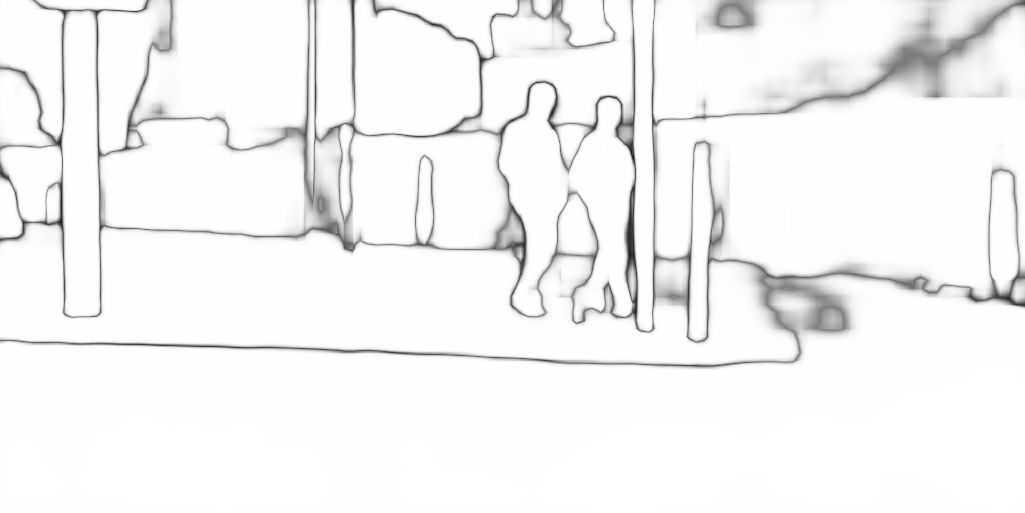}}
\subfloat{
\includegraphics[width=.22\textwidth]{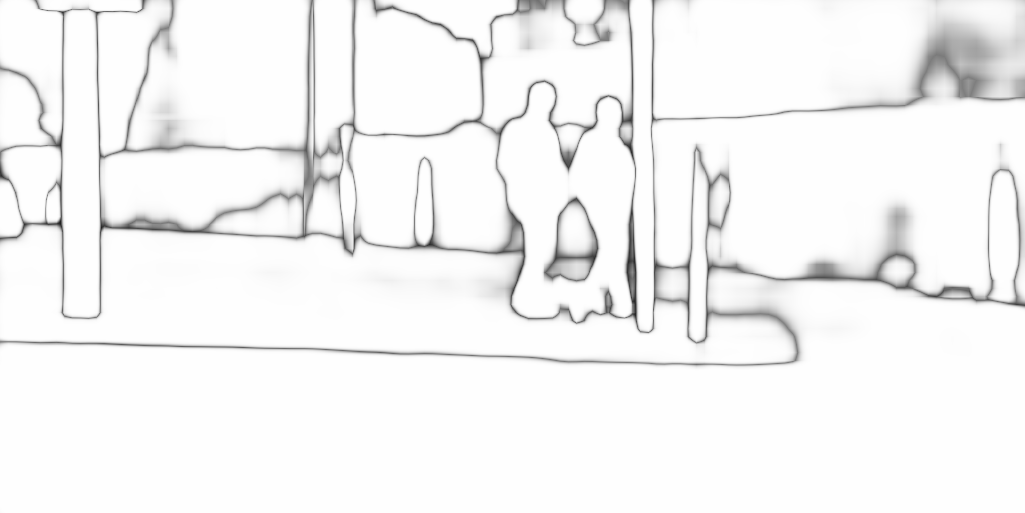}}
\subfloat{
\includegraphics[width=.22\textwidth]{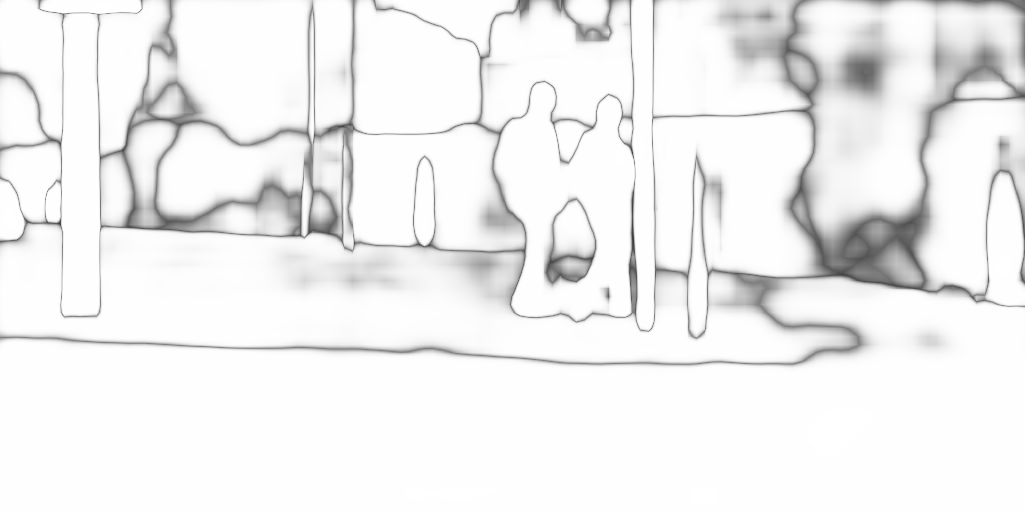}} \\
\subfloat{
\includegraphics[width=.22\textwidth]{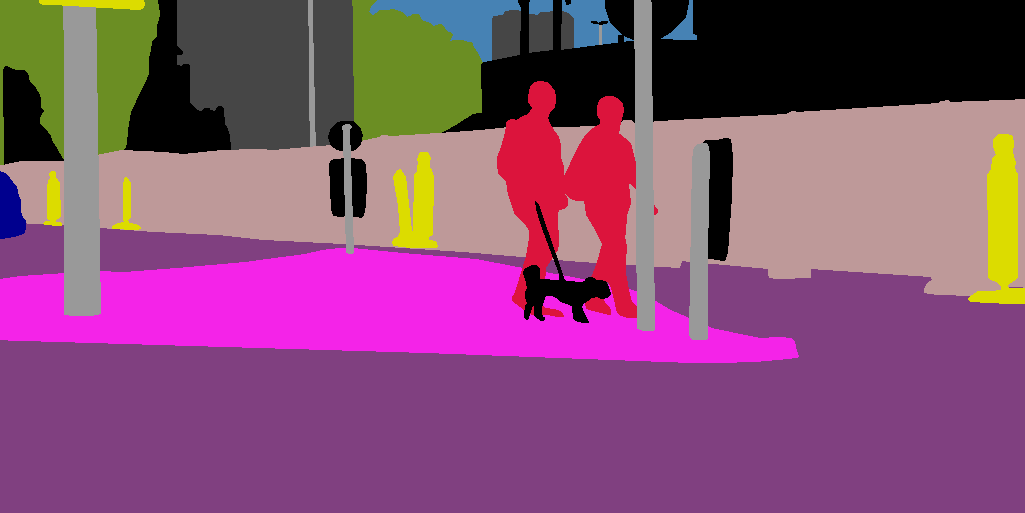}}
\subfloat{
\includegraphics[width=.22\textwidth]{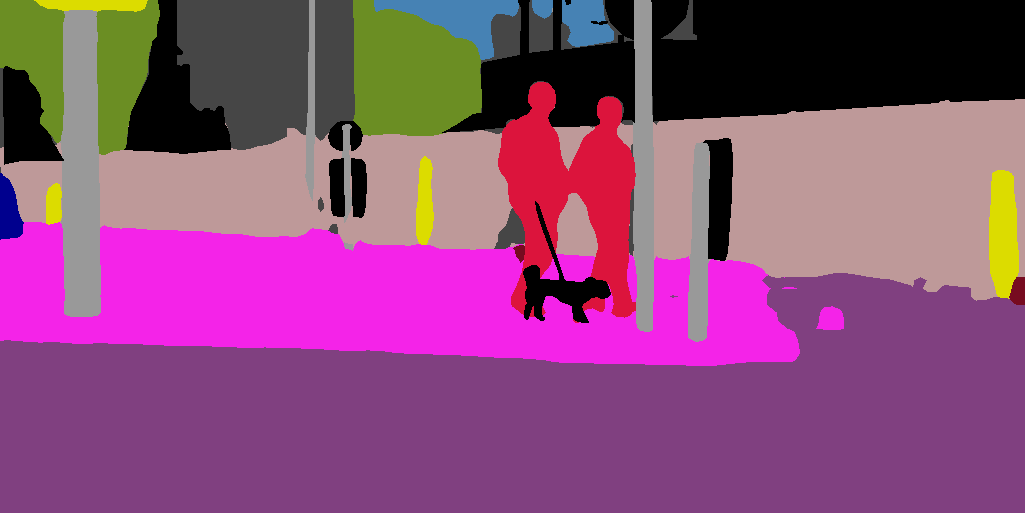}}
\subfloat{
\includegraphics[width=.22\textwidth]{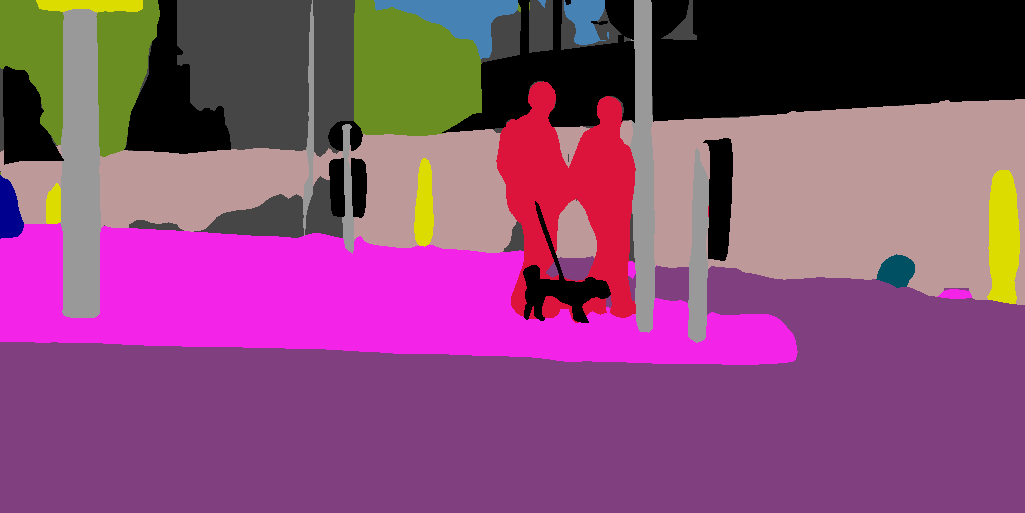}}
\subfloat{
\includegraphics[width=.22\textwidth]{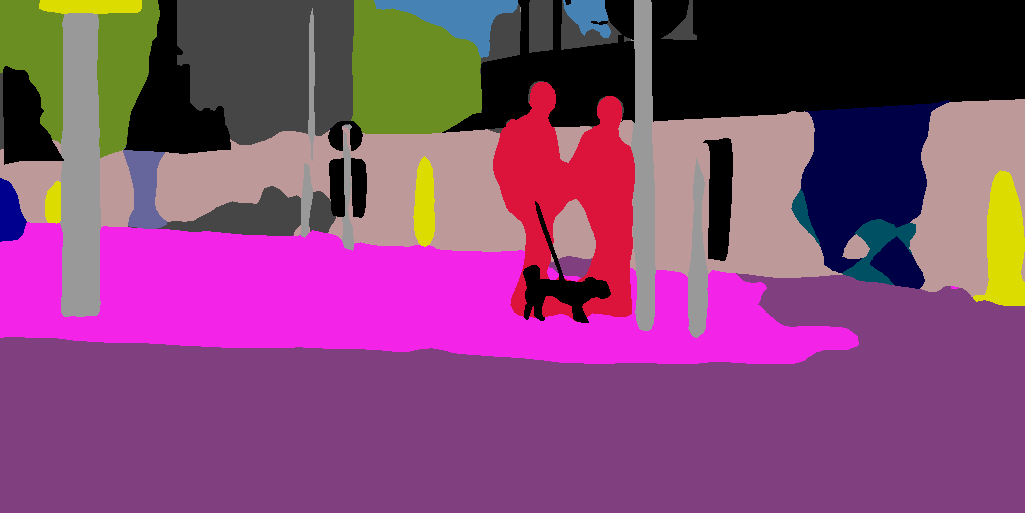}} \\
\subfloat{
\includegraphics[width=.22\textwidth]{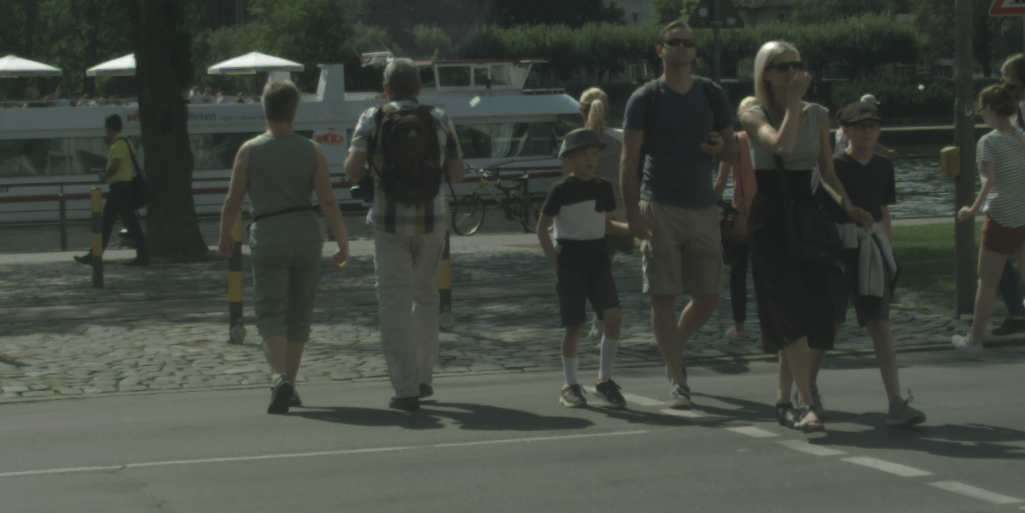}}
\subfloat{
\includegraphics[width=.22\textwidth]{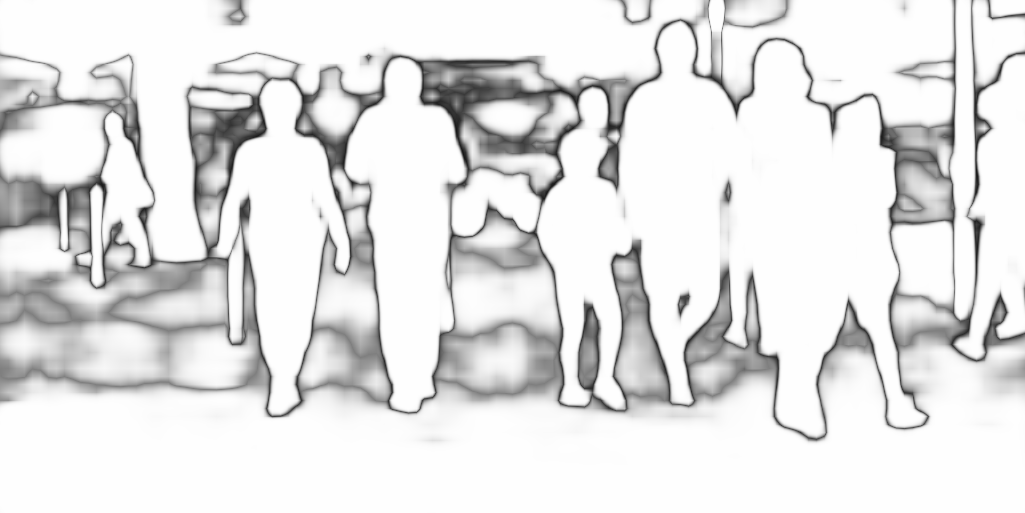}}
\subfloat{
\includegraphics[width=.22\textwidth]{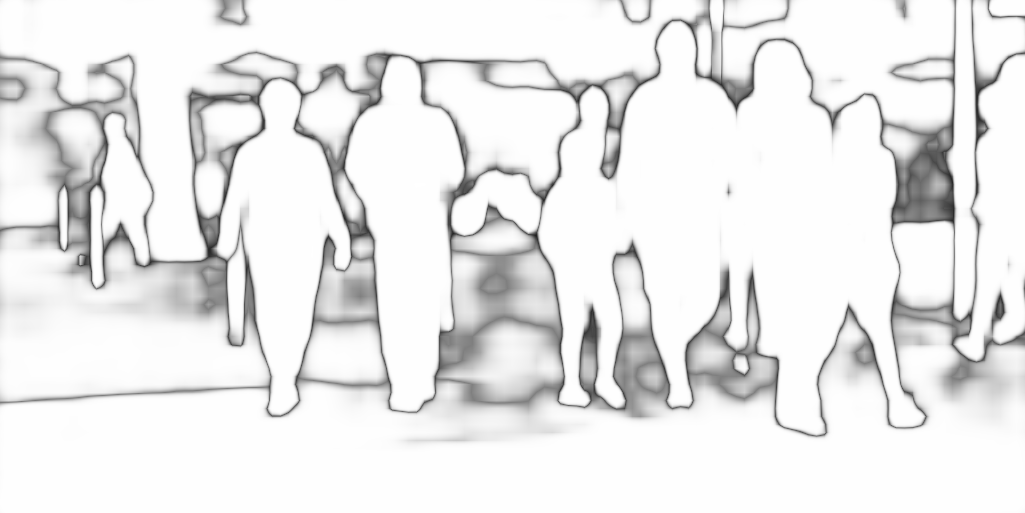}}
\subfloat{
\includegraphics[width=.22\textwidth]{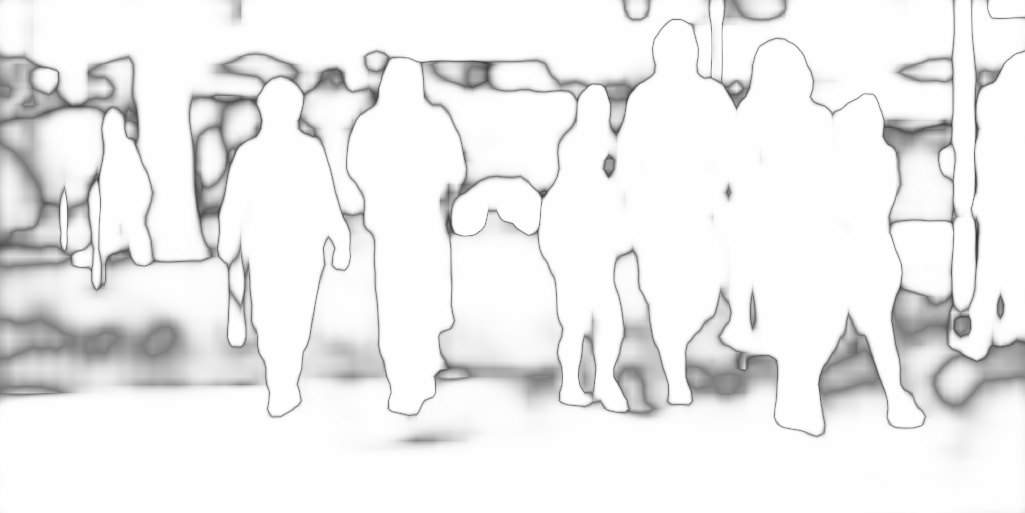}} \\
\subfloat[Image and GT]{
\includegraphics[width=.22\textwidth]{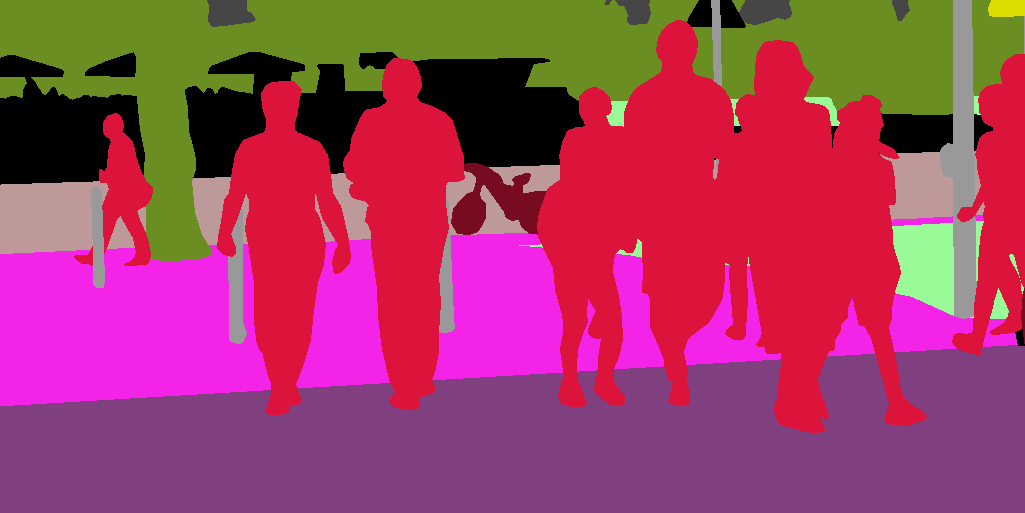}}
\subfloat[Teacher]{
\includegraphics[width=.22\textwidth]{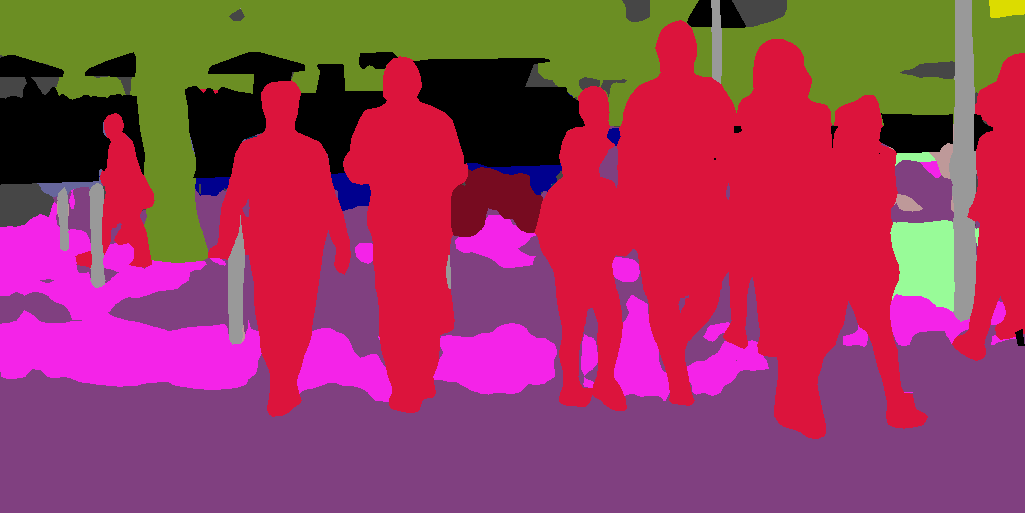}}
\subfloat[Student with KD]{
\includegraphics[width=.22\textwidth]{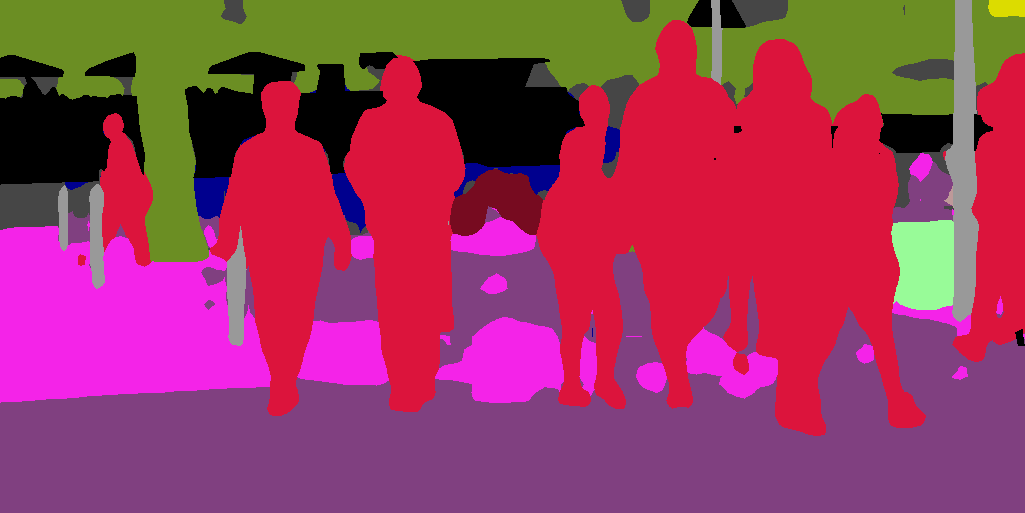}}
\subfloat[Student without KD]{
\includegraphics[width=.22\textwidth]{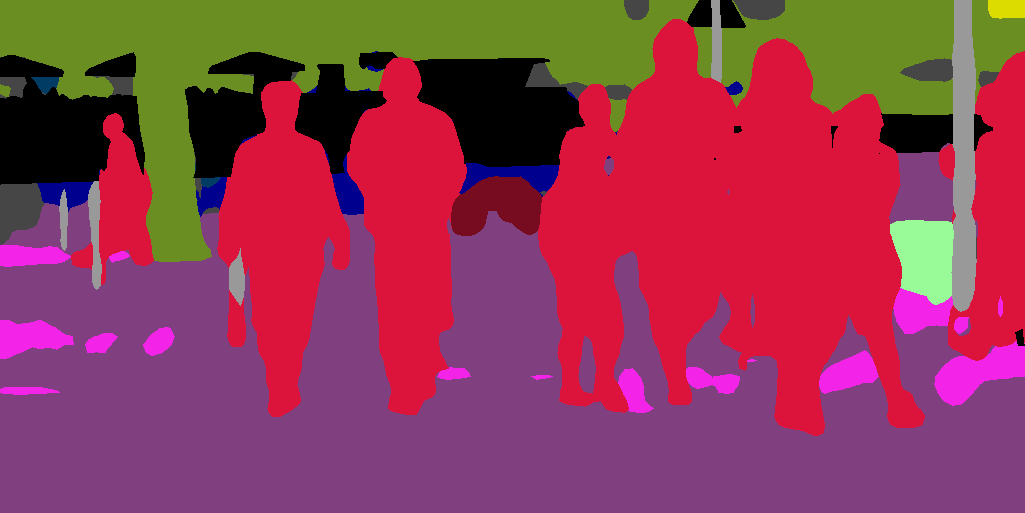}} 
\caption{Qualitative results on Cityscapes. Confidence maps are in black and white. Predictions are in RGB.}
\label{fig:cityscapes_real}
\end{figure*}

\begin{figure*}[]
\captionsetup[subfigure]{labelformat=empty}
\centering
\subfloat{
\includegraphics[width=.22\textwidth]{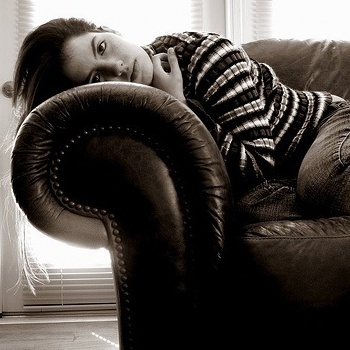}}
\subfloat{
\includegraphics[width=.22\textwidth]{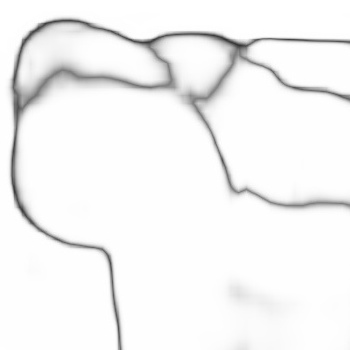}}
\subfloat{
\includegraphics[width=.22\textwidth]{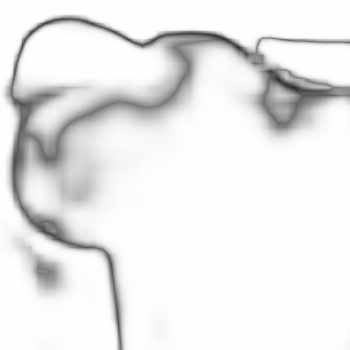}}
\subfloat{
\includegraphics[width=.22\textwidth]{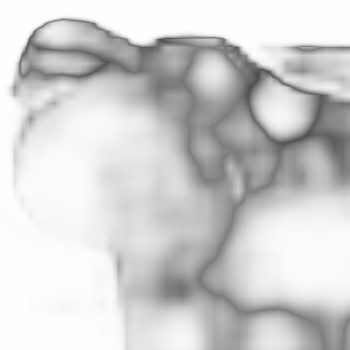}} \\
\subfloat{
\includegraphics[width=.22\textwidth]{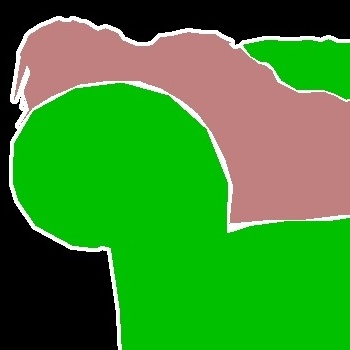}}
\subfloat{
\includegraphics[width=.22\textwidth]{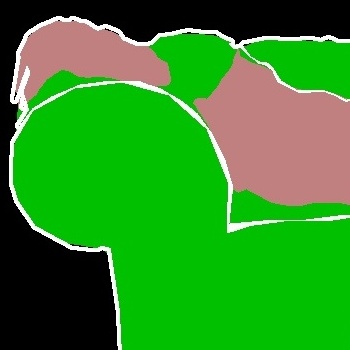}}
\subfloat{
\includegraphics[width=.22\textwidth]{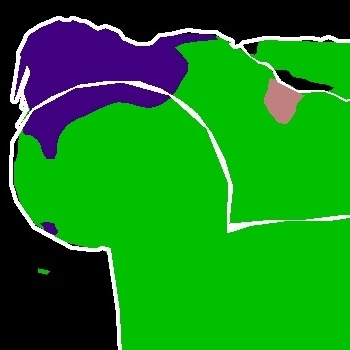}}
\subfloat{
\includegraphics[width=.22\textwidth]{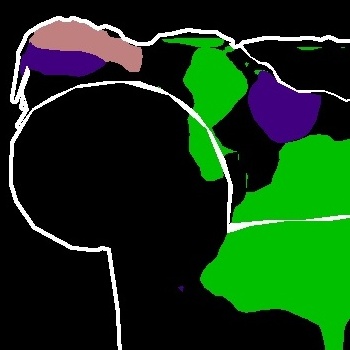}} \\
\subfloat{
\includegraphics[width=.22\textwidth]{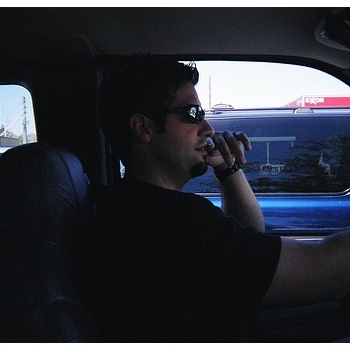}}
\subfloat{
\includegraphics[width=.22\textwidth]{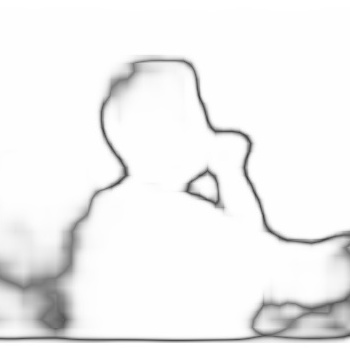}}
\subfloat{
\includegraphics[width=.22\textwidth]{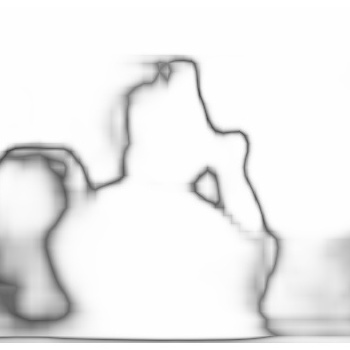}}
\subfloat{
\includegraphics[width=.22\textwidth]{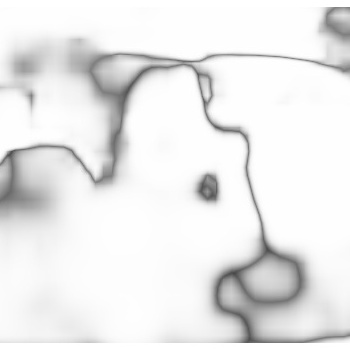}} \\
\subfloat{
\includegraphics[width=.22\textwidth]{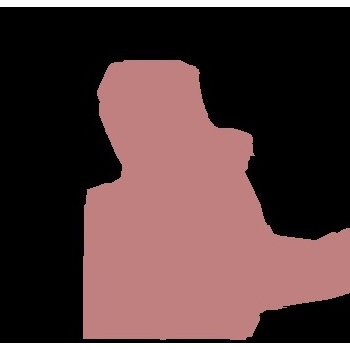}}
\subfloat{
\includegraphics[width=.22\textwidth]{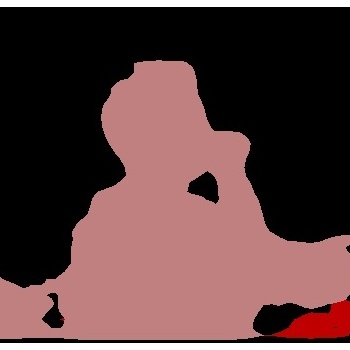}}
\subfloat{
\includegraphics[width=.22\textwidth]{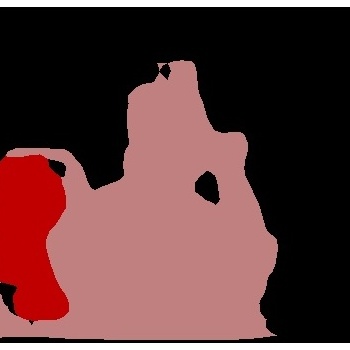}}
\subfloat{
\includegraphics[width=.22\textwidth]{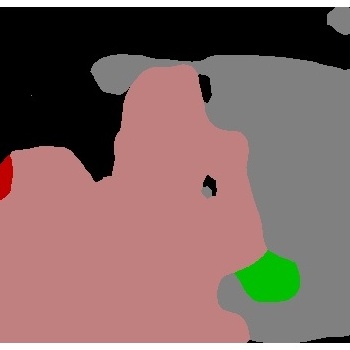}} \\
\subfloat{
\includegraphics[width=.22\textwidth]{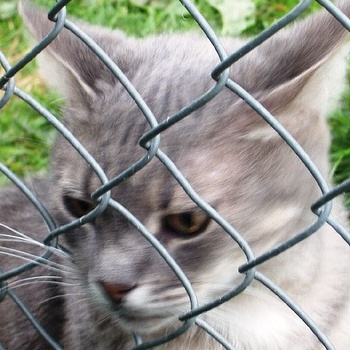}}
\subfloat{
\includegraphics[width=.22\textwidth]{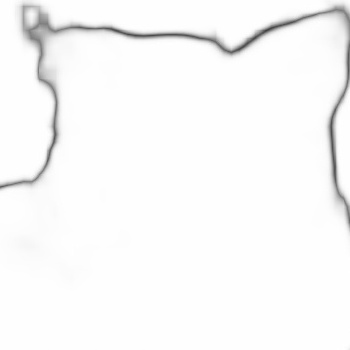}}
\subfloat{
\includegraphics[width=.22\textwidth]{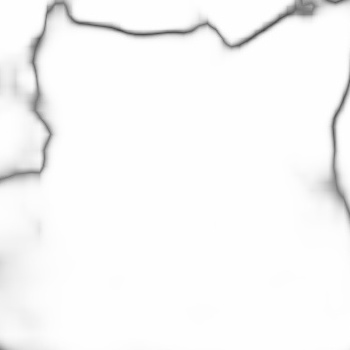}}
\subfloat{
\includegraphics[width=.22\textwidth]{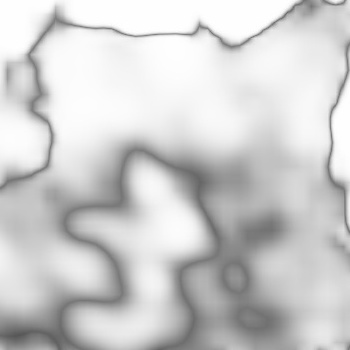}} \\
\subfloat[Image and GT]{
\includegraphics[width=.22\textwidth]{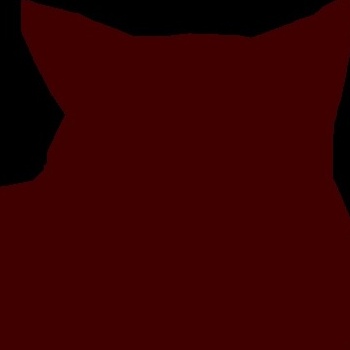}}
\subfloat[Teacher]{
\includegraphics[width=.22\textwidth]{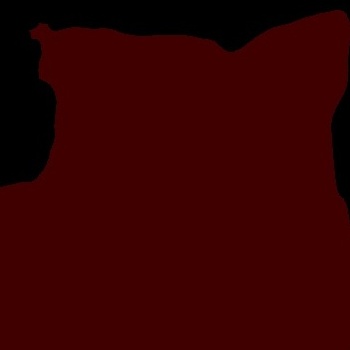}}
\subfloat[Student with KD]{
\includegraphics[width=.22\textwidth]{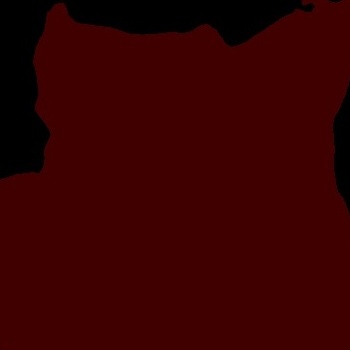}}
\subfloat[Student without KD]{
\includegraphics[width=.22\textwidth]{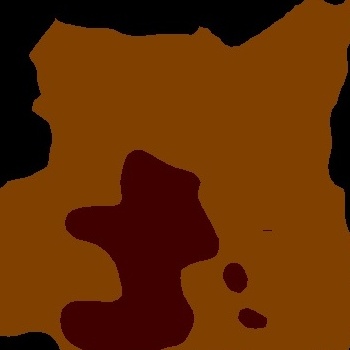}}
\caption{Qualitative results on PASCAL VOC. Confidence maps are in black and white. Predictions are in RGB.}
\label{fig:voc_real}
\end{figure*}

\end{document}